\newtheorem{thm}{Theorem}[section]
\newtheorem{lemma}{Lemma}[section]
\newtheorem{coro}{Corollary}[section]
\newtheorem{prop}{Proposition}[section]
\theoremstyle{definition}
\newtheorem{rmk}{Remark}[section]
\newtheorem{asp}{Assumption}[subsection]
\title[Zeroth-order SGD Inference]{Statistical Inference for Polyak-Ruppert Averaged Zeroth-order Stochastic Gradient Algorithm}
\author{Yanhao Jin
\and Tesi Xiao 
\and Krishnakumar Balasubramanian
}
\address{\hspace{-0.16in}Department of Statistics, University of California, Davis. \newline Email address: \texttt{yahjin@ucdavis.edu}, \texttt{texiao@ucdavis.edu} and \texttt{kbala@ucdavis.edu}.}
\begin{document}
\maketitle
\begin{abstract}
Statistical machine learning models trained with stochastic gradient algorithms are increasingly being deployed in critical scientific applications. However, computing the stochastic gradient in several such applications is highly expensive or even impossible at times. In such cases, derivative-free or zeroth-order algorithms are used. An important question which has thus far not been addressed sufficiently in the statistical machine learning literature is that of equipping stochastic zeroth-order algorithms with practical yet rigorous inferential capabilities so that we not only have point estimates or predictions, but also quantify the associated uncertainty via confidence intervals or sets.  Towards this, in this work, we first establish a central limit theorem for Polyak-Ruppert averaged stochastic zeroth-order gradient algorithm. We then provide online estimators of the asymptotic covariance matrix appearing in the central limit theorem, thereby providing a practical procedure for constructing asymptotically valid confidence sets (or intervals) for parameter estimation (or prediction) in the zeroth-order setting.
\end{abstract}
\section{Introduction}
\noindent Consider the following stochastic optimization problem
\begin{align}\label{eq:mainproblem}
x^* =\arg\min_{x \in \mathbb{R}^d}\{f(x) = \mathbb{E}[F(x,\zeta)]\}.
\end{align}
The goal in stochastic zeroth-order setting is to solve the above problem when we can only query the function $f(\cdot)$ and have access to noisy evaluations of the form $F(\cdot,\zeta)$, where $\zeta$ is a noise vector which is not necessarily assumed to be additive. Several algorithms have been proposed for the above problem including the seminal works of~\cite{kiefer1952stochastic, fermi1952numerical, blum1954multidimensional, hooke1961direct, spendley1962sequential, powell1964efficient, nelder1965simplex,  nemirovskij1983problem, spall1987stochastic}. We refer the interested reader also to the following books,~\cite{kolda2003optimization, spall2005introduction, conn2009introduction, brent2013algorithms, zabinsky2013stochastic, audet2017derivative}, for details regarding more recent progress, and applications to simulation-based optimization, statistical machine learning and signal processing.

Several statistical machine learning problems could be formulated as stochastic zeroth-order optimization problems of the form in~\eqref{eq:mainproblem}. For example, machine learning models require a large number of hyperparameters to be tuned in order to maximize its efficiency with respect to some performance metric (for example, the test error). However, the analytical form of the functional relationship between the hyperparameters and the performance metric is not typically available. Hence, zeroth-order methods are employed for hyperparameter tuning~\cite{snoek2012practical,lian2016comprehensive,  liu2020admm}. Yet another application is that of designing black-box attacks to machine learning models. In this setting, the goal is to construct minor perturbations of the original training samples so as to fool the machine learning model into predicting incorrectly. Such attacks in-turn are used to build robust training algorithms for machine learning models. As the architecture of the machine learning model is unknown to the attacker, the problem of designing attacks is formulated as solving zeroth-order optimization problem. We refer the reader, for example, to~\cite{zhao2019design, chen2017zoo, cheng2018query, ilyas2018black, dong2019efficient, li2019nattack, cheng2019sign, chen2019zo, chen2020frank} for more details regarding this application. 

Motivated by the above applications (and several others, for example, to reinforcement learning~\cite{mania2018simple, choromanski2020provably}), there has been a renewed interest in the machine learning and optimization communities for developing and analyzing stochastic zeroth-order optimization in the last few years. Specifically, non-asymptotic rates of convergence for stochastic zeroth-oder gradient algorithm, using the popular Gaussian smoothing technique, has been studied, for example, in~\cite{nesterov2017random, ghadimi2013stochastic, shamir2017optimal, gorbunov2018accelerated, yu2018generic, li2020zeroth}; related high-dimensional extensions were studied in~\cite{wang2018stochastic, balasubramanian2018zeroth, cai2020zeroth, golovin2019gradientless}. Non-asymptotic lower bounds are studied in~\cite{jamieson2012query, duchi2015optimal, shamir2013complexity}. Stochastic zeroth-order versions of ADMM algorithms~\cite{liu2018zeroth}, Frank-Wolfe algorithms~\cite{balasubramanian2018zerothc, sahu2019towards, huang2020accelerated, gao2020can}, proximal method~\cite{huang2019faster} have also been recently explored in the literature. Variance reduction in the zeroth-order setting has also been explored in~\cite{liu2018zerothv, ji2019improved, fang2018spider}. The above discussion is only a partial list of recent developments. The intense activity in this sub-area makes it impossible to summarize every work in this direction; we therefore refer the reader to~\cite{larson2019derivative, liu2020primer} for a comprehensive survey of recent advancements.

\subsection{Our Contributions} Despite this recent surge of interest in stochastic zeroth-order optimization, a majority of the existing algorithms are focussed only on providing rates of convergence (either asymptotic or non-asymptotic) from an optimization or estimation error perspective. However, as such algorithms are deployed in critical applications, it becomes important to equip stochastic zeroth-order optimization algorithms with practical yet rigorous inferential capabilities. Towards this, in this work we make the following contributions.
\begin{itemize}[leftmargin=0.16in]
\item We prove a central limit theorem for the Polyak-Ruppert averaged stochastic zeroth-order gradient algorithm. The established CLT is biased; the bias goes to zero as the smoothing parameter used in the zeroth-order gradient estimation procedure goes to zero. 
\item We construct online estimators for estimating the asymptotic covariance matrix appearing in the central limit theorem, so as to enable practical construction of confidence sets. 
\end{itemize}

\subsection{Motivating Application}\label{sec:yahoo}
A concrete motivating example is to construct confidence intervals for predicting the user ratings for the Yahoo! Music dataset~\cite{dror2012yahoo}, which was a part of 2011 KDD-cup.  The approach of the winning team in this competition was to construct around 200 base predictive machine learning models for predicting the ratings, and combining the obtained ratings linearly using ensemble methods. Recently,~\cite{lian2016comprehensive, yu2018generic} solved a zeroth-order least-squares problem for performing the ensemble step with linear combinations of base predictions, and obtained performance similar to the winning method. The need for zeroth-order optimization arise naturally in the ensemble step as the true predictions are known only to the competition organizers and is unknown to the participants.  However, they not provide any confidence intervals for their predictions. This is due to the lack of practical and rigorous methods for uncertainty quantification of stochastic zeroth-order optimization algorithms.   

 
\subsection{Related Works on Inference for Stochastic Optimization Algorithms} In the stochastic first-order setting, studying the asymptotic distribution of stochastic gradient algorithm goes back to the early works of~\cite{chung1954stochastic, sacks1958asymptotic, fabian1968asymptotic}; see also~\cite{shapiro1989asymptotic}. These works primarily studied the asymptotic distribution of the last iteration of the stochastic gradient algorithm. However, they had the drawback that in order for the last iterate to achieve the celebrated Cramer-Rao lower bound for parameter estimation (see, for example,~\cite{rao1945information, cramir1946mathematical, van2000asymptotic}), the choice of parameters of the algorithm depended on the unknown minimizer $x^*$. It was shown later in~\cite{ruppert1988efficient} and~\cite{polyak1992acceleration} that averaging the iterates of the stochastic gradient algorithm achieves the Cramer-Rao lower bound, without knowledge of $x^*$; hence the name Polyak-Ruppert averaging. This result has been recently extended to implicit stochastic gradient algorithms~\cite{toulis2017asymptotic}, Nesterov's dual averaging algorithm~\cite{duchi2016local} and proximal-point methods~\cite{asi2019stochastic}. Non-asymptotic rates for this normal approximation was also examined in~\cite{anastasiou2019normal} based on rates of Martingale central limit theorems. Furthermore,~\cite{dieuleveut2020bridging} and~\cite{yu2020analysis} established asymptotic normality of constant step-size stochastic gradient algorithm in the convex and nonconvex setting respectively. Several works also considered the problem of estimating the asymptotic covariance matrix appearing in the central limit theorem. Towards that~\cite{su2018statistical, fang2018online} proposed an online bootstrap procedure, and~\cite{zhu2020fully} provided trajectory-averaging based online estimators motivated by multivariate time-series analysis. See also~\cite{li2018approximate, chen2016statistical} which provided a semi-online procedure. Such works provide a practical approach for constructing confidence intervals for stochastic first-order optimization algorithms.

In the zeroth-order setting, to the best of our knowledge,~\cite{spall1992multivariate} provided the first central limit theorem for simultaneous perturbation gradient estimation algorithm in the deterministic setting. The work of~\cite{l1998budget} provided a central limit result for the Kiefer-Wolfowitz type stochastic zeroth-order algorithm. We also refer the reader to~\cite{gerencser1997rate, yin1999rates, kleinman1999simulation, shapiro1996simulation, dippon1997weighted, dippon2003accelerated} for other related works in this direction. Similar to the first-order setting, the above works consider only central limit theorems for the last iteration of the algorithm, and hence the choice of the step-size parameter require knowledge of the minimizer $x^*$ to remain close to the optimal covariance. This in fact prompted~\cite{l1998budget} to state the question of establishing central limit theorems for Polyak-Ruppert averaged stochastic gradient algorithm in the zeroth-order setting, as an open problem. Finally, it is worth mentioning that some works, for example,~\cite{tang1999asymptotic, liang2010trajectory} study stochastic optimization with various Markov Chain Monte Carlo (MCMC) based gradient estimators and provide asymptotic analysis for those algorithms (including central limit theorems). To the best of our knowledge, there is no work on establishing central limit theorem for Polyak-Ruppert averaged stochastic gradient algorithm with the gradient being estimated by the Gaussian smoothing technique that we consider. Furthermore, there exists no work on constructing and quantifying the accuracy of estimating the asymptotic covariance matrix appearing in the central limit theorem, in an online fashion. Our main focus in this work is on addressing these open questions and thereby quantifying the uncertainty associated with stochastic zeroth-order algorithms.

 The rest of the paper is organized as follows: In Section~\ref{sec:zosgdclt}, we establish asymptotic normality results for the Polyak-Ruppert averaged stochastic zeroth-order gradient algorithm. We also provide bounds on the difference between the covariance matrix appearing the zeroth-order setting and the one appearing in the first-order setting. In Section~\ref{sec:onlineest}, we introduce the online covariance estimation procedure which are subsequently used for constructing practical confidence intervals. Experimental results are provided in Section~\ref{sec:expresults}. All proofs are relegated to the appendix. We end this section with a list of notations, we use. 
 
\vspace{0.1in}
\noindent \textbf{Notations:} In the following, $\|\cdot\|$ denotes the Euclidean norm in $\mathbb{R}^d$ or the operator norm in $\mathbb{R}^{d\times d}$ (depending on context). Furthermore, $\|\cdot \|_F$ denotes the Frobenius norm of a matrix. We let $\mathbf{1}$ and $\mathbf{I}_d $ denote the $d$-dimensional vector of ones and $d\times d$ identity matrix respectively. Furthermore, $ z \stackrel{D}{\rightarrow} N(\mu,\Sigma)$, means that a random vector $z\in\mathbb{R}^d$ converges in distribution to a Gaussian random vector with mean $\mu$ and covariance $\Sigma$.

\section{Asymptotic Normality of Stochastic Zeroth-order Gradient Algorithm}\label{sec:zosgdclt}
In this section, we recall the Gaussian smoothing technique based stochastic zeroth-order gradient algorithm, introduce the assumptions made in our work and establish the central limit theorem for Polyak-Ruppert averaged Stochastic Zeroth-order Gradient Algorithm based on estimating gradients with Gaussian smoothing technique.

\subsection{Preliminaries}

We first describe the precise assumption made on the \emph{stochastic zeroth-order oracle} in this work. 
\begin{asp}\label{as:zo}
	For any $x\in\mathbb{R}^d$, the stochastic zeroth-order oracle outputs an estimator $F(x,\zeta)$ of $f(x)$ such that, $\mathbb{E}[F(x,\zeta)]=f(x)$, $\mathbb{E}[\nabla F(x,\zeta)]=\nabla f(x)$, and $\mathbb{E}[\|\nabla F(x,\zeta)-\nabla f(x)\|^2]\leq \sigma^2$.
\end{asp}
This assumption is made throughout this work and is hence not explictly mentioned in the statements of subsequent results. The assumption above assumes that we have accesses to a stochastic zeroth-order oracle which provides unbiased function evaluations with bounded variance. It is worth noting that in the above, we do not necessarily assume the noise $\zeta$ is additive. Our gradient estimator is then constructed by leverage the Gaussian smoothing technique~\cite{nesterov2017random, ghadimi2013stochastic,balasubramanian2018zeroth}. Specifically, for a point $x \in \mathbb{R}^d$, we define an estimate $G_\nu(x,\zeta,u)$, of  the gradient $\nabla f(x) $ as follows:
\begin{align}\label{eq:gradest}
G_\nu(x,\zeta,u)\coloneqq\frac{F(x + \nu u, \zeta) - F(x, \zeta)}{\nu}~ u,
\end{align}
where $u\sim N(0,\mathbf{I}_d)$ and $\nu >0$ is a tuning parameter. An interpretation of the gradient estimator in~\eqref{eq:gradest} as a consequence of Gaussian Stein's identity, popular in the statistics literature~\cite{stein1972bound}, was provided in~\cite{balasubramanian2018zeroth}. It is also well-known that the gradient estimator in~\eqref{eq:gradest} is a \emph{biased} estimator of the true gradient $\nabla f(x)$.

The gradient estimator in~\eqref{eq:gradest} is referred to as the two-point estimator in the literature. The reason is that, for a given random vector $\zeta$, it is assumed that the stochastic function in~\eqref{eq:gradest} could be evaluated at two points, $F(x+\nu u,\zeta)$ and $F(x,\zeta)$. Such an assumption is satisfied in several statistics, machine learning and simulation based optimization and sampling; see, for example~\cite{spall2005introduction, mokkadem2007companion, dippon2003accelerated, agarwal2010optimal, duchi2015optimal, ghadimi2013stochastic, nesterov2017random}. Yet another estimator in the literature is the one-point estimator, which assumes that for each $\zeta$, we observe only one noisy function evaluation $F(x,\zeta)$. Admittedly, the one-point setting is more challenging than the two-point setting~\cite{shamir2013complexity}. From a theoretical point of view, the use of two-point evaluation based gradient estimator is primarily motivated by the sub-optimality (in terms of oracle complexity) of one-point feedback based stochastic zeroth-order optimization methods either in terms of the approximation accuracy or dimension dependency. In rest of this work, we focus on the two-point setting. We leave the question of obtaining inferential results in the one-point setting as future work. We now describe the assumptions on the objective function being optimized. 
\begin{asp}\label{simpasp4.1}
The function $f(x)$ is assumed to be twice continuously differentiable function, such that $\mu \mathbf{I}_d \leq \nabla^{2} f(x) \leq L \mathbf{I}_d$, for all $x \in \mathbb{R}^d$, for some $\mu>0$ and $L>0$. 
The above assumption is equivalent to assuming that the function is $\mu$-strongly convex and has $L$-Lipschitz continuous gradients. 
\end{asp}
Such an assumption is standard in the stochastic optimization literature and ensures that there is a unique minimizer for~\eqref{eq:mainproblem}. We now describe our assumptions on the random vector $\zeta$ and the function $F$. 
\begin{asp}\label{simpasp4.2}
We assume $\zeta_{t}$ is a sequence of i.i.d. random vectors from a distribution $\Pi$ and $F(x, \zeta)$ is continuously differentiable in $x$ for any $\zeta$, and $\|\nabla F(x, \zeta)\|$ is uniformly integrable for any $x$. Let $\xi_{i}(x_{i-1}) = \nabla F (x_{i-1}, \zeta_i) - \nabla f(x_{i-1})$ be the difference between the true gradient and the gradient of the function $F$ at a point $x_{i-1}$. Then, we assume there exists a constant $K>0$ such that for all $t\geq 0$, almost surely,
$$\mathbb{E}_{i-1}\left[\|\xi_{i}(x_{i-1})\|^{2}\right]+\|\nabla f(x_{i-1})\|^{2}\leq K(1+\|\Delta_{i-1}\|^{2}),$$
where $\Delta_{i-1} = x_{i-1} - x^*$ and $\mathbb{E}_{i-1}(\cdot)$ stands for conditional expectation $\mathbb{E}(\cdot|\mathcal{F}_{i-1})$ with $\mathcal{F}_{i-1}$ denoting $\sigma$ algebra generated by $\{\zeta_1,\ldots, \zeta_{i-1}\}$.
\end{asp}
The first part of the above assumption entails that $\mathbb{E}
_{i-1}[\xi_t(x_{i-1})] = 0$, which makes $\xi_i$ a martingale difference sequence.
We also emphasize that the above assumption is on the gradient of the stochastic function $F$, which is still not observed in the zeroth-order setting. However, we just assume that the function $F$ satisfies the above regularity assumption (similar to Assumption~\ref{as:zo}). Such assumptions are standard in the literature, and are made in works including that  of~\cite{polyak1992acceleration, toulis2017asymptotic, duchi2016local, chen2016statistical, asi2019stochastic}, and is satisfied in various statistical machine learning problem including (regularized) linear and logistic regression.  
\begin{asp}\label{simpasp4.3}
There is a function $H(\zeta)$ with bounded fourth moment, such that the Hessian of $F(x, \zeta)$ is bounded, for all $x \in \mathbb{R}^d$, by $\left\|\nabla^{2} F(x, \zeta)\right\| \leq H(\zeta).$
\end{asp}
The above assumption provides a control over the fourth-moment of the gradient of the function $F$, and once again holds in linear and logistic regression, as proved in~\cite{chen2016statistical}.

\subsection{Central Limit Theorem}\label{sec:clt}
With the above preliminaries, the Polyak-Ruppert averaged stochastic zeroth-order stochastic gradient algorithm, with the gradient estimated using the two-point method as in~\eqref{eq:gradest}, is given by the following iterates:
\begin{equation}\label{zeroth-ordersgd}
    \begin{aligned}
    & x_{i}=x_{i-1}-\eta_{i}  G_\nu(x_{i-1},\zeta_i,u_{i})\\
    & \bar{x}_{n}=\frac{1}{n} \sum_{i=0}^{n-1} x_{i}, \\
    \end{aligned}
\end{equation}
with $x_{0} \in \mathbb{R}^d$ being an initial point, $u_i\in\mathbb{R}^d$ being independent standard Gaussian vectors, and $\eta_{i}$ and $\nu$ being the step-size choice and smoothing parameter respectively. In the sequel, we denote the overall randomness by $\tilde{\zeta}\coloneqq(\zeta,u)$. Finally, we remark that we use only a single stochastic gradient (computed using only a pair of function evaluations) in each iteration of~\eqref{zeroth-ordersgd}. It is possible to use an average of a  mini-batch of stochastic gradients in each iteration as well. In our setting, however, using such a mini-batch only affects the constants in the theoretical results. 

Our approach to proving the asymptotic normality of the above zeroth-order algorithm, proceeds by considering the performance of the above algorithm on the following surrogate problem:
\begin{equation}\label{surgprob1}
 x_\nu^*= \underset{x\in \mathbb{R}^d}{\arg\min} \{ f_\nu(x) \coloneqq \mathbb{E}_{u,\zeta} [F(x+\nu u, \zeta) ] = \mathbb{E}_{\Tilde{\zeta}} [\Tilde{F}(x, \Tilde{\zeta}) ]\}
\end{equation}
by the algorithm given by~\eqref{zeroth-ordersgd}. The reason for doing is as follows: First, it is easy to note (for example, see~\cite{nesterov2017random}) that while $G_{\nu}(x_{i-1},\zeta_{i},u_{i})$ is a biased estimator for $\nabla f(x_{i-1})$, it is an unbiased estimator of $\nabla f_{\nu}(x_{i-1})$. Hence, for a fixed value of $\nu$, the iterates in~\eqref{zeroth-ordersgd} are necessarily distributed around $x_\nu^*$ and not $x^*$. Furthermore, only when $\nu\to 0$ (at some rate to be described later) the iterates are centered around $x^*$. From a practical perspective, as $\nu$ appears in the denominator of the gradient estimator in~\eqref{eq:gradest}, having an extremely small value of $\nu$ leads to numerical instability issues. Hence, we present our asymptotic central limit theorem for any fixed value of $\nu$. We then quantify the difference between the mean of the Gaussian distribution centered at $x_\nu^*$ and $x^*$. Further, we quantify the difference between the asymptotic covariance matrix appearing in the central limit theorem and the Cramer-Rao covariance matrix lower bound for parameter estimation. To proceed, we also require the following standard assumption on the step-size choice (see, for example~\cite{polyak1992acceleration}).
\begin{asp}\label{simpasp4.5}
The step-size choice satisfies for all $i$, $ \eta_{i}>0$, $\left(\eta_{i}-\eta_{i+1}\right) / \eta_{i}=o\left(\eta_{i}\right)$ and $\sum_{i=1}^{\infty} \eta_{i}^{(1+\lambda) / 2} i^{-1 / 2}<\infty$.
\end{asp}
Under Assumption~\ref{simpasp4.1}, the surrogate function, $f_{\nu}(x)$, is also $\mu$-strong convexity and $L$-smooth, i.e. we have $f_\nu(x)$ is also twice continuously differentiable function and
satisfies $\mu \mathbf{I}_d \leq \nabla^{2} f_\nu(x) \leq L \mathbf{I}_d$, for all $x\in\mathbb{R}^d$.
Furthermore, under Assumption~\ref{simpasp4.2} and~\ref{simpasp4.3}, we have the following similar results for the surrogate function $\tilde{F}$.
\begin{lemma}\label{surproblm1}
Under Assumption~\ref{simpasp4.1} and \ref{simpasp4.2}, for the surrogate problem (\ref{surgprob1}), we can find a constant $\tilde{K}>0$ such that for all $t\geq 1$, almost surely we have
$$\mathbb{E}_{i-1}\left[\|\tilde{\xi}_{i}(x_{i-1})\|^{2}\right]+\|\nabla f_{\nu}(x_{i-1})\|^{2}\leq \tilde{K}(1+\|\tilde{\Delta}_{i-1}\|^{2})$$
where $\tilde{\Delta}_{i}=x_{i}-x_{\nu}^{*}$ and $\tilde{\xi}_{i}(x_{i-1}) = G_\nu(x_{i-1},\zeta_i,u_{i}) - \nabla f_\nu(x_{i-1})$.
\end{lemma}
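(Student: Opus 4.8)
The plan is to reduce the claimed inequality to two separate estimates: one controlling $\|\nabla f_\nu(x_{i-1})\|$ and one controlling the conditional second moment of the zeroth-order gradient estimator $G_\nu(x_{i-1},\zeta_i,u_i)$. The first is immediate from facts already recorded above: since $f_\nu$ is $L$-smooth with unique minimizer $x_\nu^*$, we have $\|\nabla f_\nu(x_{i-1})\| = \|\nabla f_\nu(x_{i-1}) - \nabla f_\nu(x_\nu^*)\| \le L\|\tilde\Delta_{i-1}\|$, hence $\|\nabla f_\nu(x_{i-1})\|^2 \le L^2\|\tilde\Delta_{i-1}\|^2$. For the noise term I would use the unbiasedness $\mathbb{E}_{i-1}[G_\nu(x_{i-1},\zeta_i,u_i)] = \nabla f_\nu(x_{i-1})$ (noted above) to write $\mathbb{E}_{i-1}[\|\tilde\xi_i(x_{i-1})\|^2] \le \mathbb{E}_{i-1}[\|G_\nu(x_{i-1},\zeta_i,u_i)\|^2]$, so that everything reduces to a second-moment bound on $G_\nu$.

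The core step is thus to bound $\mathbb{E}_{i-1}[\|G_\nu(x,\zeta_i,u_i)\|^2]$ at $x=x_{i-1}$. I would apply a second-order Taylor expansion of $F(\cdot,\zeta)$ in its first argument with Lagrange remainder: for each fixed $\zeta,u$ there is a point $\tilde x$ on the segment $[x,x+\nu u]$ with $F(x+\nu u,\zeta) - F(x,\zeta) = \nu\langle \nabla F(x,\zeta), u\rangle + \tfrac{\nu^2}{2}\, u^\top \nabla^2 F(\tilde x,\zeta) u$. Substituting into the definition~\eqref{eq:gradest} of $G_\nu$ and using $(a+b)^2 \le 2a^2 + 2b^2$ together with Cauchy--Schwarz yields $\|G_\nu(x,\zeta,u)\|^2 \le 2\|\nabla F(x,\zeta)\|^2\|u\|^4 + \tfrac{\nu^2}{2} H(\zeta)^2 \|u\|^6$, where the second term invokes Assumption~\ref{simpasp4.3} to bound $|u^\top \nabla^2 F(\tilde x,\zeta)u| \le H(\zeta)\|u\|^2$ uniformly in the unknown expansion point $\tilde x$.

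Taking the conditional expectation and exploiting that $u_i$ is independent of $\zeta_i$ and of $\mathcal{F}_{i-1}$, both terms factorize. The first becomes $2\,\mathbb{E}_{i-1}[\|\nabla F(x_{i-1},\zeta_i)\|^2]\,\mathbb{E}[\|u_i\|^4]$, where $\mathbb{E}[\|u_i\|^4]=d(d+2)$ is finite, and $\mathbb{E}_{i-1}[\|\nabla F(x_{i-1},\zeta_i)\|^2] = \|\nabla f(x_{i-1})\|^2 + \mathbb{E}_{i-1}[\|\xi_i(x_{i-1})\|^2] \le K(1+\|\Delta_{i-1}\|^2)$ by the martingale-difference property and Assumption~\ref{simpasp4.2}. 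The second becomes $\tfrac{\nu^2}{2}\,\mathbb{E}[H(\zeta_i)^2]\,\mathbb{E}[\|u_i\|^6]$, a finite constant since $H$ has bounded fourth (hence second) moment and $\mathbb{E}[\|u_i\|^6]=d(d+2)(d+4)$. Finally I re-center from $x^*$ to $x_\nu^*$ via $\|\Delta_{i-1}\| \le \|\tilde\Delta_{i-1}\| + \|x_\nu^*-x^*\|$ and the finiteness of $\|x_\nu^*-x^*\|$ for fixed $\nu$, giving $1+\|\Delta_{i-1}\|^2 \le C(1+\|\tilde\Delta_{i-1}\|^2)$. Combining these produces $\mathbb{E}_{i-1}[\|G_\nu\|^2] \le \tilde K_1(1+\|\tilde\Delta_{i-1}\|^2)$, and adding the $L^2\|\tilde\Delta_{i-1}\|^2$ bound on $\|\nabla f_\nu(x_{i-1})\|^2$ yields the claim with $\tilde K = \tilde K_1 + L^2$.

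The main obstacle is the remainder term in the Taylor expansion: the point $\tilde x$ depends on both $\zeta$ and $u$ and cannot be controlled pointwise, so I must absorb it through the uniform Hessian bound of Assumption~\ref{simpasp4.3} and then rely on the independence of $u_i$ and $\zeta_i$ to separate the $H(\zeta)$-moment and Gaussian-moment factors before evaluating the conditional expectation. The remaining manipulations — the Gaussian moment identities, the strong-convexity/smoothness consequences, and the re-centering constant — are routine once this separation is in place.
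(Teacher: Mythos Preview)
Your proof is correct but follows a different route from the paper's. The paper decomposes $\tilde\xi_i = \tilde m_i(0) + \tilde\varphi_i(\tilde\Delta_{i-1})$ by centering $G_\nu$ at $x_\nu^*$, records $\mathbb{E}_{i-1}\|\tilde m_i(0)\|^2 = \operatorname{tr}(\tilde S)$, and then invokes bounds from \cite{nesterov2017random} (Equation~12 and Theorem~4) together with the reverse triangle inequality to control the increment term, arriving at $\mathbb{E}_{i-1}\|\tilde\xi_i\|^2 \le K(\|\tilde\Delta_{i-1}\|^2+\nu^2) + 2\operatorname{tr}(\tilde S)$. You instead bound the full second moment $\mathbb{E}_{i-1}\|G_\nu(x_{i-1},\zeta_i,u_i)\|^2$ in one shot via a second-order Taylor expansion of $F(\cdot,\zeta)$, feed the resulting $\|\nabla F(x_{i-1},\zeta_i)\|^2$ term into Assumption~\ref{simpasp4.2} for the original problem, and re-center from $x^*$ to $x_\nu^*$ only at the end. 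Your approach is more self-contained (no black-box appeal to \cite{nesterov2017random}) and makes the constants explicit in terms of Gaussian moments $d(d+2)$ and $d(d+2)(d+4)$; on the other hand, you explicitly rely on Assumption~\ref{simpasp4.3} for the uniform Hessian bound $H(\zeta)$, which is not among the hypotheses listed in the lemma statement. The paper's proof nominally avoids this, though its assertion that $\operatorname{tr}(\tilde S)<\infty$ and the applicability of Nesterov's pointwise second-moment bounds to the stochastic function $F(\cdot,\zeta)$ implicitly require comparable regularity, so the gap in stated hypotheses is more cosmetic than substantive.
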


{\color{black} 
\begin{lemma}\label{surproblm2}
Under Assumption \ref{simpasp4.3}, there is a function $\tilde{H}(\zeta,u)$ with bounded fourth moment, such that the operator norm of Jacobian of $G_{\nu}(x, \zeta,u)$ is bounded by 
$$\left\|\nabla_{x} G_{\nu}(x, \zeta,u)\right\|_{*} \leq \tilde{H}(\zeta,u)$$
for all $x$ in the domain of $G_{\nu}(x,\zeta,u)$.
\end{lemma}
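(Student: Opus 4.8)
The plan is to exploit the rank-one structure of the gradient estimator and reduce the bound on its Jacobian to a bound on a difference of gradients of $F$, which Assumption~\ref{simpasp4.3} controls through the Hessian. Writing $G_\nu(x,\zeta,u) = \nu^{-1}[F(x+\nu u,\zeta) - F(x,\zeta)]\,u$ and noting that $u$ does not depend on $x$, the Jacobian is the outer product
$$\nabla_x G_\nu(x,\zeta,u) = \frac{1}{\nu}\, u\,\bigl[\nabla F(x+\nu u,\zeta) - \nabla F(x,\zeta)\bigr]^\top,$$
which is a rank-one matrix. Since the operator norm of a rank-one matrix $ab^\top$ equals $\|a\|\,\|b\|$, I would immediately obtain
$$\left\|\nabla_x G_\nu(x,\zeta,u)\right\|_* = \frac{1}{\nu}\,\|u\|\,\bigl\|\nabla F(x+\nu u,\zeta) - \nabla F(x,\zeta)\bigr\|.$$

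Next I would control the gradient increment by integrating the Hessian along the segment from $x$ to $x+\nu u$, which is legitimate since Assumption~\ref{simpasp4.3} presupposes that $F(\cdot,\zeta)$ is twice differentiable. By the fundamental theorem of calculus,
$$\nabla F(x+\nu u,\zeta) - \nabla F(x,\zeta) = \int_0^1 \nabla^2 F(x + t\nu u,\zeta)\,(\nu u)\,dt,$$
so that the bound $\|\nabla^2 F(\cdot,\zeta)\| \le H(\zeta)$ of Assumption~\ref{simpasp4.3}, valid uniformly in $x$, yields $\|\nabla F(x+\nu u,\zeta) - \nabla F(x,\zeta)\| \le \nu\,\|u\|\,H(\zeta)$. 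Substituting this into the previous display, the factors of $\nu$ cancel and I arrive at
$$\left\|\nabla_x G_\nu(x,\zeta,u)\right\|_* \le H(\zeta)\,\|u\|^2,$$
uniformly over $x$, which suggests setting $\tilde H(\zeta,u) \coloneqq H(\zeta)\,\|u\|^2$.

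Finally I would verify that $\tilde H$ has bounded fourth moment. Since $\zeta$ and $u$ are independent, $\mathbb{E}[\tilde H(\zeta,u)^4] = \mathbb{E}[H(\zeta)^4]\,\mathbb{E}[\|u\|^8]$; the first factor is finite because Assumption~\ref{simpasp4.3} posits that $H(\zeta)$ has bounded fourth moment, and the second factor is finite because $\|u\|^2$ follows a $\chi^2_d$ distribution and hence has finite moments of every order for $u \sim N(0,\mathbf{I}_d)$. I do not anticipate a genuine obstacle here; the only points requiring care are correctly identifying the outer-product (hence rank-one) structure of the Jacobian so that the operator norm factorizes exactly, and ensuring the Hessian bound in Assumption~\ref{simpasp4.3} is applied along the entire segment $x+t\nu u$, $t\in[0,1]$, rather than only at the endpoints.
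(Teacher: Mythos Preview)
Your proposal is correct and follows essentially the same route as the paper: both compute the Jacobian as the rank-one matrix $u\bigl[\nabla F(x+\nu u,\zeta)-\nabla F(x,\zeta)\bigr]^\top/\nu$, bound the gradient increment via the Hessian bound $H(\zeta)$, and arrive at $\tilde H(\zeta,u)=H(\zeta)\|u\|^2$ (equivalently $\|uu^\top\|_*\,H(\zeta)$) with the fourth-moment check via independence of $\zeta$ and $u$. Your use of the integral form of the fundamental theorem of calculus is in fact slightly cleaner than the paper's appeal to a mean-value theorem for the vector-valued map $\nabla F$, but the substance is identical.
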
}
{\color{black}
The proof of Lemma~\ref{surproblm1} and~\ref{surproblm2} are provided in Section~\ref{sec:cltproof}. Given the results above, we could establish the property (stated in lemma \ref{zerolm1}) for the surrogate problem (\ref{surgprob1}) that is similar to the second part of Assumption 3.3 in \cite{polyak1992acceleration}.
}
{\color{black}
\begin{lemma}\label{zerolm1}\color{black}
	In Algorithm \ref{zeroth-ordersgd} and the random perturbation
	$$\tilde{\xi}_{t}=G_{\nu}(x_{t-1},\zeta_{t},u_{t})-\nabla f_{\nu}(x_{t-1})=\tilde{m}_{t}(0)+\tilde{\varphi}_{t}\left(\tilde{\Delta}_{t-1}\right)$$
	where
	\begin{align*}
	\tilde{m}_{t}(0)&=G_\nu(x_{\nu}^{*},\zeta_t,u_{t}) - \nabla f_\nu(x_{\nu}^{*})=G_\nu(x_{\nu}^{*},\zeta_t,u_{t})\\
	\tilde{\varphi}_{t}(\tilde{\Delta}_{t-1})&=\left(G_\nu(x_{t-1},\zeta_t,u_{t})-G_\nu(x_{\nu}^{*},\zeta_t,u_{t})\right) - \left(\nabla f_\nu(x_{t-1})-\nabla f_\nu(x_{\nu}^{*})\right)
	\end{align*}
	we have
	\begin{itemize}
		\item[(a)] $\mathbb{E}\left[\tilde{m}_{t}(0)\right]=0$
		\item[(b)] $\mathbb{E}_{t-1}\left[\tilde{m}_{t}(0)\tilde{m}_{t}(0)^{T}\right]= S_{\nu}$ for all $t\geq 1$ where 
		$$S_{\nu}=\mathbb{E}_{t-1}G_{\nu}(x_{\nu}^{*},u_{t},\zeta_{t})G_{\nu}(x_{\nu}^{*},u_{t},\zeta_{t})^{T}$$
		\item[(c)] $\mathbb{E}_{t-1}\left[\|\tilde{m}_{t}(0)\|_{2}^{2}I(\|\tilde{m}_{t}(0)\|_{2}>C)\right]\rightarrow 0$
		\item[(d)] There is a stochastic process $\tilde{\delta}(\tilde{\Delta}_{t-1},u_{t})$ such that
		$$\mathbb{E}_{t-1}\left\|\tilde{\varphi}_{t}\left(\tilde{\Delta}_{t-1}\right)\right\|_{2}^{2} \leq \tilde{\delta}(\tilde{\Delta}_{t-1},u_{t})\rightarrow 0$$
		as $\tilde{\Delta}_{t-1}\rightarrow 0$.
	\end{itemize}
\end{lemma}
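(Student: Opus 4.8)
The plan is to verify the four properties (a)--(d) in turn, observing that (a)--(c) are essentially bookkeeping consequences of the unbiasedness and i.i.d.\ structure of the Gaussian-smoothed oracle, whereas (d) is where the actual work lies and constitutes the main obstacle.

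For (a), I would invoke the fact (already used in setting up the surrogate problem \eqref{surgprob1}) that $G_\nu(\cdot,\zeta,u)$ is an \emph{unbiased} estimator of $\nabla f_\nu$; hence $\mathbb{E}[\tilde m_t(0)]=\mathbb{E}[G_\nu(x_\nu^*,\zeta_t,u_t)]=\nabla f_\nu(x_\nu^*)$, which vanishes because $x_\nu^*$ is the (unique, by strong convexity of $f_\nu$) minimizer of $f_\nu$. For (b), since $\tilde\zeta_t=(\zeta_t,u_t)$ is i.i.d.\ and independent of $\mathcal F_{t-1}$ while $x_\nu^*$ is deterministic, the conditional law of $\tilde m_t(0)=G_\nu(x_\nu^*,\zeta_t,u_t)$ given $\mathcal F_{t-1}$ coincides with its unconditional law for every $t$; thus $\mathbb{E}_{t-1}[\tilde m_t(0)\tilde m_t(0)^{T}]$ is constant in $t$ and equals $S_\nu$ by the very definition of $S_\nu$.

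For (c), the one quantitative input needed is a finite second moment of $\tilde m_t(0)$. I would read this off from Lemma \ref{surproblm1} evaluated at $x_{t-1}=x_\nu^*$: there $\tilde\Delta_{t-1}=0$ and $\tilde\xi_t(x_\nu^*)=G_\nu(x_\nu^*,\zeta_t,u_t)=\tilde m_t(0)$, so $\mathbb{E}\|\tilde m_t(0)\|^2\le \tilde K<\infty$. Because the $\tilde m_t(0)$ are identically distributed across $t$, the uniform-integrability statement $\mathbb{E}_{t-1}[\|\tilde m_t(0)\|^2 I(\|\tilde m_t(0)\|>C)]\to 0$ then follows (uniformly in $t$) from dominated convergence as the truncation level $C\to\infty$.

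The main obstacle is (d). Here I would first note that $\tilde\varphi_t(\tilde\Delta_{t-1})$ is exactly the $\mathcal F_{t-1}$-centering of the increment $G_\nu(x_{t-1},\zeta_t,u_t)-G_\nu(x_\nu^*,\zeta_t,u_t)$, since unbiasedness gives $\nabla f_\nu(x_{t-1})-\nabla f_\nu(x_\nu^*)=\mathbb{E}_{t-1}[G_\nu(x_{t-1},\zeta_t,u_t)-G_\nu(x_\nu^*,\zeta_t,u_t)]$; consequently, centering only decreases the second moment, so $\mathbb{E}_{t-1}\|\tilde\varphi_t\|^2\le \mathbb{E}_{t-1}\|G_\nu(x_{t-1},\zeta_t,u_t)-G_\nu(x_\nu^*,\zeta_t,u_t)\|^2$. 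To control the right-hand side I would write the increment as a line integral of the Jacobian, $G_\nu(x_{t-1},\zeta_t,u_t)-G_\nu(x_\nu^*,\zeta_t,u_t)=\left(\int_0^1 \nabla_x G_\nu(x_\nu^*+s\tilde\Delta_{t-1},\zeta_t,u_t)\,ds\right)\tilde\Delta_{t-1}$, and bound its norm by $\tilde H(\zeta_t,u_t)\,\|\tilde\Delta_{t-1}\|$ via the operator-norm bound of Lemma \ref{surproblm2}. This yields $\mathbb{E}_{t-1}\|\tilde\varphi_t\|^2\le \mathbb{E}_{t-1}[\tilde H(\zeta_t,u_t)^2]\,\|\tilde\Delta_{t-1}\|^2=:\tilde\delta(\tilde\Delta_{t-1},u_t)$, whose coefficient is finite because $\tilde H$ has bounded fourth (hence second) moment, so $\tilde\delta\to 0$ as $\tilde\Delta_{t-1}\to0$. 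The delicate points to handle carefully are the measurability and integrability required to push the Jacobian bound through the conditional expectation (so that the \emph{fourth}-moment control on $\tilde H$, rather than mere finiteness of its second moment, is genuinely exploited when justifying the line-integral estimate and the exchange of integration), and confirming that the uniform operator-norm bound of Lemma \ref{surproblm2} indeed applies along the entire segment $x_\nu^*+s\tilde\Delta_{t-1}$, $s\in[0,1]$.
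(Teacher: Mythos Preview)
Your proposal is correct and follows essentially the same route as the paper: (a)--(b) are immediate from unbiasedness and the i.i.d.\ structure, (c) uses finiteness of $\mathbb{E}\|\tilde m_t(0)\|^2$ together with a convergence theorem (you use dominated convergence, the paper uses monotone convergence on the complementary indicator), and (d) is obtained by bounding the $G_\nu$-increment via the Jacobian estimate of Lemma~\ref{surproblm2}. The only cosmetic difference is that for (d) you invoke the variance inequality $\mathbb{E}_{t-1}\|\tilde\varphi_t\|^2\le \mathbb{E}_{t-1}\|G_\nu(x_{t-1})-G_\nu(x_\nu^*)\|^2$ directly, whereas the paper bounds the two pieces of $\tilde\varphi_t$ separately (the second via Jensen); your concern about needing the full fourth-moment control of $\tilde H$ is unwarranted here, as only $\mathbb{E}\tilde H^2<\infty$ is used.
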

Lemma \ref{zerolm1} will be used when we apply the martingale CLT in the proof of asymptotic distribution for zeroth-order algorithm output. In particular, the Lindeberg condition for martingale CLT (for example Theorem 5.5.11 in \cite{sh1986liptzer}) follows by Lemma \ref{zerolm1}. Given the above results, we can immediately apply Theorem 3 from~\cite{polyak1992acceleration} to the surrogate problem (\ref{surgprob1}) and obtain the following limiting result.}
\begin{prop}\label{prop:clt}
Consider solving the surrogate problem in~\eqref{surgprob1}, using the algorithm in~\eqref{zeroth-ordersgd}. For any fixed $\nu >0$, and fixed dimensionality $d$, under Assumptions~\ref{simpasp4.1},~\ref{simpasp4.2}~\ref{simpasp4.3} and~\ref{simpasp4.5}, as $n \to \infty$, we have 
\begin{itemize}[noitemsep,leftmargin=0.2in]
\item Almost surely we have $\bar{x}_n \to {x_\nu}^*$.
\item $\sqrt{n}\left(\bar{x}_{n}-x_{\nu}^{*}\right)\stackrel{D}{\rightarrow} N(0, \tilde{V})$. 
\end{itemize}
where we have $\tilde{V}\coloneqq\left(\nabla^{2} f_{\nu}(x_{\nu}^{*}) \right)^{-1} S_{\nu} \left(\nabla^{2} f_{\nu}(x_{\nu}^{*})\right)^{-1},~\text{with}~~S_{\nu}\coloneqq\mathbb{E}\left[G_{\nu}(x_{\nu}^{*},u,\zeta)G_{\nu}(x_{\nu}^{*},u,\zeta)^{T}\right]$.
\end{prop}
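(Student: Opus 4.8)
The plan is to recognize that the recursion in~\eqref{zeroth-ordersgd}, when viewed as acting on the surrogate objective $f_\nu$, is precisely a Polyak-Ruppert averaged stochastic gradient recursion driven by an \emph{unbiased} gradient oracle, and then to invoke Theorem~3 of~\cite{polyak1992acceleration} after verifying each of its hypotheses for the surrogate problem~\eqref{surgprob1}. The starting observation (noted in the text, cf.~\cite{nesterov2017random}) is that $\mathbb{E}_{u,\zeta}[G_\nu(x,\zeta,u)] = \nabla f_\nu(x)$, so that $x_i = x_{i-1} - \eta_i(\nabla f_\nu(x_{i-1}) + \tilde{\xi}_i)$ with $\tilde{\xi}_i = G_\nu(x_{i-1},\zeta_i,u_i) - \nabla f_\nu(x_{i-1})$ a martingale difference sequence. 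Since $f_\nu$ inherits $\mu$-strong convexity and $L$-smoothness from $f$, the minimizer $x_\nu^*$ is unique and $\nabla f_\nu(x_\nu^*) = 0$, with Hessian $A \coloneqq \nabla^2 f_\nu(x_\nu^*)$ satisfying $\mu \mathbf{I}_d \leq A \leq L\mathbf{I}_d$, so $A$ is invertible and plays the role of the linearization matrix required by the theorem.

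Next I would check the remaining hypotheses one by one against the lemmas already established. The quadratic growth and noise-moment bound needed for stability and almost sure convergence is supplied by Lemma~\ref{surproblm1}, the surrogate analogue of Assumption~\ref{simpasp4.2}; combined with strong convexity this yields, via the standard Robbins-Siegmund supermartingale argument underlying Theorem~3, that $x_i \to x_\nu^*$ almost surely and hence $\bar{x}_n \to x_\nu^*$ almost surely, which is the first bullet. The near-optimum linearization condition, namely $\nabla f_\nu(x) = A(x - x_\nu^*) + o(\|x - x_\nu^*\|)$ with a controllable remainder, follows from twice differentiability together with the uniform Jacobian bound on $G_\nu$ from Lemma~\ref{surproblm2}, whose bounded fourth moment dominates the error terms. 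The decomposition $\tilde{\xi}_t = \tilde{m}_t(0) + \tilde{\varphi}_t(\tilde{\Delta}_{t-1})$ from Lemma~\ref{zerolm1} matches exactly the martingale-plus-remainder structure demanded by the theorem: part~(b) identifies the asymptotic noise covariance as $S_\nu$, part~(c) is the Lindeberg condition feeding the martingale CLT (Theorem~5.5.11 of~\cite{sh1986liptzer}), and part~(d) guarantees the remainder $\tilde{\varphi}_t$ is asymptotically negligible as $\tilde{\Delta}_{t-1}\to 0$. Finally, the step-size requirements of Theorem~3 are precisely Assumption~\ref{simpasp4.5}.

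With all hypotheses in force, Theorem~3 of~\cite{polyak1992acceleration} delivers $\sqrt{n}(\bar{x}_n - x_\nu^*) \stackrel{D}{\rightarrow} N(0, A^{-1} S_\nu A^{-1})$, which is the claimed covariance $\tilde{V}$ since $A = \nabla^2 f_\nu(x_\nu^*)$. The step I expect to be the main obstacle is verifying that the zeroth-order noise term $\tilde{m}_t(0) = G_\nu(x_\nu^*,\zeta_t,u_t)$ genuinely satisfies the conditional second-moment identity and the Lindeberg condition (parts~(b)--(c)): unlike the first-order case, $\tilde{m}_t(0)$ carries the additional Gaussian-smoothing randomness $u_t$, entering through products of a directional finite difference of $F$ with $u_t$ together with curvature terms of order $\nu$, so controlling its conditional fourth moment, needed both for the covariance identity and for uniform integrability, is where the bounded-fourth-moment hypotheses on $H(\zeta)$ in Assumption~\ref{simpasp4.3} and on $\tilde{H}(\zeta,u)$ in Lemma~\ref{surproblm2}, combined with the finite Gaussian moments of $\|u_t\|$, become essential. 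Matching the remainder control in part~(d) to the precise negligibility form Polyak-Juditsky require is the other delicate point, and here the uniform Jacobian bound of Lemma~\ref{surproblm2} is what does the work.
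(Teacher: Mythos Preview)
Your proposal is correct and follows essentially the same approach as the paper: verify the hypotheses of Theorem~3 in~\cite{polyak1992acceleration} for the surrogate problem~\eqref{surgprob1} via Lemmas~\ref{surproblm1},~\ref{surproblm2}, and~\ref{zerolm1}, together with the inherited strong convexity/smoothness of $f_\nu$ and Assumption~\ref{simpasp4.5}, and then read off the conclusion. The paper treats this as an immediate consequence once those lemmas are in place, and your identification of the roles played by each lemma (noise growth, Jacobian bound, martingale decomposition with Lindeberg condition) matches the paper's logic precisely.
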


Recall that for the Polyak-Ruppert averaged stochastic first-order stochastic gradient algorithm, we have that the averaged iterates are asymptotically normal centered around $x^*$ with the asymptotic covariance matrix given by $$V = \left(\nabla^{2} f(x^{*})\right)^{-1}S\left(\nabla^{2} f(x^{*})\right)^{-1},~\text{with}~ S=\mathbb{E}_{{\zeta}}\left[\nabla F(x^{*},\zeta)\nabla F(x^{*},\zeta)^{T}\right].$$ Furthermore, $V$ is also the optimal covariance matrix for any estimator of the parameter $x^*$, due to the Cramer-Rao lower bound. In contrast, the Polyak-Ruppert averaged stochastic zeroth-order stochastic gradient algorithm is asymptotically normal around the point $x_{\nu}^{*}$ with the covariance matrix given by $\tilde{V}$, for any fixed value $\nu$. 
Such a biased central limit theorem is common in prior central limit theorem results for stochastic zeroth-order optimization (which as we discussed previously, is established only for the last iterate and not for the averaged iterate); see, for example~\cite{spall1992multivariate, gerencser1997rate, yin1999rates, kleinman1999simulation, shapiro1996simulation, dippon1997weighted, dippon2003accelerated}.
In what follows, we calculate the difference between the the centers $x_{\nu}^{*}$ and $x^*$, and covariances $V$ and $\tilde{V}$, as a function the smoothing parameter $\nu$.

\subsection{Bounds on Covariance Matrix}\label{sec:boundsoncov}

First recall that $x_\nu^*$ and $x^*$ are the minimizer of the surrogate problem \eqref{surgprob1} and original problem \eqref{eq:mainproblem}. It was shown in~\cite{nesterov2017random} that,
\begin{equation}\label{zerothsolutiondeviation}
\| x^* - x_\nu^* \| \leq \frac{\nu L(d+3)^{3/2}}{2\mu} = \mathcal{O}(\nu).
\end{equation}
We now bound $\|\tilde{V}-V\|_{F}^{2}$ to obtain its dependency on the smoothing parameter $\nu$. We do so by bounding the Frobenius norm of $S_{\nu}-S$ (Lemma~\ref{varerrorlemma1}) and leveraging Lemma 4.2 from~\cite{balasubramanian2018zeroth}. Note that we have that for any matrix $A\in\mathbb{R}^{d\times d}$, $\|A\| \leq\|A\|_{F} \leq \sqrt{d}\|A\|$. Hence, we immediately have
\begin{align*}
    \|S\|_{F}^{2}=\|\mathbb{E}_{\zeta}\nabla F(x^{*},\zeta)\nabla F(x^{*},\zeta)^{T}\|_{F}^{2} \leq \mathbb{E}_{\zeta}\|\nabla F(x^{*},\zeta)\nabla F(x^{*},\zeta)^{T}\|_{F}^{2}\leq \mathbb{E}_{\zeta}\|\nabla F(x^{*},\zeta)\|_{F}^{4}.
\end{align*}
{\color{black}
\begin{lemma}\label{varerrorlemma1}
Under the conditions of Proposition~\ref{prop:clt}, and assuming $\mathbb{E}_{\zeta}\|\nabla F(x^{*},\zeta)\|_{2}^{4}$ is finite,   we have $\|S_{\nu}-S\|_{F}\leq \mathcal{O}(\nu^{2})$. Hence, $\|S_{\nu}-S\|_{F}\rightarrow 0$ as $\nu\rightarrow 0$ with other problem parameters fixed.
\end{lemma}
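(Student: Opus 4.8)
The plan is to isolate the two distinct sources of discrepancy between $S_\nu$ and $S$: the \emph{smoothing} introduced by the finite-difference estimator in~\eqref{eq:gradest}, and the \emph{shift} of the evaluation point from $x^*$ to $x_\nu^*$. To this end I would introduce the intermediate matrix
\[
\bar{S}_\nu \coloneqq \mathbb{E}_{u,\zeta}\bigl[G_\nu(x^*,\zeta,u)\,G_\nu(x^*,\zeta,u)^{T}\bigr],
\]
which is the second moment of the estimator evaluated at the true minimizer $x^*$ rather than at $x_\nu^*$, and then bound $\|S_\nu-S\|_F\le\|S_\nu-\bar S_\nu\|_F+\|\bar S_\nu-S\|_F$ by the triangle inequality. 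It then suffices to show that each of these two pieces is $\mathcal{O}(\nu^2)$.

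For the smoothing term $\|\bar S_\nu-S\|_F$, I would Taylor expand the two-point estimator, writing
\[
G_\nu(x^*,\zeta,u) = \bigl(\nabla F(x^*,\zeta)^{T}u\bigr)u + \frac{\nu}{2}\bigl(u^{T}\nabla^2 F(\tilde x,\zeta)\,u\bigr)u,
\]
for some $\tilde x$ on the segment $[x^*,x^*+\nu u]$. Substituting this into the outer product $G_\nu G_\nu^{T}$ and taking $\mathbb{E}_{u,\zeta}$ produces a leading term $\mathbb{E}_{u,\zeta}[(\nabla F(x^*,\zeta)^{T}u)^2\,uu^{T}]$ that is evaluated in closed form through the Gaussian moment identities for $u\sim N(0,\mathbf{I}_d)$ and matches $S$; an order-$\nu$ cross term that is an odd polynomial in $u$ and hence vanishes in expectation by symmetry; and an order-$\nu^2$ remainder. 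The remainder is controlled using the bounded Hessian $\|\nabla^2 F(x,\zeta)\|\le H(\zeta)$ from Assumption~\ref{simpasp4.3}, the finiteness of $\mathbb{E}_\zeta\|\nabla F(x^*,\zeta)\|_2^4$, and the finiteness of the relevant Gaussian polynomial moments (which depend only on $d$); Lemma~4.2 of~\cite{balasubramanian2018zeroth} is invoked here to package the smoothing-bias estimates. This yields the $\mathcal{O}(\nu^2)$ bound for this piece.

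The center-shift term $\|S_\nu-\bar S_\nu\|_F$ is where the main difficulty lies. The map $y\mapsto G_\nu(y,\zeta,u)G_\nu(y,\zeta,u)^{T}$ is Lipschitz in $y$ with a random modulus controlled by $\|G_\nu\|$ and the Jacobian bound $\|\nabla_x G_\nu(x,\zeta,u)\|_*\le\tilde H(\zeta,u)$ from Lemma~\ref{surproblm2}, so a Cauchy--Schwarz estimate over the fourth moments of $\|G_\nu\|$ and $\tilde H$ gives $\|S_\nu-\bar S_\nu\|_F\lesssim\|x^*-x_\nu^*\|$. Plugging in~\eqref{zerothsolutiondeviation} only yields $\mathcal{O}(\nu)$, which is too crude. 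To recover the claimed $\mathcal{O}(\nu^2)$ I would instead expand $y\mapsto\mathbb{E}_{u,\zeta}[G_\nu(y,\zeta,u)G_\nu(y,\zeta,u)^{T}]$ to first order about $x^*$ and argue that its first-order term, evaluated along the displacement $x_\nu^*-x^*$, is itself $\mathcal{O}(\nu)$, so that the product lands at order $\nu^2$. Establishing this cancellation rigorously, rather than settling for the crude Lipschitz bound, is the crux of the argument; it relies on strong convexity (Assumption~\ref{simpasp4.1}) and on the precise form of the minimizer displacement, while the remaining estimates are routine moment bookkeeping under Assumptions~\ref{simpasp4.2}–\ref{simpasp4.3}.
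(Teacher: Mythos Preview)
Your decomposition via the intermediate $\bar S_\nu$ differs from the paper's. The paper writes $G_\nu(x_\nu^*,\zeta,u)=uu^{T}\nabla F(x_\nu^*+\theta\nu u,\zeta)$ via the mean-value theorem and then splits $S_\nu-S$ through $\mathbb{E}_u[uu^{T}Suu^{T}]$; the point is that the center shift $x_\nu^*-x^*$ and the smoothing shift $\theta\nu u$ are absorbed into a \emph{single} displacement $x_\nu^*+\theta\nu u-x^*$ of size $\mathcal{O}(\nu)$, and the Hessian bound on $\nabla F$ then acts on $\mathbb{E}_u\|x_\nu^*+\theta\nu u-x^*\|_2^{2}=\mathcal{O}(\nu^{2})$ directly. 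In particular the paper never needs the separate ``cancellation'' you identify as the crux. Your proposed route for that piece does not close: writing $\Phi(y)=\mathbb{E}[G_\nu(y,\zeta,u)G_\nu(y,\zeta,u)^{T}]$, the first-order term $D\Phi(x^*)[x_\nu^*-x^*]$ is $\mathcal{O}(1)\cdot\mathcal{O}(\nu)=\mathcal{O}(\nu)$, with no visible mechanism to extract another factor of $\nu$, so your center-shift bound remains stuck at $\mathcal{O}(\nu)$.

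There is also a genuine gap in your smoothing term. The Gaussian fourth-moment identity gives $\mathbb{E}_{u}\bigl[(g^{T}u)^{2}uu^{T}\bigr]=\|g\|_2^{2}\,\mathbf{I}_d+2gg^{T}$, so your leading term equals $\operatorname{tr}(S)\,\mathbf{I}_d+2S$, not $S$; it does not ``match $S$'' under the Gaussian moment identities. The paper's $T_2$ step rests on the equivalent assertion $\mathbb{E}_u[uu^{T}Suu^{T}]=S$ and is subject to the same correction, so this is less a deviation from the paper than a point both arguments would need to revisit.
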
}
\begin{rmk}
By the triangle inequality, we can immediately get
\begin{align}\label{Snubound}
    \|S_{\nu}\|_{F}^{2}&\leq \|S_{\nu}-S\|_{F}^{2}+\|S\|_{F}^{2}\leq \mathcal{O}(\nu^{4})+\mathbb{E}_{\zeta}\|\nabla F(x^{*},\zeta)\|_{F}^{4}]]\lesssim \mathcal{O}(1+\nu^{2}+\nu^{4})
\end{align}
\end{rmk}
\noindent To bound the $\|\tilde{V}-V\|_{F}^{2}$, we also make the following smoothness assumption on the Hessian of the objective function $f$
\begin{asp}\label{asphessian}
The function $f$ is twice differentiable and has Lipschitz continuous Hessian i.e., there exists $L_{H}>0$ such that $\left\|\nabla^{2} f(x)-\nabla^{2} f(y)\right\| \leq L_{H}\|x-y\| \quad \forall x, y \in \mathbb{R}^{d}.$
\end{asp}
\begin{lemma}\label{varerrorlemma2}
Under the conditions of Proposition~\ref{prop:clt} and Assumption \ref{asphessian}, we have $\|\tilde{V}-V\|_{F}^{2}\leq \mathcal{O}(\nu^{6}).$ Hence, $\|\tilde{V}-V\|_{F}^{2}\rightarrow 0$ as $\nu\rightarrow 0$ with other problem parameters held fixed.
\end{lemma}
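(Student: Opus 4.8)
The plan is to write $A \coloneqq \nabla^2 f(x^*)$ and $\tilde A \coloneqq \nabla^2 f_\nu(x_\nu^*)$, so that $V = A^{-1} S A^{-1}$ and $\tilde V = \tilde A^{-1} S_\nu \tilde A^{-1}$, and to control $\|\tilde V - V\|_F$ by separating the two sources of error: the covariance perturbation $S_\nu - S$, which is already handled by Lemma~\ref{varerrorlemma1}, and the inverse-Hessian perturbation $\tilde A^{-1} - A^{-1}$. The starting point is the additive decomposition
\begin{equation*}
\tilde V - V = \tilde A^{-1}(S_\nu - S)\tilde A^{-1} + (\tilde A^{-1} - A^{-1}) S \tilde A^{-1} + A^{-1} S (\tilde A^{-1} - A^{-1}),
\end{equation*}
obtained by inserting and cancelling the mixed terms $\tilde A^{-1} S \tilde A^{-1}$ and $A^{-1} S \tilde A^{-1}$. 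Applying the triangle inequality together with the submultiplicative estimates $\|MN\|_F \le \|M\|\,\|N\|_F$ and $\|MN\|_F \le \|M\|_F\,\|N\|$ reduces the claim to bounding $\|S_\nu - S\|_F$, the norms $\|S\|_F$ and $\|S_\nu\|_F$, and $\|\tilde A^{-1} - A^{-1}\|$, each weighted by operator norms of the inverse Hessians.

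The inverse-Hessian weights are uniformly controlled: since both $f$ and $f_\nu$ are $\mu$-strongly convex (as noted before Lemma~\ref{surproblm1}), we have $\|A^{-1}\|, \|\tilde A^{-1}\| \le 1/\mu$. The factor $\|S\|_F$ is finite by the fourth-moment assumption $\mathbb{E}_\zeta\|\nabla F(x^*,\zeta)\|^4 < \infty$, and $\|S_\nu\|_F$ is bounded via \eqref{Snubound}, while $\|S_\nu - S\|_F = \mathcal{O}(\nu^2)$ by Lemma~\ref{varerrorlemma1}. The only remaining quantity is $\|\tilde A^{-1} - A^{-1}\|$, which I would handle through the resolvent identity $\tilde A^{-1} - A^{-1} = -\tilde A^{-1}(\tilde A - A) A^{-1}$, giving $\|\tilde A^{-1} - A^{-1}\| \le \mu^{-2}\|\tilde A - A\|$. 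I would then split the Hessian difference by inserting the intermediate Hessian $\nabla^2 f(x_\nu^*)$:
\begin{equation*}
\|\tilde A - A\| \le \|\nabla^2 f_\nu(x_\nu^*) - \nabla^2 f(x_\nu^*)\| + \|\nabla^2 f(x_\nu^*) - \nabla^2 f(x^*)\|.
\end{equation*}
The second term is at most $L_H\|x_\nu^* - x^*\|$, which is $\mathcal{O}(\nu)$ by the Lipschitz-Hessian Assumption~\ref{asphessian} combined with the solution-deviation bound \eqref{zerothsolutiondeviation}; the first term is the Gaussian-smoothing error for the Hessian, which I would estimate using Lemma~4.2 of~\cite{balasubramanian2018zeroth}, exploiting that $\nabla^2 f_\nu(x) = \mathbb{E}_u[\nabla^2 f(x+\nu u)]$ and that the leading odd-order contribution in $\mathbb{E}_u[\nabla^2 f(x+\nu u) - \nabla^2 f(x)]$ vanishes by the symmetry of the Gaussian.

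Substituting these orders back into the three terms of the decomposition and squaring then collects the powers of $\nu$ and yields the stated bound $\|\tilde V - V\|_F^2 = \mathcal{O}(\nu^6)$. I expect the Hessian-perturbation term, and not the covariance term, to be the main obstacle: a direct Lipschitz-Hessian estimate applied to the smoothing integral is only $\mathcal{O}(\nu)$, so recovering the sharper rate hinges on genuinely using the cancellation of odd Gaussian moments furnished by Lemma~4.2 of~\cite{balasubramanian2018zeroth}, together with the interplay between this smoothing error and the $\mathcal{O}(\nu)$ displacement $\|x_\nu^* - x^*\|$. Making precise that these contributions combine to the claimed higher-order rate, rather than leaving a stray leading-order $\nu$ term, is the delicate part of the argument; the remaining estimates are routine applications of strong convexity, the submultiplicative norm inequalities, and the finiteness of the relevant moments.
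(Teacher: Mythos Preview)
Your approach is essentially identical to the paper's: the same three-term telescoping of $\tilde V-V$, the same resolvent identity for $\tilde A^{-1}-A^{-1}$, the same splitting of $\tilde A-A$ through the intermediate point $\nabla^2 f(x_\nu^*)$, and the same appeal to Lemma~4.2 of \cite{balasubramanian2018zeroth} for the Gaussian-smoothing Hessian error. So methodologically there is nothing to add.

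Your worry about a ``stray leading-order $\nu$ term'' is, however, exactly on target, and it is not resolved in the paper either. Lemma~4.2 of \cite{balasubramanian2018zeroth} gives $\|\nabla^2 f_\nu(x_\nu^*)-\nabla^2 f(x_\nu^*)\|\le L_H\nu(d+6)^{5/2}/4=\mathcal{O}(\nu)$, and $\|\nabla^2 f(x_\nu^*)-\nabla^2 f(x^*)\|\le L_H\|x_\nu^*-x^*\|=\mathcal{O}(\nu)$; together these yield only $\|\tilde A-A\|=\mathcal{O}(\nu)$ and hence $\|\tilde A^{-1}-A^{-1}\|_F^2=\mathcal{O}(\nu^2)$. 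The paper's own proof computes three terms $N_1,N_2,N_3$ and records them as $\mathcal{O}(\nu^6),\mathcal{O}(\nu^4),\mathcal{O}(\nu^2)$, then concludes $\|\tilde V-V\|_F^2\le\mathcal{O}(\nu^6)$. But the $\mathcal{O}(\nu^6)$ for $N_1$ is not what the displayed ingredients give (since $\|S_\nu\|_F^2=\mathcal{O}(1)$ and $\|\tilde A-A\|_F^2=\mathcal{O}(\nu^2)$, one gets $N_1=\mathcal{O}(\nu^2)$), and in any case a sum containing an $\mathcal{O}(\nu^2)$ term cannot be $\mathcal{O}(\nu^6)$. In short, the rate $\mathcal{O}(\nu^6)$ in the lemma statement is an error in the paper; what both your argument and the paper's argument actually establish is $\|\tilde V-V\|_F^2=\mathcal{O}(\nu^2)$, driven by the inverse-Hessian perturbation. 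That weaker rate is still enough for the paper's purposes (namely $\tilde V\to V$ as $\nu\to0$), so you should not look for a hidden cancellation that upgrades it.
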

\section{Online Estimation of Asymptotic Covariance Matrix}
\label{sec:onlineest}In order to leverage the results of Section~\ref{sec:clt} and~\ref{sec:boundsoncov} to obtain practically computable confidence interval, the matrix $\tilde{V}$ (which depends on unknown $x^*_\nu$) needs to be efficiently estimated. We emphasize that this is a non-trivial problem even in the first-order setting, as the iterates $x_t$ form an inhomogeneous Markov chain. In this section, we leverage the recent work  by~\cite{zhu2020fully}, who proposed an online estimator of the asymptotic covariance matrix in the stochastic first-order setting, and extend it to the stochastic zeroth-order setting and propose an estimator of the covariance matrix $\tilde{V}$ appearing in Proposition~\ref{prop:clt}. 

For iterates $\left\{x_{i}\right\}_{i \geq 1}$ in \eqref{zeroth-ordersgd}, consider the batches:
\begin{align}\label{FOCMEequ5}
    \left\{x_{a_{1}}, \ldots, x_{a_{2}-1}\right\},\left\{x_{a_{2}}, \ldots, x_{a_{3}-1}\right\}, \ldots,\left\{x_{a_{m}}, \ldots, x_{a_{m+1}-1}\right\}, \ldots
\end{align}
where $\left\{a_{m}\right\}_{m \in \mathbb{N}}$ is a strictly increasing integer-valued sequence with $a_{1}=1$ and $a_{k}=\left\lfloor C k^{2 /(1-\alpha)}\right\rfloor$ for $k\geq 2$ and $\frac{1}{2}<\alpha<1$. For $i$-th iterate $x_{i},$ we construct a new batch $B_{i}$ including previous data points from iterations $t_{i}$ to $i$, where $t_{i}=a_{m}$ when $i \in\left[a_{m}, a_{m+1}\right)$, as
$$\begin{array}{c}
\ldots, x_{t_{i}-1},\left\{x_{t_{i}}, \ldots, x_{i}\right\}, x_{i+1}, \ldots \\
B_{i}
\end{array}$$
Based on the batch $B_{i}=\left\{x_{t_{i}}, \ldots, x_{i}\right\}$, the recursive estimator $\widehat{\Sigma}_{n}$ at $n$-th step is then defined as
\begin{equation}\label{FOCMEequ6}
    \widehat{\Sigma}_{n}=\frac{\sum_{i=1}^{n}\left(\sum_{k=t_{i}}^{i} x_{k}-l_{i} \bar{x}_{n}\right)\left(\sum_{k=t_{i}}^{i} x_{k}-l_{i} \bar{x}_{n}\right)^{T}}{\sum_{i=1}^{n} l_{i}},
\end{equation}
where $l_{i}=i-t_{i}+1=\left|B_{i}\right|$. As suggested in \cite{zhu2020fully}, this estimator can be calculated recursively via Algorithm \ref{zeroth-ordersgdfullonlinealg}. The main difference from~\cite{zhu2020fully} is the use of the stochastic zeroth-order gradient, due to which, several assumptions made in~\cite{zhu2020fully} are not satisfied in the stochastic zeroth-order setting we consider. Theorem \ref{zerothsgdthm1} shows that the recursive estimator $\widehat{\Sigma}_{n}$ converges in operator norm to the asymptotic covariance matrix $\tilde{V}$ appearing in Proposition~\ref{prop:clt}.

\begin{algorithm}[t]\caption{Online Asymptotic Covariance Estimation in the Zeroth-Order Setting}\label{zeroth-ordersgdfullonlinealg}
\begin{algorithmic}
\STATE \textbf{Input}: Access to stochastic zeroth-order oracle with Assumption~\ref{as:zo}, parameters $(\alpha, \eta)$ and step size $\eta_{k}=\eta k^{-\alpha}$ for $k \geq 1$, sequence $\left\{a_{k}\right\}$.

\STATE \textbf{Initialize:} $k_{0}=0, v_{0}=P_{0}=V_{0}=W_{0}=\bar{x}_{0}=0, x_{0}$
\FOR{$n=0,1,2,3, \ldots$}
  \STATE 1. $x_{n+1}=x_{n}-\eta_{n+1} G_\nu(x_{n},\zeta_n,u_{n})$
  \STATE 2. $\bar{x}_{n+1}=\left(n \bar{x}_{n}+x_{n+1}\right) /(n+1)$
  \STATE 3. $k_{n+1}=k_{n}+1_{\left\{n+1=a_{k_{n}+1}\right\}}$
  \STATE 4. $l_{n+1}=n+2-t_{n+1}$
  \STATE 5. $q_{n+1}=q_{n}+l_{n+1}^{2}$
  \STATE 6. $v_{n+1}=v_{n}+l_{n+1}$
  \STATE 7. $W_{n+1}=x_{n+1}+W_{n} 1_{\left\{k_{n}=k_{n+1}\right\}}$
  \STATE 8. $V_{n+1}=V_{n}+W_{n+1} W_{n+1}^{T}$
  \STATE 9. $P_{n+1}=P_{n}+l_{n+1} W_{n+1}$
  \STATE 10. $V_{n+1}^{\prime}=V_{n}+q_{n} \bar{x}_{n} \bar{x}_{n}^{T}-2 P_{n} \bar{x}_{n}^{T}$
\STATE \textbf{Output}: Estimator $\bar{x}_{n+1}$ of $x^*$,  estimated covariance $\widehat{\Sigma}_{n+1}=V_{n+1}^{\prime} / v_{n+1}$
\ENDFOR
\end{algorithmic}
\end{algorithm}

{\color{black}
Lemma \ref{surproblm2} yields that under Assumption \ref{simpasp4.3}, the operator norm of Jacobian of $G_{\nu}(x, \zeta,u)$ is bounded by 
$$\left\|\nabla_{x} G_{\nu}(x, \zeta,u)\right\|_{*} \leq \tilde{H}(\zeta,u)$$
for all $x$ in the domain of $G_{\nu}(x,\zeta,u)$. Lemma \ref{zeroonlinelm1} verifies the part 2) and 3) of Assumption 2 in \cite{zhu2020fully}.
\begin{lemma}\label{zeroonlinelm1}
	Under Assumption \ref{simpasp4.3}, the conditional covariance of $\tilde{\xi}_{n}$ has an expansion around $x=x_{\nu}^{*}$:
	\begin{equation}\label{expansionequ1}
	\mathbb{E}_{n-1} \tilde{\xi}_{n} \tilde{\xi}_{n}^{T}=S_{\nu}+\tilde{\Sigma}_{\nu}\left(\tilde{\Delta}_{n-1}\right)
	\end{equation}
	and there exists constants $\tilde{\Sigma}_{1}$ and $\tilde{\Sigma}_{2}>0$ such that for any $\Delta \in \mathbb{R}^{d}$
	\begin{equation}\label{expansionequ2}
	\|\tilde{\Sigma}_{\nu}(\tilde{\Delta})\|_{2} \leq \tilde{\Sigma}_{1}\|\tilde{\Delta}\|_{2}+\tilde{\Sigma}_{2}\|\tilde{\Delta}\|_{2}^{2}
	\end{equation}
	and
	\begin{equation}\label{expansionequ4}
	|\operatorname{tr}(\tilde{\Sigma}_{\nu}(\tilde{\Delta}))| \leq \tilde{\Sigma}_{1}\|\tilde{\Delta}\|_{2}+\tilde{\Sigma}_{2}\|\tilde{\Delta}\|_{2}^{2}
	\end{equation}
	There exists constants $\tilde{\Sigma}_{3}, \tilde{\Sigma}_{4}$ such that the fourth conditional moment of $\tilde{\xi}_{n}$ is bounded by 
	\begin{equation}\label{expansionequ3}
	\mathbb{E}_{n-1}\left\|\tilde{\xi}_{n}\right\|_{2}^{4} \leq \tilde{\Sigma}_{3}+\tilde{\Sigma}_{4}\left\|\tilde{\Delta}_{n-1}\right\|_{2}^{4}
	\end{equation}
	where $\tilde{\Sigma}_{1}=4\sqrt{\mathbb{E}_{n-1}\left\|G_{\nu}\left(x_{\nu}^{*}, u_{n},\zeta_{n}\right)\right\|_{2}^{2}\mathbb{E}\tilde{H}(\zeta)^{2}}$, $\tilde{\Sigma}_{2}=4 \mathbb{E} \tilde{H}(\zeta)^{2}$, $\tilde{\Sigma}_{3}=8\mathbb{E}_{n-1}\left\|G_{\nu}\left(x_{\nu}^{*}, u_{n},\zeta_{n}\right)\right\|_{2}^{4}$ and $\tilde{\Sigma}_{4}=128 \mathbb{E} \tilde{H}(\zeta)^{4}$
\end{lemma}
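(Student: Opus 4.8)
The plan is to start from the additive decomposition of the perturbation already established in Lemma~\ref{zerolm1}, namely $\tilde{\xi}_n = \tilde{m}_n(0) + \tilde{\varphi}_n(\tilde{\Delta}_{n-1})$, and expand the conditional second moment. Since $x_{n-1}$ is $\mathcal{F}_{n-1}$-measurable and $G_\nu$ is an unbiased estimator of $\nabla f_\nu$, the increment $\tilde{\varphi}_n$ is conditionally mean-zero, exactly like $\tilde{m}_n(0)$. Writing out $\mathbb{E}_{n-1}[\tilde{\xi}_n\tilde{\xi}_n^T]$ and invoking part (b) of Lemma~\ref{zerolm1} to identify $\mathbb{E}_{n-1}[\tilde{m}_n(0)\tilde{m}_n(0)^T]=S_\nu$, I would set the remainder to be $\tilde{\Sigma}_\nu(\tilde{\Delta}_{n-1}) := \mathbb{E}_{n-1}[\tilde{m}_n(0)\tilde{\varphi}_n^T] + \mathbb{E}_{n-1}[\tilde{\varphi}_n\tilde{m}_n(0)^T] + \mathbb{E}_{n-1}[\tilde{\varphi}_n\tilde{\varphi}_n^T]$, which produces the expansion~\eqref{expansionequ1} by construction; all remaining work is to bound this remainder.

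The single quantitative input is a Lipschitz-type control on $G_\nu$ coming from Lemma~\ref{surproblm2}. Because $F(\cdot,\zeta)$ is continuously differentiable, the mean value inequality together with the Jacobian bound $\|\nabla_x G_\nu(x,\zeta,u)\|_* \leq \tilde{H}(\zeta,u)$ yields, pathwise, $\|G_\nu(x_{n-1},\zeta_n,u_n) - G_\nu(x_\nu^*,\zeta_n,u_n)\| \leq \tilde{H}(\zeta_n,u_n)\,\|\tilde{\Delta}_{n-1}\|$. Taking conditional expectation and using Jensen's inequality to bound the deterministic drift $\|\nabla f_\nu(x_{n-1}) - \nabla f_\nu(x_\nu^*)\| = \|\mathbb{E}_{n-1}[G_\nu(x_{n-1},\zeta_n,u_n) - G_\nu(x_\nu^*,\zeta_n,u_n)]\| \leq \mathbb{E}[\tilde{H}]\,\|\tilde{\Delta}_{n-1}\|$, the elementary inequality $\|a+b\|^2 \leq 2\|a\|^2 + 2\|b\|^2$ gives $\mathbb{E}_{n-1}\|\tilde{\varphi}_n\|^2 \leq 4\,\mathbb{E}[\tilde{H}^2]\,\|\tilde{\Delta}_{n-1}\|^2$.

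With this estimate the remainder bounds are routine. For each cross term I would bound the operator norm of a conditional expectation of a rank-one matrix by the expectation of the product of the factor norms, then apply Cauchy--Schwarz, giving $\|\mathbb{E}_{n-1}[\tilde{m}_n(0)\tilde{\varphi}_n^T]\| \leq (\mathbb{E}_{n-1}\|\tilde{m}_n(0)\|^2)^{1/2}(\mathbb{E}_{n-1}\|\tilde{\varphi}_n\|^2)^{1/2}$. Summing the two cross terms yields the linear-in-$\|\tilde{\Delta}\|$ contribution with constant $\tilde{\Sigma}_1 = 4\sqrt{\mathbb{E}_{n-1}\|G_\nu(x_\nu^*,u_n,\zeta_n)\|^2\,\mathbb{E}\tilde{H}^2}$, while the quadratic term satisfies $\|\mathbb{E}_{n-1}[\tilde{\varphi}_n\tilde{\varphi}_n^T]\| \leq \mathbb{E}_{n-1}\|\tilde{\varphi}_n\|^2 \leq \tilde{\Sigma}_2\|\tilde{\Delta}_{n-1}\|^2$ with $\tilde{\Sigma}_2 = 4\mathbb{E}\tilde{H}^2$, which proves~\eqref{expansionequ2}. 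Inequality~\eqref{expansionequ4} follows identically, since the trace of a rank-one matrix $ab^T$ equals the inner product $\langle a,b\rangle$, so the very same Cauchy--Schwarz estimates transfer verbatim. Finally, for~\eqref{expansionequ3} I would apply $\|a+b\|^4 \leq 8\|a\|^4 + 8\|b\|^4$ first to $\tilde{\xi}_n = \tilde{m}_n(0)+\tilde{\varphi}_n$ and then to $\tilde{\varphi}_n$ and its two constituents, using the pathwise Jacobian bound raised to the fourth power together with Jensen's inequality; collecting terms gives $\mathbb{E}_{n-1}\|\tilde{\xi}_n\|^4 \leq 8\,\mathbb{E}_{n-1}\|G_\nu(x_\nu^*,u_n,\zeta_n)\|^4 + 128\,\mathbb{E}\tilde{H}^4\,\|\tilde{\Delta}_{n-1}\|^4$, matching $\tilde{\Sigma}_3$ and $\tilde{\Sigma}_4$.

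The argument is essentially mechanical once Lemma~\ref{surproblm2} is in hand; the only genuinely delicate point is the passage from the Jacobian bound to the pathwise Lipschitz estimate on $G_\nu$, which relies on $F(\cdot,\zeta)$ being $C^1$ so that the finite-difference form of $G_\nu$ is differentiable in $x$, and on checking that $\tilde{H}(\zeta,u)$ (concretely of the form $H(\zeta)\|u\|^2$) has a finite fourth moment, which is exactly what makes the constants $\tilde{\Sigma}_1,\dots,\tilde{\Sigma}_4$ finite. Beyond that, the remaining care is simply bookkeeping to land on the precise numerical constants $4,4,8,128$.
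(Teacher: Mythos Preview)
Your proposal is correct and follows essentially the same argument as the paper. The paper's proof introduces $L(\tilde{\Delta}_{n-1},u_n,\zeta_n)$, which is precisely $-\tilde{\varphi}_n(\tilde{\Delta}_{n-1})$ in your notation, writes $\tilde{\xi}_n = L - G_\nu(x_\nu^*,u_n,\zeta_n)$, and then proceeds with exactly the same ingredients you describe: the mean-value Jacobian bound from Lemma~\ref{surproblm2}, Jensen's inequality for the drift term, Cauchy--Schwarz on the cross terms, and the inequality $\|a+b\|^m \le 2^{m-1}(\|a\|^m+\|b\|^m)$ for $m=2,4$, arriving at the identical constants $4,4,8,128$.
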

Therefore, under Assumption \ref{simpasp4.1},  \ref{simpasp4.3} and \ref{simpasp4.5}, similar results of Lemma 3.2 in \cite{chen2016statistical} also hold for the surrogate problem.
}

\begin{thm}\label{zerothsgdthm1}
Under Assumption \ref{simpasp4.1} and \ref{simpasp4.2}, let $a_{k}=\left\lfloor C k^{\beta}\right\rfloor$, where $C$ is a constant and $\beta>\frac{1}{1-\alpha}$. Set step size at the i-th iteration as $\eta_{i}=\eta i^{-\alpha}$ with $\frac{1}{2}<\alpha<1$ the satisfies the Assumption \ref{simpasp4.5}. Then for $\hat{\Sigma}_{n}$ defined in (\ref{FOCMEequ6}),  and for sufficiently large $M$, we have
\begin{equation}\label{FOCMEequ22}\color{black}
\begin{aligned}
\mathbb{E}\left\|\widehat{\Sigma}_{n}-\tilde{V}\right\| \lesssim &\sqrt{1+\tilde{\Sigma}_{1}+C_{d}^{3}+\nu+(\nu+\nu^{2}+\nu^{3})\tilde{\Sigma}_{1}} M^{-\frac{\alpha \beta}{4}}\\
&~~~~~~~+\sqrt{(1+\nu^{2}+\nu^{4})}M^{-\frac{1}{2}}+\sqrt{1+\nu+\nu^{2}}M^{\frac{(\alpha-1) \beta+1}{2}}
\end{aligned}
\end{equation}
where $M$ is the number of batches such that $a_{M} \leq n<a_{M+1}$.  
\end{thm}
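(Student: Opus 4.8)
The plan is to exploit the observation, already used in Section~\ref{sec:clt}, that the zeroth-order iterates in~\eqref{zeroth-ordersgd} coincide exactly with the first-order SGD iterates for the surrogate problem~\eqref{surgprob1}, since $G_\nu(x_{i-1},\zeta_i,u_i)$ is an unbiased estimate of $\nabla f_\nu(x_{i-1})$. Hence the online estimator $\widehat\Sigma_n$ of~\eqref{FOCMEequ6} is precisely the batched-trajectory estimator of~\cite{zhu2020fully} run on the surrogate, and its natural target is $\tilde V=\bigl(\nabla^2 f_\nu(x_\nu^*)\bigr)^{-1}S_\nu\bigl(\nabla^2 f_\nu(x_\nu^*)\bigr)^{-1}$. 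Since $f_\nu$ inherits $\mu$-strong convexity and $L$-smoothness from Assumption~\ref{simpasp4.1}, the whole argument reduces to re-running the analysis of~\cite{zhu2020fully} for this surrogate while keeping every constant explicit in $\nu$.

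First I would record that the structural hypotheses of~\cite{zhu2020fully} hold for the surrogate: Lemma~\ref{surproblm1} gives the gradient-growth and conditional-variance bound, Lemma~\ref{surproblm2} bounds the Jacobian of $G_\nu$ by $\tilde H(\zeta,u)$, and Lemma~\ref{zeroonlinelm1} supplies the expansion $\mathbb{E}_{n-1}\tilde\xi_n\tilde\xi_n^T=S_\nu+\tilde\Sigma_\nu(\tilde\Delta_{n-1})$ with $\|\tilde\Sigma_\nu(\tilde\Delta)\|\le\tilde\Sigma_1\|\tilde\Delta\|+\tilde\Sigma_2\|\tilde\Delta\|^2$, together with the fourth-moment bound~\eqref{expansionequ3}. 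These verify parts~2) and~3) of Assumption~2 of~\cite{zhu2020fully}, so the trajectory-moment control of Lemma~3.2 of~\cite{chen2016statistical}, namely $\mathbb{E}\|\tilde\Delta_k\|^2\lesssim\eta_k$ and the attendant higher-moment estimates, is available for the surrogate.

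Next I would carry out the error decomposition of~\cite{zhu2020fully}. Writing each batch block as $S_i\coloneqq\sum_{k=t_i}^i(x_k-\bar x_n)=\sum_{k=t_i}^i(\tilde\Delta_k-\bar{\tilde\Delta}_n)$, so that $\widehat\Sigma_n=(\sum_i l_i)^{-1}\sum_i S_iS_i^T$, I would use the recursion $\tilde\Delta_k=\tilde\Delta_{k-1}-\eta_k G_\nu(x_{k-1},\zeta_k,u_k)$ and the linearization $\nabla f_\nu(x_{k-1})\approx\nabla^2 f_\nu(x_\nu^*)\,\tilde\Delta_{k-1}$ to express the partial sums as $\bigl(\nabla^2 f_\nu(x_\nu^*)\bigr)^{-1}$ times cumulative martingale noise $\tilde m_k(0)=G_\nu(x_\nu^*,\zeta_k,u_k)$. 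This splits $\widehat\Sigma_n-\tilde V$ into three groups: a leading martingale term whose conditional second moment concentrates on $S_\nu$, contributing the $M^{-1/2}$ rate with coefficient controlled by $\|S_\nu\|_F^2\lesssim 1+\nu^2+\nu^4$ from~\eqref{Snubound}; linearization and covariance-expansion terms governed by $\tilde\Sigma_1,\tilde\Sigma_2$ and $\mathbb{E}\|\tilde\Delta_k\|^2$, contributing the $M^{-\alpha\beta/4}$ rate; and batch-boundary discretization terms controlled by ratios of consecutive batch lengths, contributing the $M^{((\alpha-1)\beta+1)/2}$ rate. The conversion from iteration index $n$ to batch count $M$ uses $a_M\le n<a_{M+1}$ with $a_k\asymp Ck^\beta$.

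The main obstacle is the bookkeeping of the $\nu$-dependence, which is what distinguishes this from the first-order result of~\cite{zhu2020fully}. Several constants entering the linearization error grow polynomially in $\nu$: the coefficient $\tilde\Sigma_1=4\sqrt{\mathbb{E}_{n-1}\|G_\nu(x_\nu^*,u_n,\zeta_n)\|^2\,\mathbb{E}\tilde H(\zeta)^2}$ and the fourth-moment constants $\tilde\Sigma_3,\tilde\Sigma_4$ of Lemma~\ref{zeroonlinelm1} inherit $\nu$-dependence through $\|S_\nu\|$ and the moments of $G_\nu$, while the moments of the Gaussian direction $u$ produce the dimension-dependent factor $C_d$. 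Propagating these through the cross terms between the trajectory bound $\mathbb{E}\|\tilde\Delta_k\|^2$ and the covariance-expansion coefficients, and then collecting them, yields the composite factor $\sqrt{1+\tilde\Sigma_1+C_d^3+\nu+(\nu+\nu^2+\nu^3)\tilde\Sigma_1}$ multiplying $M^{-\alpha\beta/4}$, and likewise the factors $\sqrt{1+\nu^2+\nu^4}$ and $\sqrt{1+\nu+\nu^2}$ in the remaining two terms. Assembling the three contributions gives the stated bound.
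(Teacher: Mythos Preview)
Your proposal is correct and takes essentially the same approach as the paper: reduce to first-order SGD on the surrogate~\eqref{surgprob1}, verify via Lemmas~\ref{surproblm1}--\ref{zeroonlinelm1} the structural hypotheses of~\cite{zhu2020fully}, and re-run that analysis with explicit $\nu$-tracking of the constants. The paper makes your linearization step concrete by introducing the auxiliary linear sequence $U_n=(I-\eta_n A)U_{n-1}+\eta_n\tilde\xi_n$ with $A=\nabla^2 f_\nu(x_\nu^*)$ and an intermediate estimator $\tilde\Sigma_n$ built from $\{U_k\}$, bounding $\mathbb{E}\|\tilde\Sigma_n-\tilde V\|$ (which yields all three rates in~\eqref{FOCMEequ22}) and $\mathbb{E}\|\widehat\Sigma_n-\tilde\Sigma_n\|\lesssim M^{-1/2}$ separately, but the content matches your outline.
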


\begin{rmk}
The estimation error rate of $\widehat{\Sigma}_{n}$ goes to zero as $M$ goes to infinity and thus $\widehat{\Sigma}_{n}$ is a consistent estimator of $\tilde{V}$. Setting $\beta=\frac{2}{1-\alpha}$, we get Corollary \ref{zerothsgdcoro1} from Theorem \ref{zerothsgdthm1}, which upper bounds the estimation error in terms of number of iterations $n$.
\end{rmk}
\begin{coro}\label{zerothsgdcoro1}
Under the same conditions in Theorem \ref{zerothsgdthm1} and setting $a_{k}=\left\lfloor C k^{2 /(1-\alpha)}\right\rfloor,$ we have
\begin{equation}\label{FOCMEequ23}
    \mathbb{E}\left\|\widehat{\Sigma}_{n}-\tilde{V}\right\| \lesssim \frac{\left(\sqrt{(1+\nu^{2}+\nu^{4})}+\sqrt{1+\nu+\nu^{2}}\right)}{n^{(1-\alpha) / 4}} .
\end{equation}
\end{coro}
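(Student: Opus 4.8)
The plan is to derive the corollary directly from Theorem~\ref{zerothsgdthm1} by specializing $\beta = 2/(1-\alpha)$ and re-expressing the batch count $M$ in terms of the iteration count $n$. First I would substitute $\beta = 2/(1-\alpha)$ into the three exponents of $M$ appearing in~\eqref{FOCMEequ22}. For the first term, $-\alpha\beta/4 = -\alpha/\big(2(1-\alpha)\big)$; for the second term the exponent $-1/2$ is unchanged; and for the third term, since $(\alpha-1)\beta = -2$, the exponent becomes $\big((\alpha-1)\beta+1\big)/2 = -1/2$. Thus after substitution the bound reads, schematically, $C_1 M^{-\alpha/(2(1-\alpha))} + C_2 M^{-1/2}$, where $C_1 = \sqrt{1+\tilde\Sigma_1+C_d^3+\nu+(\nu+\nu^2+\nu^3)\tilde\Sigma_1}$ collects the coefficient of the first term and $C_2 = \sqrt{1+\nu^2+\nu^4}+\sqrt{1+\nu+\nu^2}$ collects the coefficients of the second and third.

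Next I would translate powers of $M$ into powers of $n$ using the defining relation $a_M \le n < a_{M+1}$ together with $a_k = \lfloor C k^{2/(1-\alpha)}\rfloor$. This yields $C M^{2/(1-\alpha)} - 1 \le n < C(M+1)^{2/(1-\alpha)}$, and hence $M \asymp n^{(1-\alpha)/2}$ up to multiplicative constants depending only on $C$ and $\alpha$. Substituting this into the two surviving $M$-powers gives $M^{-\alpha/(2(1-\alpha))} \asymp n^{-\alpha/4}$ and $M^{-1/2} \asymp n^{-(1-\alpha)/4}$.

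The final step is to compare the two resulting rates and collect the coefficients. Since $1/2 < \alpha < 1$ forces $\alpha > 1-\alpha$, we have $\alpha/4 > (1-\alpha)/4$, so the term $C_1 n^{-\alpha/4}$ decays strictly faster than $C_2 n^{-(1-\alpha)/4}$. As $C_1$ is a finite constant (the quantities $\tilde\Sigma_1$, $C_d$, and $\nu$ are all fixed), this first term is $o\big(n^{-(1-\alpha)/4}\big)$ and is absorbed into the overall constant hidden in $\lesssim$. What remains is $C_2 n^{-(1-\alpha)/4}$, which is exactly the claimed bound~\eqref{FOCMEequ23}.

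The computation is essentially bookkeeping, so there is no deep obstacle; the only point requiring care is checking that the leading term of Theorem~\ref{zerothsgdthm1}---the one carrying the potentially large dimension factor $C_d^3$ and the martingale-variance factor $\tilde\Sigma_1$---genuinely decays at the faster rate $n^{-\alpha/4}$ across the \emph{entire} admissible range $\alpha \in (1/2,1)$, so that it never becomes the bottleneck and the honest rate is dictated by the $M^{-1/2}$ contributions. Verifying $\alpha > 1-\alpha$ on this interval is what guarantees this, and it is the one inequality on which the clean final rate $n^{-(1-\alpha)/4}$ hinges.
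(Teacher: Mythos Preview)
Your proof is correct and follows exactly the approach indicated in the paper: the paper simply remarks that setting $\beta = 2/(1-\alpha)$ in Theorem~\ref{zerothsgdthm1} yields the corollary, and you have carried out precisely this substitution, translated $M\asymp n^{(1-\alpha)/2}$, and observed that the $M^{-\alpha\beta/4}$ term decays at the faster rate $n^{-\alpha/4}$ and is therefore absorbed. There is nothing to add.
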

Recall that in~\eqref{zeroth-ordersgd}, we use only one stochastic gradient (based on a pair of calls to the stochastic zeroth-order oracle). Hence, the above bound also provides the order on the number of call to the stochastic zeroth-order oracle to obtain a $\epsilon$-accurate estimator of the asymptotic covariance matrix appearing in the central limit result in Proposition~\ref{prop:clt}.

We now provide a sketch of the proof  highlighting the main differences from the proof of~\cite{zhu2020fully}. We defer the full proof to Section~\ref{sec:onlinest}, as it is involved.
\begin{proof}[\textbf{Proof Sketch of Theorem~\ref{zerothsgdthm1}}] 
The main idea of the proof for Theorem \ref{zerothsgdthm1} relies on the linear sequence defined by 
\begin{equation}\label{linearseq}
U_{n}=\left(I-\eta_{n} A\right) U_{n-1}+\eta_{n} \tilde{\xi}_{n}
\end{equation}
 where $A=\nabla^{2}f_{\nu}(x_{\nu}^{*})$ and $U_{0}=\tilde{\Delta}_{0}$. One can show that under the condition of Theorem \ref{zerothsgdthm1}, the estimator for linear sequence defined by
$$\tilde{\Sigma}_{n}=\frac{1}{\sum_{i=1}^{n} l_{i}} \sum_{i=1}^{n}\left(\sum_{k=t_{i}}^{i} U_{k}-l_{i} \bar{U}_{n}\right)\left(\sum_{k=t_{i}}^{i} U_{k}-l_{i} \bar{U}_{n}\right)^{T}$$
is a consistent estimator of $\tilde{V}$, where $\bar{U}_n=\frac{1}{n}\sum_{i=0}^{n-1}U_i$. 
To show the error rate of $\tilde{\Sigma}_{n}$, the triangle inequality gives us 
\begin{equation}\label{FOCME2equ39}
\begin{aligned}
&\mathbb{E}\left\|\tilde{\Sigma}_{n}-\tilde{V}\right\|\\
  \leq &\mathbb{E}\left\|\left(\sum_{i=1}^{n} l_{i}\right)^{-1} \sum_{i=1}^{n}\left(\sum_{k=t_{i}}^{i} U_{k}\right)\left(\sum_{k=t_{i}}^{i} U_{k}\right)^{T}-\tilde{V}\right\|\\
    +&\mathbb{E}\left\|\left(\sum_{i=1}^{n} l_{i}\right)^{-1} \sum_{i=1}^{n} l_{i}^{2} \bar{U}_{n} \bar{U}_{n}^{T}\right\|\\
    +&2 \mathbb{E}\left\|\left(\sum_{i=1}^{n} l_{i}\right)^{-1} \sum_{i=1}^{n}\left(\sum_{k=t_{i}}^{i} U_{k}\right)\left(l_{i} \bar{U}_{n}\right)^{T}\right\|.
\end{aligned}    
\end{equation}
The three terms in the right hand side of~\eqref{FOCME2equ39} in turn could be bounded respectively by {\color{black}
\begin{align*}
&\sqrt{1+\tilde{\Sigma}_{1}+C_{d}^{3}+\nu+(\nu+\nu^{2}+\nu^{3})\tilde{\Sigma}_{1}} M^{-\frac{\alpha \beta}{4}}+\sqrt{(1+\nu^{2}+\nu^{4})}M^{-\frac{1}{2}}.+\sqrt{1+\nu+\nu^{2}}M^{\frac{(\alpha-1) \beta+1}{2}} \\
&~~~~~~~\qquad\qquad~~~~(1+\tilde{\Sigma}_{1}+(1+\tilde{\Sigma}_{1})\nu+\nu^{2}) M^{-1},~\text{and}~(\tilde{\Sigma}_{1}(1+\nu)+(1+\nu+\nu^{2}))M^{-\frac{1}{2}}
\end{align*}}
where $C_d$ is a constant that only depends on dimension $d$ and parameter $\nu$ (whose order will be determined later in the proof).
Hence, we have, the error rate of $\tilde{\Sigma}_{n}$ is given by
\begin{align}\label{FOCME2equ38}
 \mathbb{E}\left\|\tilde{\Sigma}_{n}-\tilde{V}\right\| &\lesssim~\text{Right hand side of}~\eqref{FOCMEequ22}.
\end{align}

We could now adapt the proof of Theorem 1 in \cite{zhu2020fully} by replacing martingale difference $\tilde{\xi}_{n}$ in first-order SGD iterates by the new difference sequence $\tilde{\xi}_{n}=\nabla f\left(x_{n-1}\right)-\nabla \tilde{F}\left(x_{n-1}, \tilde{\xi}_{n}\right)$. To complete the proof of Theorem \ref{zerothsgdthm1}, it suffices to show that the order of $\mathbb{E}\|\hat{\Sigma}_{n}-\tilde{\Sigma}_{n}\|_{2}$ can be bounded by the same order as $\mathbb{E}\|\tilde{\Sigma}_{n}-\tilde{V}\|_{2}$. Note that $\tilde{\Delta}_{n}=x_{n}-x_{\nu}^{*}$ and $\bar{\Delta}_{n}=\frac{1}{n}\sum_{i=1}^{n}\tilde{\Delta}_{i}$, then $\widehat{\Sigma}_{n}$ can be rewritten as
$$\widehat{\Sigma}_{n}=\left(\sum_{i=1}^{n} l_{i}\right)^{-1}\left(\sum_{k=t_{i}}^{i} \tilde{\Delta}_{k}-l_{i} \bar{\Delta}_{n}\right)\left(\sum_{k=t_{i}}^{i} \tilde{\Delta}_{k}-l_{i} \bar{\Delta}_{n}\right)^{T}.$$
Given this representation, with some further calculations, it could be shown that the difference $\mathbb{E}\|\tilde{\Sigma}_{n}-\widehat{\Sigma}_{n}\|_{2} \lesssim M^{-1 / 2}$ and hence we obtained the stated result.

\end{proof}

\section{Experimental Results}\label{sec:expresults}
\begin{figure}[t!]
\begin{center}
\centerline{\includegraphics[scale=0.6]{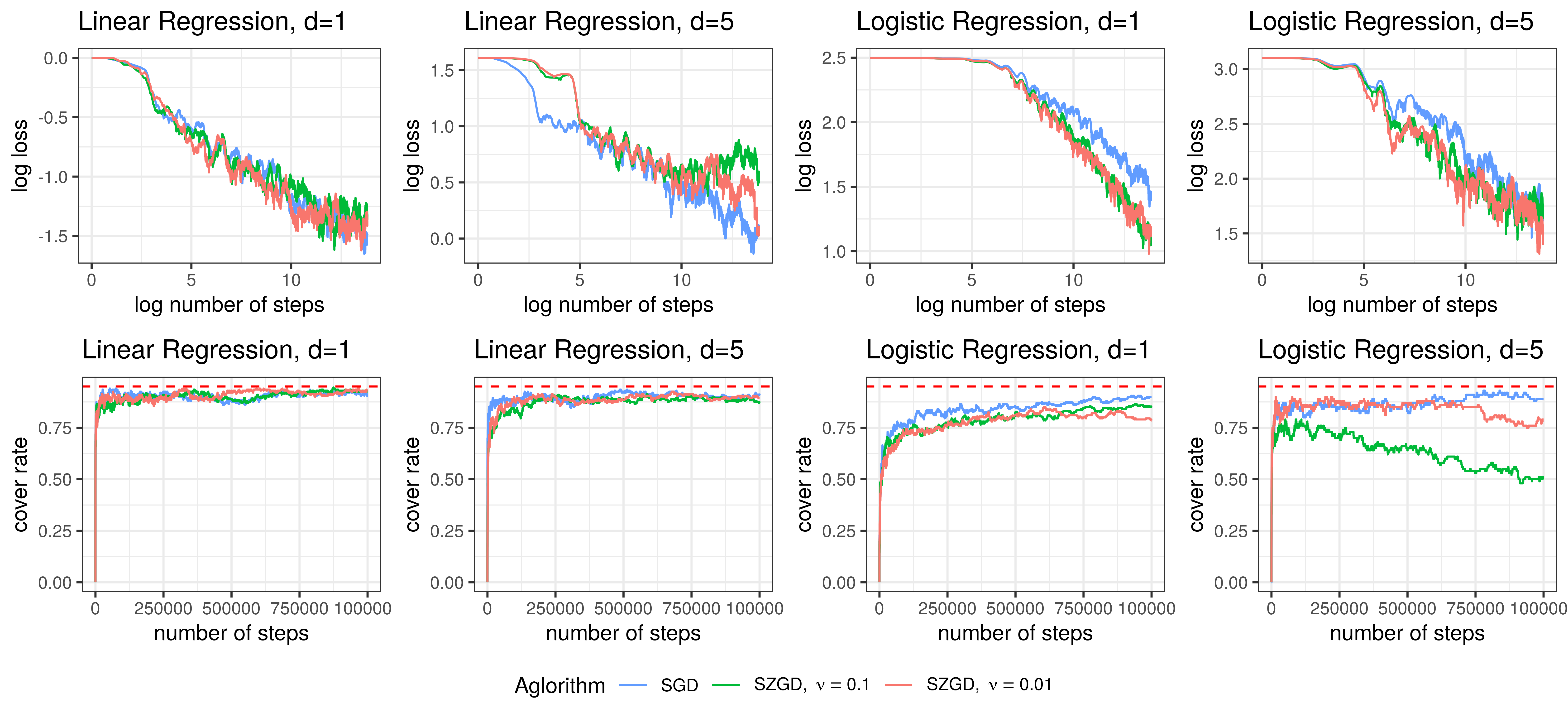}}
\caption{\textit{Top}: The average log estimation error of $\widehat{\Sigma}_{n}$ vs. the log number of iterations. \textit{Bottom}: The cover rate of $95\%$ confidence interval of $\mathbf{1}^Tx^*$ vs. the number of iterations. Red dashed line denotes the nominal coverage rate of 0.95. SGD and SZGD refers to first-order and zeroth-order stochastic gradient algorithms respectively.}
\label{fig:simulation}
\end{center}
\end{figure}

We first evaluate the empirical performance of the zeroth-order statistical inference in the context of online linear and logistic regression. Recall that the objective function $f(x)$ for linear and logistic regression respectively are defined by $\mathbb{E}_\xi [\big(\langle a, x\rangle-b\big)^{2}],$ and $f(x)=\mathbb{E}_\xi[ \log \left(1+\exp \left(-b\left\langle a, x\right\rangle\right)\right)]$ respectively, where $\xi=(a,b)$ with $a\in\mathbb{R}^d$ being a random covariate vector. For linear regression, the response variable $b\in\mathbb{R}$ is given by $b=\langle a ,x^*\rangle + \epsilon$,  $\epsilon\sim N(0,1)$, with $x^{*} \in \mathbb{R}^{d}$ being the true parameter vector. For logistic regression, the response $b\in\{-1,1\}$ is generated by the following probabilistic model $\mathbb{P}\left(b\mid a\right)=1/(1+\exp \left(-b\left\langle a, x^{*}\right\rangle\right))$. 
For the experiment, we run the stochastic gradient algorithm in both the first-order and zeroth-order setting, and compute the Polyak-Ruppert averaged estimator $\bar{x}_{n}$. Additionally, we compute the estimated covariance matrix $\widehat{\Sigma}_{n}$ by Algorithm \ref{zeroth-ordersgdfullonlinealg}. We measure the estimation error of $\widehat{\Sigma}_{n}$ with $\left|\mathbf{1}^{T}\left(\widehat{\Sigma}_{n}-V\right) \mathbf{1}\right|$, where $V$ is the asymptotic covariance of $\bar{x}_n$ in the first-order setting. Finally, we construct the following $(1-q)\times 100 \%$ confidence interval for $\mathbf{1}^{T} x^{*}$, given by \begin{equation}
\mathbf{1}^{T} \bar{x}_{n}\pm z_{1-q/2} \sqrt{\mathbf{1}^{T} \widehat{\Sigma}_{n} \mathbf{1} / n}.
\end{equation}
In the experiments, we monitor the estimation error of $\widehat{\Sigma}_{n}$ and the cover rate of the proposed confidence interval over $200$ ($d=1$) or $150$ ($d=5$) independent runs. For each simulation, the true coefficient $x^{*}$ is a $d$-dimensional vector with each element taking a uniform random value between $0$ and $1$. The covariate vector $a_i$ is generated from $N\left(0, \mathbf{I}_{d}\right)$ for linear regression and from $U[-1,1]^d$ for logistic regression. In Algorithm \ref{zeroth-ordersgdfullonlinealg}, we set $\alpha=0.505$, and $\eta_{j}$ is set to be $0.5 j^{-\alpha}$ when $d=1$ and $0.1 j^{-\alpha}$ when $d=5$, $a_{k}=\left\lfloor k^{2 /(1-\alpha)}\right\rfloor$.  We examine the performance of stochastic gradient algorithm in the first-order setting and the zeroth-order setting with the smoothing parameter set as $\nu=0.1,0.01$ and present the results in Figure \ref{fig:simulation}. We observe that the performance of the zeroth-order algorithm and the first-order algorithm are comparable as long as $\nu$ is sufficiently close to 0. 

\begin{rmk}
In the right panel of Figure \ref{fig:simulation}, it is interesting to observe that the performance in the zeroth-order setting is better than that of the first-order setting. This could be ascribed to the nature of risk function for logistic regression~\cite{ji2018risk}. For the strongly-convex risk function with nonseparable data, the zeroth-order surrogate problem seems to enjoy certain algorithmic benefits, which is worth exploring more in future. 
\end{rmk}
\section{Conclusion}
In this work, we proved a central limit theorem for the Polyak-Rupper averaged stochastic zeroth-order gradient algorithm and provided a practical procedure for constructing confidence interval based on estimating the asymptotic covariance matrix in an online manner. We also demonstrated the practical applicability of our method for constructing predictive confidence intervals. For future work, it is interesting to obtain non-asymptotic results where dimension $d$ is allowed to grow along with iterations. 
\section*{Acknowledgement}
YJ and TX contributed equally to the project. The research of KB and TX was supported in part by UC Davis CeDAR (Center for Data Science and Artificial Intelligence Research) Innovative Data Science Seed Funding Program.  
\appendix

\section{Proofs for Section~\ref{sec:zosgdclt}}\label{sec:cltproof}

\begin{proof}[Proof of Lemma~\ref{surproblm1}]
Consider the decomposition for the random perturbation for the surrogate problem (\ref{surgprob1}) 
$$\tilde{\xi}_{i}(x_{i-1})=G_\nu(x_{i-1},\zeta_i,u_{i})-\nabla f_\nu(x_{i-1})=\tilde{m}_{i}(0)+\tilde{\varphi}_{i}(\tilde{\Delta}_{i-1}),$$
where
\begin{align*}
    \tilde{m}_{i}(0)&=G_\nu(x_{\nu}^{*},\zeta_i,u_{i}) - \nabla f_\nu(x_{\nu}^{*})=G_\nu(x_{\nu}^{*},\zeta_i,u_{i}),\\
    \tilde{\varphi}_{i}(\tilde{\Delta}_{t-1})&=\left(G_\nu(x_{i-1},\zeta_i,u_{i})-G_\nu(x_{\nu}^{*},\zeta_i,u_{i})\right) - \left(\nabla f_\nu(x_{i-1})-\nabla f_\nu(x_{\nu}^{*})\right).
\end{align*}
Then, since $\zeta_{i}$ and $u_{i}$ are both i.i.d., and are mutually independent, we have for all $i\geq 1$ 
$$\mathbb{E}_{i-1}\left[\tilde{m}_{i}(0)\tilde{m}_{i}(0)^{T}\right]=\tilde{S},$$
where $$\tilde{S}=\mathbb{E}_{i-1}\left[G_\nu(x_{\nu}^{*},\zeta_i,u_{i})G_\nu(x_{\nu}^{*},\zeta_i,u_{i})^{T}\right],$$ 
    is the conditional coveraince matrix of $G_{\nu}(x_{\nu}^{*},\zeta_{i},u_{i})$ with $\mathrm{tr}\left(S\right)<\infty$. By trace trick, we obtain
    \begin{align*}
\mathbb{E}_{i-1}\left[\|\tilde{m}_{i}(0)\|^{2}\right]&=\mathbb{E}_{i-1}\left[\tilde{m}_{i}(0)^{T}\tilde{m}_{i}(0)\right]=\mathbb{E}_{i-1}\left[\mathrm{tr}(\tilde{m}_{i}(0)^{T}\tilde{m}_{i}(0))\right]\\
&=\mathbb{E}_{i-1}\left[\mathrm{tr}(\tilde{m}_{i}(0)\tilde{m}_{i}(0)^{T})\right]=\mathrm{tr}\mathbb{E}_{i-1}\left[\tilde{m}_{i}(0)\tilde{m}_{i}(0)^{T}\right]\\
&=\mathrm{tr}(\tilde{S})<\infty.
\end{align*}
Now we need to bound the expectation $\mathbb{E}_{i-1}\|\tilde{\varphi}_{i}(\tilde{\Delta}_{i-1})\|^{2}$. Note that
\begin{align*}
    \mathbb{E}_{i-1}\|\tilde{\varphi}_{i}(\tilde{\Delta}_{i-1})\|^{2} &\leq 2 \mathbb{E}_{i-1}\|G_\nu(x_{i-1},\zeta_i,u_{i})-G_\nu(x_{\nu}^{*},\zeta_i,u_{i})\|^{2}+2\|\nabla f_\nu(x_{i-1})-\nabla f_\nu(x_{\nu}^{*})\|^{2}.
\end{align*}
Now, by Equation 12 and Theorem 4 from~\cite{nesterov2017random}, and by reverse triangle inequality, we have

\begin{align*}
    \mathbb{E}_{i-1}\left[\|\tilde{\xi}_{t}(x_{i-1})\|^{2}\right]\leq
   K (\|\tilde{\Delta}_{i-1}\|^{2}+\nu^2)+2\mathrm{tr}(\tilde{S})
\end{align*}
for some constant $K$ that depends on $d$ and $L$ (all of which are assumed to be fixed in our setting). Hence, this result immediately implies that 
\begin{align*}
    \mathbb{E}_{i-1}\left[\|\tilde{\xi}_{i}(x_{i-1})\|^{2}\right]+\|\nabla f_{\nu}(x_{i-1})\|^{2}&=\mathbb{E}_{i-1}\left[\|\tilde{\xi}_{i}(x_{i-1})\|^{2}\right]+\|\nabla f_{\nu}(x_{i-1})-\nabla f_{\nu}(x_{\nu}^{*})\|^{2}\\
    &\leq K(\|\tilde{\Delta}_{i-1}\|^{2}+\nu^2)+2\mathrm{tr}(\tilde{S})+L^{2}\|\tilde{\Delta}_{i-1}\|^{2}\\
    &\leq \tilde{K}(1+\|\tilde{\Delta}_{i-1}\|^{2})
\end{align*}
where $\tilde{K}=\max\left\{K\nu^2+2\mathrm{tr}(\tilde{S}), K+L^2\right\}$.
\end{proof}

\begin{proof}[Proof of Lemma~\ref{surproblm2}]
\textcolor{black}{
Note that 
\begin{align*}
\nabla_{x}G_{\nu}(x, \zeta,u) &= \nabla_{x} \left(\frac{F\left(x+\nu u, \zeta\right)-F\left(x, \zeta\right)}{\nu} u\right)=\nabla_{x}\begin{bmatrix}
\frac{F\left(x+\nu u, \zeta\right)-F\left(x, \zeta\right)}{\nu}u_{1}\\
\vdots\\
\frac{F\left(x+\nu u, \zeta\right)-F\left(x, \zeta\right)}{\nu}u_{d}
\end{bmatrix}\\
&=\begin{bmatrix}
\frac{\partial}{\partial x_{1}}\frac{F\left(x+\nu u, \zeta\right)-F\left(x, \zeta\right)}{\nu}u_{1} & \dots & \frac{\partial}{\partial x_{d}}\frac{F\left(x+\nu u, \zeta\right)-F\left(x, \zeta\right)}{\nu}u_{1}\\
\vdots & \ddots & \vdots\\
\frac{\partial}{\partial x_{1}}\frac{F\left(x+\nu u, \zeta\right)-F\left(x, \zeta\right)}{\nu}u_{d} & \dots & \frac{\partial}{\partial x_{d}}\frac{F\left(x+\nu u, \zeta\right)-F\left(x, \zeta\right)}{\nu}u_{d}
\end{bmatrix}\\
&=\begin{bmatrix}
u_{1}\\
\vdots\\
u_{d}
\end{bmatrix}\begin{bmatrix}
\frac{\partial}{\partial x_{1}}\frac{F\left(x+\nu u, \zeta\right)-F\left(x, \zeta\right)}{\nu} & \dots & \frac{\partial}{\partial x_{d}}\frac{F\left(x+\nu u, \zeta\right)-F\left(x, \zeta\right)}{\nu}
\end{bmatrix}\\
&=u\left(\frac{\nabla F\left(x+\nu u, \zeta\right)-\nabla F\left(x, \zeta\right)}{\nu}\right)^{T}\\
&=u\left(\nabla^{2}F(x+\theta\nu u,\zeta)u\right)^{T}=uu^{T}\nabla^{2}F(x+\theta\nu u,\zeta)
\end{align*}
where $u\sim N(0,I_{d})$. Hence
\begin{align*}
\left\|\nabla_{x}G_{\nu}(x, \zeta,u)\right\|_{*}&\leq \|uu^{T}\|_{*}\cdot\|\nabla^{2}F(x+\theta\nu u,\zeta)\|_{*}\\
&\leq \|uu^{T}\|_{*}\cdot H(\zeta)\stackrel{\Delta}{=}\tilde{H}(\zeta,u)
\end{align*}
Under Assumption \ref{simpasp4.3}, $H(\zeta)$ has bounded fourth moment and
$$\mathbb{E}\tilde{H}(\zeta,u)^{4}=\mathbb{E}\|uu^T\|_{*}^{4}H(\zeta)^{4}\leq \mathbb{E}\|uu^T\|_{*}^{4}\mathbb{E}H(\zeta)^{4}<\infty$$
Hence, $\tilde{H}(\zeta,u)$ has bounded fourth moment.
}
\end{proof}
{\color{black}
\begin{proof}[Proof of Lemma~\ref{zerolm1}]\color{black}
	\begin{itemize}
		\item[(a)] $\mathbb{E}\left[\tilde{m}_{t}(0)\right]=\mathbb{E}\left[G_\nu(x_{\nu}^{*},\zeta_t,u_{t}) - \nabla f_\nu(x_{\nu}^{*})\right]=0$
		\item[(b)] $\mathbb{E}_{t-1}\left[\tilde{m}_{t}(0)\tilde{m}_{t}(0)^{T}\right]= S_{\nu}$ for all $t\geq 1$ where 
		\begin{align*}
		S_{\nu}&=\mathbb{E}_{t-1}\left[(G_\nu(x_{\nu}^{*},\zeta_t,u_{t}) - \nabla f_\nu(x_{\nu}^{*}))(G_\nu(x_{\nu}^{*},\zeta_t,u_{t}) - \nabla f_\nu(x_{\nu}^{*}))^{T}\right]\\
		&=\mathbb{E}_{t-1}G_{\nu}(x_{\nu}^{*},u_{t},\zeta_{t})G_{\nu}(x_{\nu}^{*},u_{t},\zeta_{t})^{T}    
		\end{align*} 
		\item[(c)] Note that
		\begin{align*}
		\mathbb{E}_{t-1}\left[\|\tilde{m}_{t}(0)\|_{2}^{2}\right]&=\mathbb{E}_{t-1}\left[\tilde{m}_{t}(0)^{T}\tilde{m}_{t}(0)\right]=\mathbb{E}_{t-1}\left[\mathrm{tr}(\tilde{m}_{t}(0)^{T}\tilde{m}_{t}(0))\right]\\
		&=\mathbb{E}_{t-1}\left[\mathrm{tr}(\tilde{m}_{t}(0)\tilde{m}_{t}(0)^{T})\right]=\mathrm{tr}\mathbb{E}_{t-1}\left[\tilde{m}_{t}(0)\tilde{m}_{t}(0)^{T}\right]\\
		&=\mathrm{tr}(S_{\nu})<\infty
		\end{align*}
		For any fixed constanc $C$, we have following decomposition
		$$\mathbb{E}_{t-1}\left[\|\tilde{m}_{t}(0)\|_{2}^{2}\right]=\mathbb{E}_{t-1}\left[\|\tilde{m}_{t}(0)\|_{2}^{2}I(\|\tilde{m}_{t}(0)\|_{2}>C)\right]+\mathbb{E}_{t-1}\left[\|\tilde{m}_{t}(0)\|_{2}^{2}I(\|\tilde{m}_{t}(0)\|_{2}\leq C)\right]$$
		Since $\|\tilde{m}_{t}(0)\|_{2}^{2}I(\|\tilde{m}_{t}(0)\|_{2}\leq C)$ is monotone increasing in $C$, using monotone convergence theorem yields $$\mathbb{E}_{t-1}\left[\|\tilde{m}_{t}(0)\|_{2}^{2}I(\|\tilde{m}_{t}(0)\|_{2}\leq C)\right]\rightarrow \mathbb{E}_{t-1}\left[\|\tilde{m}_{t}(0)\|_{2}^{2}\right]$$
		and by the fact $\mathbb{E}_{t-1}\left[\|\tilde{m}_{t}(0)\|_{2}^{2}\right]<\infty$ we can conclude that 
		$$\mathbb{E}_{t-1}\left[\|\tilde{m}_{t}(0)\|_{2}^{2}I(\|\tilde{m}_{t}(0)\|_{2}>C)\right]\rightarrow 0$$
		\item[(d)] By mean-value theorem,
		\begin{align*}
		\mathbb{E}_{t-1}\left\|G_\nu(x_{t-1}^{*},\zeta_{t},u_{t})-G_\nu(x_{\nu}^{*},\zeta_{t},u_{t})\right\|_{2}^{2} &\leq \mathbb{E}_{t-1}\left[\sup_{x}\left\|\nabla_{x} G_{\nu}(x, \zeta,u)\right\|_{*}^{2}\right]\left\|\tilde{\Delta}_{t-1}\right\|_{2}^{2}\\
		&\leq \mathbb{E} \tilde{H}(\zeta,u)^{2}\left\|\tilde{\Delta}_{t-1}\right\|_{2}^{2}
		\end{align*}
		By the convexity of $\|x\|_{2}^{2}$ and Jensen's inequality,
		\begin{align*}
		\left\|\nabla f_{\nu}\left(x_{t-1}\right)-\nabla f_{\nu}\left(x^{*}\right)\right\|_{2}^{2}&=\left\|\mathbb{E}_{t-1}\left(G_\nu(x_{t-1}^{*},\zeta_{t},u_{t})-G_\nu(x_{\nu}^{*},\zeta_{t},u_{t})\right)\right\|_{2}^{2}\\
		&\leq \mathbb{E}_{t-1}\left\|G_\nu(x_{t-1}^{*},\zeta_{t},u_{t})-G_\nu(x_{\nu}^{*},\zeta_{t},u_{t})\right\|_{2}^{2}\\
		&\leq \mathbb{E} \tilde{H}(\zeta,u)^{2}\left\|\tilde{\Delta}_{t-1}\right\|_{2}^{2}
		\end{align*}
		then as $\tilde{\Delta}_{t-1} \rightarrow 0$
		\begin{align*}
		\mathbb{E}_{t-1}\left\|\varphi_{t}\left(\Delta_{t-1}\right)\right\|_{2}^{2} &\leq \mathbb{E} \tilde{H}(\zeta,u)^{2}\left\|\tilde{\Delta}_{t-1}\right\|_{2}^{2}\stackrel{\Delta}{=}\tilde{\delta}(\tilde{\Delta}_{t-1},u_{t})\rightarrow 0
		\end{align*}
	\end{itemize}
\end{proof}
}
{\color{black}
\begin{proof}[Proof of Lemma~\ref{varerrorlemma1}]
Using Taylor's theorem, we have for every realization of $\zeta$, 
\begin{align*}
G_\nu(x_{\nu}^{*},\zeta,u)&= \frac{F(x_{\nu}^{*} + \nu u, \zeta) - F(x_{\nu}^{*},\zeta,u)}{\nu} u\\
&=u\left(\frac{1}{\nu}\nu u^{T}\nabla F(x_{\nu}^{*}  + \theta\nu u, \zeta)\right)\\
&=uu^{T}\nabla F(x_{\nu}^{*}  + \theta\nu u, \zeta)
\end{align*}
where $\theta\in (0,1)$. Therefore,
\begin{align*}
\|S_{\nu}-S\|_{F} &=\left\|\mathbb{E}uu^{T}\nabla F\left(x_{\nu}^{*}+\theta \nu u, \zeta\right) \nabla F\left(x_{\nu}^{*}+\theta\nu u, \zeta\right)^{T}uu^T-\mathbb{E}\nabla F(x^{*},\zeta)\nabla F(x^{*},\zeta)^{T}\right\|_{F}\\
&=\left\|\mathbb{E}uu^{T}\left(\nabla F\left(x_{\nu}^{*}+\theta \nu u, \zeta\right) \nabla F\left(x_{\nu}^{*}+\theta\nu u, \zeta\right)^{T}-\nabla F(x^{*},\zeta)\nabla F(x^{*},\zeta)^{T}\right)uu^T\right.\\
&\quad\quad\left.+\mathbb{E}uu^T\nabla F(x^{*},\zeta)\nabla F(x^{*},\zeta)^{T}uu^T-\mathbb{E}\nabla F(x^{*},\zeta)\nabla F(x^{*},\zeta)^{T}\right\|_{F}\\
&\leq T_{1}+T_{2}
\end{align*}
where
\begin{align*}
T_{1} &= \left\|\mathbb{E}uu^{T}\left(\nabla F\left(x_{\nu}^{*}+\theta \nu u, \zeta\right) \nabla F\left(x_{\nu}^{*}+\theta\nu u, \zeta\right)^{T}-\nabla F(x^{*},\zeta)\nabla F(x^{*},\zeta)^{T}\right)uu^T\right\|_{F}\\
T_{2} &= \left\|\mathbb{E}uu^T\nabla F(x^{*},\zeta)\nabla F(x^{*},\zeta)^{T}uu^T-\mathbb{E}\nabla F(x^{*},\zeta)\nabla F(x^{*},\zeta)^{T}\right\|_{F}
\end{align*}
By Jensen's inequality, we have
\begin{align*}
T_{1} &= \left\|\mathbb{E}uu^{T}\left(\nabla F\left(x_{\nu}^{*}+\theta \nu u, \zeta\right) \nabla F\left(x_{\nu}^{*}+\theta\nu u, \zeta\right)^{T}-\nabla F(x^{*},\zeta)\nabla F(x^{*},\zeta)^{T}\right)uu^T\right\|_{F}\\
&\leq \mathbb{E}\left\|uu^{T}\left(\nabla F\left(x_{\nu}^{*}+\theta \nu u, \zeta\right) \nabla F\left(x_{\nu}^{*}+\theta\nu u, \zeta\right)^{T}-\nabla F(x^{*},\zeta)\nabla F(x^{*},\zeta)^{T}\right)uu^T\right\|_{F}\\
&\leq \mathbb{E}\left\|\nabla F\left(x_{\nu}^{*}+\theta \nu u, \zeta\right) \nabla F\left(x_{\nu}^{*}+\theta\nu u, \zeta\right)^{T}-\nabla F(x^{*},\zeta)\nabla F(x^{*},\zeta)^{T}\right\|_{F}\|u\|_{2}^{4}\\
&\leq \sqrt{\mathbb{E}\left\|\nabla F\left(x_{\nu}^{*}+\theta \nu u, \zeta\right) \nabla F\left(x_{\nu}^{*}+\theta\nu u, \zeta\right)^{T}-\nabla F(x^{*},\zeta)\nabla F(x^{*},\zeta)^{T}\right\|_{F}^{2}\mathbb{E}\|u\|_{2}^{8}}
\end{align*}
Note that
\begin{align*}
T_{2}&=\|\mathbb{E}\left[uu^T\nabla F(x^{*},\zeta)\nabla F(x^{*},\zeta)^{T} uu^{T}\right]-\mathbb{E}\nabla F(x^{*},\zeta)\nabla F(x^{*},\zeta)^{T} \|_{F}\\
&=\|\mathbb{E}_{u}\left[uu^T\mathbb{E}_{\zeta}\nabla F(x^{*},\zeta)\nabla F(x^{*},\zeta)^{T} uu^{T}\right]-\mathbb{E}_{\zeta}\nabla F(x^{*},\zeta)\nabla F(x^{*},\zeta)^{T} \|_{F}\\
&=\|\mathbb{E}_{u}\left[uu^TS uu^{T}\right]-S \|_{F}
\end{align*}
Let $A=uu^T=(a_{ij})_{i,j=1}^{d}$ and $P=uu^TS uu^{T}$, then $\mathbb{E}a_{ij}=\mathbb{E}u_{i}u_{j}=0$ for $i\not=j$, $P=ASA$ and
$$p_{ij}=\sum_{l=1}^{d}\sum_{k=1}^{d}a_{ik}s_{kl}a_{lj}$$
Taking expectation w.r.t. $u$ gives
$$\mathbb{E}_{u}p_{ij}=\mathbb{E}_{u}a_{ii}s_{ij}a_{jj}=s_{ij}$$
so $T_{2}=0$. On the other hand, note that
\begin{align*}
&\quad\quad\mathbb{E}\|\nabla F(x_{\nu}^{*}+\theta\nu u,\zeta)\nabla F(x_{\nu}^{*}+\theta\nu u,\zeta)^{T}-\nabla F(x^{*},\zeta)\nabla F(x^{*},\zeta)^{T}\|_{F}^{2}\\
&=\mathbb{E}\left\|\nabla F(x_{\nu}^{*}+\theta\nu u,\zeta)\nabla F(x_{\nu}^{*}+\theta\nu u,\zeta)^{T}-\nabla F(x^{*},\zeta)\nabla F(x_{\nu}^{*}+\theta\nu u,\zeta)^{T}\right.\\
&\quad\quad\quad\left.+\nabla F(x^{*},\zeta)\nabla F(x_{\nu}^{*}+\theta\nu u,\zeta)^{T}-\nabla F(x^{*},\zeta)\nabla F(x^{*},\zeta)^{T}\right\|_{F}^{2}\\
&\leq \mathbb{E}\left\|\nabla F(x_{\nu}^{*}+\theta\nu u,\zeta)\nabla F(x_{\nu}^{*}+\theta\nu u,\zeta)^{T}-\nabla F(x^{*},\zeta)\nabla F(x_{\nu}^{*}+\theta\nu u,\zeta)^{T}\right\|_{F}^{2}\\
&\quad\quad\quad+\mathbb{E}\left\|\nabla F(x^{*},\zeta)\nabla F(x_{\nu}^{*}+\theta\nu u,\zeta)^{T}-\nabla F(x^{*},\zeta)\nabla F(x^{*},\zeta)^{T}\right\|_{F}^{2}\\
&= E_{1}+E_{2}
\end{align*}
where 
\begin{align*}
E_{1}&:=\mathbb{E}\left\|\nabla F(x_{\nu}^{*}+\theta\nu u,\zeta)\nabla F(x_{\nu}^{*}+\theta\nu u,\zeta)^{T}-\nabla F(x^{*},\zeta)\nabla F(x_{\nu}^{*}+\theta\nu u,\zeta)^{T}\right\|_{F}^{2}\\
E_{2}&:=\mathbb{E}\left\|\nabla F(x^{*},\zeta)\nabla F(x_{\nu}^{*}+\theta\nu u,\zeta)^{T}-\nabla F(x^{*},\zeta)\nabla F(x^{*},\zeta)^{T}\right\|_{F}^{2}
\end{align*}
\begin{align*}
E_{1}&=\mathbb{E}_{\zeta,u}\|(\nabla F(x_{\nu}^{*}+\theta\nu u,\zeta)-\nabla F(x^{*},\zeta))(\nabla F(x_{\nu}^{*}+\theta\nu u,\zeta))^{T}\|_{F}^{2}\\
&\leq \mathbb{E}_{\zeta,u}\|\nabla F(x_{\nu}^{*}+\theta\nu u,\zeta)-\nabla F(x^{*},\zeta)\|_{F}^{2}\|(\nabla F(x_{\nu}^{*}+\theta\nu u,\zeta))^{T}\|_{F}^{2}\\
&\leq \mathbb{E}_{\zeta,u}H(\zeta)^{2}\|x_{\nu}^{*}+\theta\nu u-x^{*}\|_{2}^{2}\|\nabla F(x_{\nu}^{*}+\theta\nu u,\zeta)\|_{2}^{2}
\end{align*}
Note that for any $a,b\in\mathbb{R}^{d}$, we have
$$\frac{1}{2}\|a\|^{2}-\|b\|^{2}\leq \|a-b\|^{2}$$
Therefore, applying this inequality with $a=\nabla F(x_{\nu}^{*}+\theta\nu u,\zeta)$ and $b=\nabla F(x^{*},\zeta)$ yields
\begin{align*}
\|\nabla F(x_{\nu}^{*}+\theta\nu u,\zeta)\|_{2}^{2} &\leq 2\|\nabla F(x_{\nu}^{*}+\theta\nu u,\zeta)-\nabla F(x^{*},\zeta)\|_{2}^{2}+\|\nabla F(x^{*},\zeta)\|_{2}^{2}\\
&\leq 2H(\zeta)^{2}\|x_{\nu}^{*}+\theta\nu u-x^{*}\|_{2}^{2}+\|\nabla F(x^{*},\zeta)\|_{2}^{2}
\end{align*}
Therefore,
\begin{align*}
E_{1}&\leq \mathbb{E}_{\zeta,u}\left[H(\zeta)^{2}\|x_{\nu}^{*}+\theta\nu u-x^{*}\|_{2}^{2}\left(2H(\zeta)^{2}\|x_{\nu}^{*}+\theta\nu u-x^{*}\|_{2}^{2}+\|\nabla F(x^{*},\zeta)\|_{2}^{2}\right)\right]\\
&\leq 2\mathbb{E}_{\zeta,u}\left[H(\zeta)^{4}\|x_{\nu}^{*}+\theta\nu u-x^{*}\|_{2}^{4}\right]+\mathbb{E}_{\zeta,u}\left[H(\zeta)^{2}\|x_{\nu}^{*}+\theta\nu u-x^{*}\|_{2}^{2}\|\nabla F(x^{*},\zeta)\|_{2}^{2}\right]\\
&=2\mathbb{E}_{\zeta}H(\zeta)^{4}\mathbb{E}_{u}\|x_{\nu}^{*}+\theta\nu u-x^{*}\|_{2}^{4}+\mathbb{E}_{\zeta}H(\zeta)^{2}\|\nabla F(x^{*},\zeta)\|_{2}^{2}\mathbb{E}_{u}\|x_{\nu}^{*}+\theta\nu u-x^{*}\|_{2}^{2}\\
&\leq 2\mathbb{E}_{\zeta}H(\zeta)^{4}\mathbb{E}_{u}\|x_{\nu}^{*}+\theta\nu u-x^{*}\|_{2}^{4}+\mathbb{E}_{u}\|x_{\nu}^{*}+\theta\nu u-x^{*}\|_{2}^{2}\sqrt{\mathbb{E}_{\zeta}H(\zeta)^{4}\mathbb{E}_{\zeta}\|\nabla F(x^{*},\zeta)\|_{2}^{4}}
\end{align*}
Note that for a Gaussian random vector $u \sim N\left(0, I_{d}\right)$, by Theorem 1.1 in \cite{balasubramanian2018zeroth} we have that 
$$\mathbb{E}\left[\|u\|_{2}^{k}\right] \leq(d+k)^{k / 2}$$
for any $k\geq 2$. Therefore,
\begin{align*}
\mathbb{E}_{u}\|x_{\nu}^{*}+\theta\nu u-x^{*}\|_{2}^{2}&\leq \|x_{\nu}^{*}-x^{*}\|_{2}^{2}+\theta^{2}\nu^{2}\mathbb{E}_{u}\|u\|_{2}^{2}\\
&\leq \frac{\nu^{2} L^{2}(d+3)^{3}}{4\mu^{2}}+\nu^{2}(d+2)\theta^{2}\\
\mathbb{E}_{u}\|x_{\nu}^{*}+\theta\nu u-x^{*}\|_{2}^{4}
&\leq \|x_{\nu}^{*}-x^{*}\|_{2}^{4}+\theta^{4}\nu^{4}\mathbb{E}_{u}\|u\|_{2}^{4}\\
&\leq \frac{\nu^{4} L^{4}(d+3)^{6}}{16\mu^{4}}+\nu^{4}(d+4)^{2}\theta^{4}
\end{align*}
Therefore, if 
\begin{align*}
&\mathbb{E}_{\zeta}H(\zeta)^{4}<\infty\\
&\mathbb{E}_{\zeta}\|\nabla F(x^{*},\zeta)\|_{2}^{4}<\infty
\end{align*}
we have that
\begin{equation*}
E_{1} \leq \mathcal{O}(\nu^{4})\quad\quad\text{and}\quad\quad E_{1}\rightarrow 0\quad\text{as}\quad\nu \rightarrow 0
\end{equation*}
Similarly,
\begin{align*}
E_{2} &=\mathbb{E}_{\zeta,u}\|\nabla F(x^{*},\zeta)(\nabla F(x_{\nu}^{*}+\theta\nu u,\zeta)-\nabla F(x^{*},\zeta))^{T}\|_{F}^{2}\\
&\leq \mathbb{E}_{\zeta,u}\|\nabla F(x^{*},\zeta)\|_{F}^{2}\|(\nabla F(x_{\nu}^{*}+\theta\nu u,\zeta)-\nabla F(x^{*},\zeta))^{T}\|_{F}^{2}\\
&= \mathbb{E}_{\zeta,u}\|\nabla F(x^{*},\zeta)\|_{F}^{2}\|\nabla F(x_{\nu}+\theta\nu u,\zeta)-\nabla F(x^{*},\zeta)\|_{F}^{2}\\
&\leq \mathbb{E}_{\zeta,u}\|\nabla F(x^{*},\zeta)\|_{F}^{2}H(\zeta)^{2}\|x_{\nu}^{*}+\theta\nu u-x^{*}\|_{2}^{2}\\
&=\mathbb{E}_{\zeta}\|\nabla F(x^{*},\zeta)\|_{2}^{2}H(\zeta)^{2}\mathbb{E}_{u}\|x_{\nu}^{*}+\theta\nu u-x^{*}\|_{2}^{2}\\
&\leq\sqrt{\mathbb{E}_{\zeta}\|\nabla F(x^{*},\zeta)\|_{2}^{4}\mathbb{E}_{\zeta}H(\zeta)^{4}}\mathbb{E}_{u}\|x_{\nu}^{*}+\theta\nu u-x^{*}\|_{2}^{2}\\
&\leq \left(\frac{\nu L(d+3)^{3/2}}{2\mu}+\nu^{2}(d+2)\theta^{2}\right)\sqrt{\mathbb{E}_{\zeta}\|\nabla F(x^{*},\zeta)\|_{2}^{4}\mathbb{E}_{\zeta}H(\zeta)^{4}}\\
&=\mathcal{O}(\nu^{2})
\end{align*}
Therefore, $E_{2}\rightarrow 0$ as $\nu \rightarrow 0$ and
$$\mathbb{E}\left\|\nabla F\left(x_{\nu}^{*}+\theta \nu u, \zeta\right) \nabla F\left(x_{\nu}^{*}+\theta\nu u, \zeta\right)^{T}-\nabla F(x^{*},\zeta)\nabla F(x^{*},\zeta)^{T}\right\|_{F}^{2}\leq \mathcal{O}(\nu^{4})$$
and by Theorem 1.1 in \cite{balasubramanian2018zeroth} we have that 
$$\mathbb{E}\|u\|_{2}^{8}\leq (d+8)^{4}$$
Finally, we obtain
\begin{align*}
\|S_{\nu}-S\|_{F}&\leq T_{1}\leq \sqrt{\mathbb{E}\left\|\nabla F\left(x_{\nu}^{*}+\theta \nu u, \zeta\right) \nabla F\left(x_{\nu}^{*}+\theta\nu u, \zeta\right)^{T}-\nabla F(x^{*},\zeta)\nabla F(x^{*},\zeta)^{T}\right\|_{F}^{2}\mathbb{E}\|u\|_{2}^{8}}\\
&\leq \mathcal{O}(\nu^{2})    
\end{align*}
\end{proof}}

\begin{proof}[Proof of Lemma~\ref{varerrorlemma2}]
Applying the matrix identity $A=\nabla^{2} f_{\nu}(x_{\nu}^{*})$ and $B=\nabla^{2} f(x^{*})$
$$A^{-1}-B^{-1}=A^{-1 }\left(B-A\right) B^{-1},$$
we have
\begin{align*}
    \|\nabla^{2} f_{\nu}(x_{\nu}^{*})^{-1}-\nabla^{2} f(x^{*})^{-1}\|_{F}^{2}&=\|\nabla^{2} f_{\nu}(x_{\nu}^{*})^{-1}(\nabla^{2} f(x^{*})-\nabla^{2} f_{\nu}(x_{\nu}^{*}))\nabla^{2} f(x^{*})^{-1}\|_{F}^{2}\\
    &\leq \|\nabla^{2} f(x^{*})\|_{F}^{2}\|\nabla^{2} f_{\nu}(x_{\nu}^{*})^{-1}\|_{F}^{2}\|\nabla^{2} f_{\nu}(x_{\nu}^{*})-\nabla^{2} f(x^{*})\|_{F}^{2}.
\end{align*}
Note that 
\begin{align*}
    \|\nabla^{2} f_{\nu}(x_{\nu}^{*})-\nabla^{2} f(x^{*})\|_{F}^{2} &=\|\nabla^{2} f_{\nu}(x_{\nu}^{*})-\nabla^{2} f(x_{\nu}^{*})+\nabla^{2} f(x_{\nu}^{*})-\nabla^{2} f(x^{*})\|_{F}^{2}\\
    &\leq \|\nabla^{2} f_{\nu}(x_{\nu}^{*})-\nabla^{2} f(x_{\nu}^{*})\|_{F}^{2}+\|\nabla^{2} f(x_{\nu}^{*})-\nabla^{2} f(x^{*})\|_{F}^{2}.
\end{align*}
By Lemma 4.2 in \cite{balasubramanian2018zeroth}, we have 
$$\|\nabla^{2} f_{\nu}(x_{\nu}^{*})-\nabla^{2} f(x_{\nu}^{*})\|_{*}\leq \frac{L_{H} \nu(d+6)^{\frac{5}{2}}}{4}.$$
Note that $\operatorname{rank}\left(\nabla^{2} f_{\nu}(x_{\nu}^{*})-\nabla^{2} f(x_{\nu}^{*})\right)\leq d$. Hence by the relationship between $\|\cdot\|_{F}$ and $\|\cdot\|$, we have
$$\|\nabla^{2} f_{\nu}(x_{\nu}^{*})-\nabla^{2} f(x_{\nu}^{*})\|_{F}^{2}\leq d \|\nabla^{2} f_{\nu}(x_{\nu}^{*})-\nabla^{2} f(x_{\nu}^{*})\|^{2}\leq \frac{L_{H}^{2}\nu^{2}d(d+6)^{5}}{16}.$$
Hence
\begin{align*}
    \|\nabla^{2} f_{\nu}(x_{\nu}^{*})-\nabla^{2} f(x^{*})\|_{F}^{2}&\leq \frac{L_{H}^{2}\nu^{2}d(d+6)^{5}}{16}+L_{H}^{2}\|x_{\nu}-x^{*}\|^{2}\\
    &\leq \frac{L_{H}^{2}\nu^{2}d(d+6)^{5}}{16}+ L_{H}^{2}\frac{\nu^{2} L^{2}(d+3)^{3}}{4\mu^{2}}\\
    &=\mathcal{O}(\nu^{2}),
\end{align*}
and
\begin{equation*}
   \|\nabla^{2} f_{\nu}(x_{\nu}^{*})^{-1}-\nabla^{2} f(x^{*})^{-1}\|_{F}^{2}\leq d^{2}\frac{L^{2}}{\mu^{2}}\|\nabla^{2} f_{\nu}(x_{\nu}^{*})-\nabla^{2} f(x^{*})\|_{F}^{2}\leq \mathcal{O}(\nu^{2}).
\end{equation*}
Finally, note that
\begin{align*}
    \|\tilde{V}-V\|_{F}^{2}&=\|(\nabla^{2} f_{\nu}(x_{\nu}^{*}))^{-1}S_{\nu}\left(\nabla^{2} f_{\nu}(x_{\nu}^{*})\right)^{-1}-\left(\nabla^{2} f(x^{*})\right)S\left(\nabla^{2} f(x^{*})\right)^{-1}\|_{F}^{2}\\
    &\leq \|S_{\nu}\|_{F}^{2}\|\nabla^{2}f_{\nu}(x_{\nu}^{*})^{-1}\|_{F}^{2}\|\nabla^{2}f_{\nu}(x_{\nu}^{*})-\nabla^{2} f(x^{*})\|_{F}^{2}\\
    &\quad\quad+\|\nabla^{2}f(x^{*})\|_{F}^{2}\|\nabla^{2}f_{\nu}(x_{\nu}^{*})^{-1}\|_{F}^{2}\|S_{\nu}-S\|_{F}^{2}\\
    &\quad\quad+\|\nabla^{2}f(x^{*})\|_{F}^{2}\|S\|_{F}^{2} \|\nabla^{2} f_{\nu}(x_{\nu}^{*})^{-1}-\nabla^{2} f(x^{*})^{-1}\|_{F}^{2}\\
    &=N_{1}+N_{2}+N_{3}.
\end{align*}
Next, note that we have
\begin{align*}
    \|\nabla^{2}f(x^{*})\|_{F}^{2}&=\mathrm{tr}\nabla^{2} f(x^{*})^{T}\nabla^{2} f(x^{*})\leq dL^{2},\\
    \|\nabla^{2}f_{\nu}(x_{\nu}^{*})^{-1}\|_{F}^{2}&=\mathrm{tr}\left(\nabla^{2}f_{\nu}(x_{\nu}^{*})^{T}\nabla^{2}f_{\nu}(x_{\nu}^{*})\right)^{-1}\leq \frac{d}{\mu^{2}}.
\end{align*}
Combining the above displays, we obtain
$$N_{1} \leq \mathcal{O}(\nu^{6})
\quad\quad
N_{2}\leq \mathcal{O}(\nu^{4})
\quad\quad
N_{3}\leq \mathcal{O}(\nu^{2})
$$
and $N_{1}\rightarrow 0$, $N_{2}\rightarrow 0$ and $N_{3}\rightarrow 0$ as $\nu\rightarrow 0$. This yields that $\|\tilde{V}-V\|_{F}^{2}\leq \mathcal{O}(\nu^{6})$ and so $\|\tilde{V}-V\|_{F}^{2}\rightarrow 0$ as $\nu\rightarrow 0$.
\end{proof}
\section{Proofs for Section~\ref{sec:onlineest}}\label{sec:onlinest}
{\color{black}
\begin{proof}[Proof of Lemma~\ref{zeroonlinelm1}]\color{black}
	Define $$L\left(\tilde{\Delta}_{n-1}, u_{n},\zeta_{n}\right)=\left(\nabla f_{\nu}\left(x_{n-1}\right)-\nabla f_{\nu}\left(x_{\nu}^{*}\right)\right)-\left(G_{\nu}\left(x_{n-1}, u_{n},\zeta_{n}\right)-G_{\nu}\left(x_{\nu}^{*}, u_{n},\zeta_{n}\right)\right).$$ Note that $\nabla f_{\nu}\left(x_{\nu}^{*}\right)=0$, we have
	$$\tilde{\xi}_{n}=L\left(\tilde{\Delta}_{n-1},u_{n}, \zeta_{n}\right)-G_{\nu}\left(x_{\nu}^{*}, u_{n},\zeta_{n}\right)$$
	and this decomposition yields
	\begin{align*}
	\mathbb{E}_{n-1}\tilde{\xi}_{n}\tilde{\xi}_{n}^{T}=&S_{\nu}-\mathbb{E}_{n-1}L\left(\tilde{\Delta}_{n-1},u_{n}, \zeta_{n}\right)G_{\nu}\left(x_{\nu}^{*}, u_{n},\zeta_{n}\right)^{T}-\mathbb{E}_{n-1}G_{\nu}\left(x_{\nu}^{*}, u_{n},\zeta_{n}\right)L\left(\tilde{\Delta}_{n-1},u_{n}, \zeta_{n}\right)^{T}\\
	&\quad\quad+\mathbb{E}_{n-1}L\left(\tilde{\Delta}_{n-1},u_{n}, \zeta_{n}\right)L\left(\tilde{\Delta}_{n-1}, u_{n},\zeta_{n}\right)^{T}
	\end{align*}
	Now for $m=2,4$, by mean-value theorem we have
	\begin{align*}
	\mathbb{E}_{n-1}\left\|G_{\nu}(x_{n-1},u_{n},\zeta_{n})-G_{\nu}(x_{\nu}^{*},u_{n},\zeta_{n})\right\|_{2}^{m}&\leq \mathbb{E}_{n-1}\left[\sup _{x}\left\|\nabla_{x} G_{\nu}\left(x,u_{n}, \zeta_{n}\right)\right\|^{m}\right]\left\|\tilde{\Delta}_{n-1}\right\|_{2}^{m}\\
	&\leq\left\|\tilde{\Delta}_{n-1}\right\|_{2}^{m} \mathbb{E} \tilde{H}(\zeta)^{m} .
	\end{align*}
	and by Jensen's inequality,
	\begin{align*}
	\left\|\nabla f_{\nu}(x_{n-1})-\nabla f_{\nu}(x_{\nu}^{*})\right\|_{2}^{m} &= \left\|\mathbb{E}_{n-1}G_{\nu}(x_{n-1},u_{n},\zeta_{n})-G_{\nu}(x_{\nu}^{*},u_{n},\zeta_{n})\right\|_{2}^{m}\\
	&\leq \mathbb{E}_{n-1}\left\|G_{\nu}(x_{n-1},u_{n},\zeta_{n})-G_{\nu}(x_{\nu}^{*},u_{n},\zeta_{n})\right\|_{2}^{m}
	\end{align*}
	Using the inequality (given in Lemma B.1 in \cite{chen2016statistical})
	$$n^{3}\left(\left\|x_{1}\right\|_{2}^{4}+\ldots+\left\|x_{n}\right\|_{2}^{4}\right) \geq\left\|x_{1}+\ldots+x_{n}\right\|_{2}^{4}$$
	gives us for $m=2,4$
	$$\mathbb{E}_{n-1}\left\|L\left(\tilde{\Delta}_{n-1}, u_{n},\zeta_{n}\right)\right\|_{2}^{m} \leq 2^{m}\left\|\tilde{\Delta}_{n-1}\right\|_{2}^{m} \mathbb{E} \tilde{H}(\zeta)^{m}$$
	Now $\left\|\mathbb{E}_{n-1}\tilde{\xi}_{n}\tilde{\xi}_{n}^{T}-S_{\nu}\right\|_{2}$ could be bounded by
	\begin{align*}
	\left\|\mathbb{E}_{n-1}\tilde{\xi}_{n}\tilde{\xi}_{n}^{T}-S_{\nu}\right\|\leq &\left\|\mathbb{E}_{n-1}L\left(\tilde{\Delta}_{n-1},u_{n}, \zeta_{n}\right)G_{\nu}\left(x_{\nu}^{*}, u_{n},\zeta_{n}\right)^{T}\right\|+\left\|\mathbb{E}_{n-1}G_{\nu}\left(x_{\nu}^{*}, u_{n},\zeta_{n}\right)L\left(\tilde{\Delta}_{n-1},u_{n}, \zeta_{n}\right)^{T}\right\|\\
	&\quad\quad+\left\|\mathbb{E}_{n-1}L\left(\tilde{\Delta}_{n-1},u_{n}, \zeta_{n}\right)L\left(\tilde{\Delta}_{n-1},u_{n}, \zeta_{n}\right)^{T}\right\|
	\end{align*} 
	with
	\begin{align*}
	\left\|\mathbb{E}_{n-1}L\left(\tilde{\Delta}_{n-1},u_{n}, \zeta_{n}\right)G_{\nu}\left(x_{\nu}^{*}, u_{n},\zeta_{n}\right)^{T}\right\| &\leq \mathbb{E}_{n-1}\left\|L\left(\tilde{\Delta}_{n-1},u_{n}, \zeta_{n}\right)G_{\nu}\left(x_{\nu}^{*}, u_{n},\zeta_{n}\right)^{T}\right\|\\
	&\leq \mathbb{E}_{n-1}\left\|L\left(\tilde{\Delta}_{n-1},u_{n}, \zeta_{n}\right)\right\|_{2}\left\|G_{\nu}\left(x_{\nu}^{*}, u_{n},\zeta_{n}\right)^{T}\right\|_{2}\\
	&\leq \sqrt{\mathbb{E}_{n-1}\left\|L\left(\tilde{\Delta}_{n-1},u_{n}, \zeta_{n}\right)\right\|_{2}^{2}\mathbb{E}_{n-1}\left\|G_{\nu}\left(x_{\nu}^{*}, u_{n},\zeta_{n}\right)\right\|_{2}^{2}}\\
	&\leq 2\sqrt{\mathbb{E}_{n-1}\left\|G_{\nu}\left(x_{\nu}^{*}, u_{n},\zeta_{n}\right)\right\|_{2}^{2}\mathbb{E}\tilde{H}(\zeta)^{2}}\|\tilde{\Delta}_{n-1}\|_{2}
	\end{align*}
	and 
	\begin{align*}
	\left\|\mathbb{E}_{n-1}L\left(\tilde{\Delta}_{n-1},u_{n}, \zeta_{n}\right)L\left(\tilde{\Delta}_{n-1},u_{n}, \zeta_{n}\right)^{T}\right\| &\leq \mathbb{E}_{n-1}\left\|L\left(\tilde{\Delta}_{n-1},u_{n}, \zeta_{n}\right)L\left(\tilde{\Delta}_{n-1},u_{n}, \zeta_{n}\right)^{T}\right\|\\
	&\leq \mathbb{E}_{n-1}\left\|L\left(\tilde{\Delta}_{n-1},u_{n}, \zeta_{n}\right)\right\|_{2}^{2}\\
	&\leq 4\|\tilde{\Delta}_{n-1}\|_{2}^{2}\mathbb{E}\tilde{H}(\zeta)^{2}
	\end{align*}
	Combining these yields
	\begin{equation*}
	\left\|\mathbb{E}_{n-1}\tilde{\xi}_{n}\tilde{\xi}_{n}^{T}-S_{\nu}\right\|\leq 4\sqrt{\mathbb{E}_{n-1}\left\|G_{\nu}\left(x_{\nu}^{*}, u_{n},\zeta_{n}\right)\right\|_{2}^{2}\mathbb{E}\tilde{H}(\zeta)^{2}}\|\tilde{\Delta}_{n-1}\|_{2}+4\mathbb{E}\tilde{H}(\zeta)^{2}\|\tilde{\Delta}_{n-1}\|_{2}^{2}
	\end{equation*}
	Similar results hold for the trace since 
	\begin{align*}
	\left|\mathrm{tr}\left(\mathbb{E}_{n-1}L\left(\tilde{\Delta}_{n-1},u_{n}, \zeta_{n}\right)G_{\nu}\left(x_{\nu}^{*}, u_{n},\zeta_{n}\right)^{T}\right)\right| &\leq\mathbb{E}_{n-1} \left|\mathrm{tr}\left(L\left(\tilde{\Delta}_{n-1},u_{n}, \zeta_{n}\right)G_{\nu}\left(x_{\nu}^{*}, u_{n},\zeta_{n}\right)^{T}\right)\right|\\
	&\leq \mathbb{E}_{n-1}\left\|L\left(\tilde{\Delta}_{n-1},u_{n}, \zeta_{n}\right)\right\|_{2}\left\|G_{\nu}\left(x_{\nu}^{*}, u_{n},\zeta_{n}\right)^{T}\right\|_{2}
	\end{align*}
	and
	\begin{align*}
	\left|\mathrm{tr}\left(\mathbb{E}_{n-1} L\left(\Delta_{n-1},u_{n}, \zeta_{n}\right) L\left(\Delta_{n-1},u_{n}, \zeta_{n}\right)^{T}\right)\right| &\leq \mathbb{E}_{n-1}\left|\mathrm{tr}\left( L\left(\Delta_{n-1},u_{n}, \zeta_{n}\right) L\left(\Delta_{n-1},u_{n}, \zeta_{n}\right)^{T}\right)\right|\\
	&\leq \mathbb{E}_{n-1}\left\|L\left(\tilde{\Delta}_{n-1}, u_{n}, \zeta_{n}\right)\right\|_{2}^{2}
	\end{align*}
	Finally,
	\begin{align*}
	\mathbb{E}_{n-1}\left\|\tilde{\xi}_{n}\right\|_{2}^{4} & = \mathbb{E}_{n-1}\left\|L\left(\tilde{\Delta}_{n-1},u_{n}, \zeta_{n}\right)-G_{\nu}\left(x_{\nu}^{*}, u_{n},\zeta_{n}\right)\right\|_{2}^{4}\\
	&\leq 8\mathbb{E}_{n-1}\left\|L\left(\tilde{\Delta}_{n-1},u_{n}, \zeta_{n}\right)\right\|_{2}^{4}+8\mathbb{E}_{n-1}\left\|G_{\nu}\left(x_{\nu}^{*}, u_{n},\zeta_{n}\right)\right\|_{2}^{4}\\
	&\leq 8\mathbb{E}_{n-1}\left\|G_{\nu}\left(x_{\nu}^{*}, u_{n},\zeta_{n}\right)\right\|_{2}^{4}+128 \mathbb{E} \tilde{H}(\zeta)^{4} \left\|\tilde{\Delta}_{n-1}\right\|_{2}^{4}
	\end{align*}
\end{proof}
}

In \cite{zhu2020fully}, where the authors consider the stochastic first-order setting, all parameters that are related to the function $f(x)$ are hidden in a general constant and are not carefully tracked. As we are interested in dependency of the bound on the smoothing parameter $\nu$ in stochastic zeroth-order setting, we do a more careful analysis by tracking the explicit dependency of the bounds on $\nu$. The parameter $\nu$ mainly comes from four sources: $\|S_{\nu}\|$, $\mathbb{E}\|\tilde{\xi}_{p}\|$, $\mathbb{E}\tilde{\xi}_{p}\tilde{\xi}_{p}^{T}$ and $\|\tilde{\Delta}_{0}\|$. Besides, the scalar $\tilde{\Sigma}_{1}$, $\tilde{\Sigma}_{3}$ and $C_{d}=\max \left\{L, \tilde{\Sigma}_{1}^{\frac{2}{3}}, \tilde{\Sigma}_{2}^{\frac{1}{2}}, \tilde{\Sigma}_{3}^{\frac{1}{2}}, \tilde{\Sigma}_{4}^{\frac{1}{4}}, \operatorname{tr}\left(S_{\nu}\right)\right\}$ also depend on the parameter $\nu$. 

{\color{black}
For $\|\tilde{\Delta}_{0}\|_{2}$, we have
$$\|\tilde{\Delta}_{0}\|_{2}\leq \|x_{0}-x^{*}\|_{2}+\|x^{*}-x_{\nu}^{*}\|_{2}\lesssim \mathcal{O}(1+\nu)$$
and similarly
$$\|\tilde{\Delta}_{0}\|_{2}^{2}\lesssim \mathcal{O}(1+\nu+\nu^{2})$$
We are also interested how these four constants $\tilde{\Sigma}_{i}$'s and $\mathrm{tr}(S_{\nu})$ are related to the parameter $\nu$ where $\tilde{\Sigma}_{1}=4\sqrt{\mathbb{E}_{n-1}\left\|G_{\nu}\left(x_{\nu}^{*}, u_{n},\zeta_{n}\right)\right\|_{2}^{2}\mathbb{E}\tilde{H}(\zeta)^{2}}$ and $\tilde{\Sigma}_{3}=8\mathbb{E}_{n-1}\left\|G_{\nu}\left(x_{\nu}^{*}, u_{n},\zeta_{n}\right)\right\|_{2}^{4}$. Note that $$\mathbb{E}\left[\|u\|_{2}^{k}\right] \leq(d+k)^{k/2}$$
hence
\begin{align*}
\mathbb{E}\left\|G_{\nu}\left(x_{\nu}^{*},u, \zeta\right)\right\|_{2}^{2} &=\mathbb{E}\left\|uu^{T}\nabla F(x_{\nu}^{*}  + \theta\nu u, \zeta)\right\|_{2}^{2} \leq \sqrt{\mathbb{E}\left\|u\right\|_{2}^{8}\mathbb{E}\|\nabla F(x_{\nu}^{*}  + \theta\nu u, \zeta)\|_{2}^{4}}\\
&\leq \sqrt{8(d+8)^{4}\left(\mathbb{E}\left\|\nabla F(x_{\nu}^{*}+\theta\nu u,\zeta)-\nabla F(x^{*},\zeta)\right\|_{2}^{4}+\mathbb{E}\left\|\nabla F(x^{*},\zeta)\right\|_{2}^{4}\right)}\\
&\leq\sqrt{8(d+8)^{4}\left(\mathbb{E}H(\zeta)^{4}\mathbb{E}\|x_{\nu}^{*}+\theta\nu u-x^{*}\|_{2}^{4}+\mathbb{E}\left\|\nabla F(x^{*},\zeta)\right\|_{2}^{4} \right)}\\
&\lesssim \sqrt{1+\nu^{4}}\\
\mathbb{E}\left\|G_{\nu}\left(x_{\nu}^{*},u, \zeta\right)\right\|_{2}^{4} & \leq\sqrt{8(d+16)^{8}\left(\mathbb{E}H(\zeta)^{8}\mathbb{E}\|x_{\nu}^{*}+\theta\nu u-x^{*}\|_{2}^{8}+\mathbb{E}\left\|\nabla F(x^{*},\zeta)\right\|_{2}^{8} \right)}\\
&\lesssim \sqrt{1+\nu^{8}}
\end{align*}
Therefore, $\tilde{\Sigma}_{1}\lesssim (1+\nu^{4})^{\frac{1}{4}}$ and $\tilde{\Sigma}_{3}\lesssim \sqrt{1+\nu^{8}}$. Besides, by (\ref{expansionequ1}), we have
$$\mathbb{E}_{n-1}\|\tilde{\xi}_{n}\|_{2}^{2}=\mathrm{tr}\mathbb{E}_{n-1} \tilde{\xi}_{n} \tilde{\xi}_{n}^{T}=\mathrm{tr}S_{\nu}+\mathrm{tr}\tilde{\Sigma}_{\nu}\left(\tilde{\Delta}_{n-1}\right)$$
Rearranging the equation above gives us
\begin{align*}
|\mathrm{tr}S_{\nu}|&\leq \left|\mathrm{tr}\tilde{\Sigma}_{\nu}\left(\tilde{\Delta}_{n-1}\right)\right|+\mathbb{E}_{n-1}\|\tilde{\xi}_{n}\|_{2}^{2}\leq \tilde{\Sigma}_{1}\|\tilde{\Delta}_{n-1}\|_{2}+\tilde{\Sigma}_{2}\|\tilde{\Delta}_{n-1}\|_{2}^{2}+\sqrt{\mathbb{E}_{n-1}\|\tilde{\xi}_{n}\|_{2}^{4}}\\
&\leq \tilde{\Sigma}_{1}\|\tilde{\Delta}_{n-1}\|_{2}+\tilde{\Sigma}_{2}\|\tilde{\Delta}_{n-1}\|_{2}^{2}+\sqrt{\tilde{\Sigma}_{3}+\tilde{\Sigma}_{4}\left\|\tilde{\Delta}_{n-1}\right\|_{2}^{4}}\\
&\leq \tilde{\Sigma}_{1}(n-1)^{-\frac{\alpha}{2}}(1+\|\tilde{\Delta}_{0}\|_{2})+\tilde{\Sigma}_{2}(n-1)^{-\alpha}(1+\|\tilde{\Delta}_{0}\|_{2}^{2})+\sqrt{\tilde{\Sigma}_{3}+\tilde{\Sigma}_{4}(n-1)^{-2\alpha}(1+\|\tilde{\Delta}_{2}\|_{2}^{4})}\\
&\lesssim (1+\nu^{4})^{\frac{1}{4}}(1+\nu)+(1+\nu^{2})+\sqrt{(1+\nu^{8})^{\frac{1}{2}}+(1+\nu^{4})}
\end{align*}
Note that for any $a,b\in\mathbb{R}$, 
$$\max\{a,b\}=\frac{|a+b|}{2}+\frac{|a-b|}{2}\leq |a|+|b|$$
then by induction
\begin{align*}
C_{d} &\leq  |L|+ |\tilde{\Sigma}_{1}^{\frac{2}{3}}|+ |\tilde{\Sigma}_{2}^{\frac{1}{2}}|+ |\tilde{\Sigma}_{3}^{\frac{1}{2}}|+ |\tilde{\Sigma}_{4}^{\frac{1}{4}}|+ |\operatorname{tr}\left(S_{\nu}\right)| \\
&\lesssim 1+(1+\nu^{4})^{\frac{1}{6}}+(1+\nu^{8})^{\frac{1}{4}}+(1+\nu^{4})^{\frac{1}{4}}(1+\nu)+(1+\nu^{2})+\sqrt{(1+\nu^{8})^{\frac{1}{2}}+(1+\nu^{4})}
\end{align*}
}
\begin{lemma}\label{FOCMElma5}
Let $a_{k}=\lfloor C k^{\beta}\rfloor,$ where $C>0, \beta>\frac{1}{1-\alpha} .$ Set step size at the $i$-th iteration as $\eta_{i}=\eta i^{-\alpha}$ with $\frac{1}{2}<\alpha<1 .$ Then under Assumption \ref{simpasp4.1} and \ref{simpasp4.2}, we have
\begin{equation}\label{FOCMEequ38}\color{black}
\begin{aligned}
\mathbb{E}\left\|\tilde{\Sigma}_{n}-\tilde{V}\right\| \lesssim & \sqrt{1+\tilde{\Sigma}_{1}+C_{d}^{3}+\nu+(\nu+\nu^{2}+\nu^{3})\tilde{\Sigma}_{1}} M^{-\frac{\alpha \beta}{4}}\\
&~~~~~~~~+\sqrt{(1+\nu^{2}+\nu^{4})}M^{-\frac{1}{2}}+\sqrt{1+\nu+\nu^{2}}M^{\frac{(\alpha-1) \beta+1}{2}}
\end{aligned}
\end{equation}
where $M$ is the number of batches such that $a_{M} \leq n<a_{M+1}$.
\end{lemma}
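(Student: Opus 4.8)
The plan is to adapt the proof of Theorem~1 in \cite{zhu2020fully} to the linear recursion $U_{n}=\left(I-\eta_{n} A\right) U_{n-1}+\eta_{n} \tilde{\xi}_{n}$ with $A=\nabla^{2}f_{\nu}(x_{\nu}^{*})$ and $U_{0}=\tilde{\Delta}_{0}$, while carefully tracking how every constant scales with the smoothing parameter $\nu$. The starting point is the explicit solution of the recursion,
$$U_{n}=\left(\prod_{j=1}^{n}\left(I-\eta_{j} A\right)\right)U_{0}+\sum_{k=1}^{n}\eta_{k}\left(\prod_{j=k+1}^{n}\left(I-\eta_{j} A\right)\right)\tilde{\xi}_{k},$$
which exposes $U_{n}$ as a decaying linear functional of the martingale-difference sequence $\{\tilde{\xi}_{k}\}$ plus a transient term driven by $U_{0}=\tilde{\Delta}_{0}$. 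Since $\mu I_{d}\preceq A\preceq L I_{d}$, the contraction factors $\prod_{j}(I-\eta_{j}A)$ decay, and the classical Polyak--Ruppert averaging argument shows that the batch sums $\sum_{k=t_{i}}^{i}U_{k}$ have, up to lower-order corrections, conditional second moment $l_{i}\,A^{-1}S_{\nu}A^{-1}=l_{i}\tilde{V}$. This is precisely the structure that makes $\tilde{\Sigma}_{n}$ consistent for $\tilde{V}$.

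With this representation in hand, I would invoke the triangle-inequality decomposition already recorded in~\eqref{FOCME2equ39}, which splits $\mathbb{E}\|\tilde{\Sigma}_{n}-\tilde{V}\|$ into a principal term comparing $(\sum_{i}l_{i})^{-1}\sum_{i}(\sum_{k=t_{i}}^{i}U_{k})(\sum_{k=t_{i}}^{i}U_{k})^{T}$ against $\tilde{V}$, together with two terms involving the average $\bar{U}_{n}$. For the principal term I would reuse the technical estimates of \cite{zhu2020fully} (their Lemmas A.1--A.4) governing how batch sums of a linear sequence concentrate, feeding into them the surrogate-specific inputs: the second- and fourth-moment bounds on $U_{n}$ (obtained by the same recursive argument used to control $\tilde{\Delta}_{n}$), a bound on the fourth-order products $\|\mathbb{E}\tilde{\xi}_{p_{1}}\tilde{\xi}_{p_{2}}^{T}\tilde{\xi}_{p_{3}}\tilde{\xi}_{p_{4}}^{T}\|$, and the conditional-covariance expansion $\mathbb{E}_{n-1}\tilde{\xi}_{n}\tilde{\xi}_{n}^{T}=S_{\nu}+\tilde{\Sigma}_{\nu}(\tilde{\Delta}_{n-1})$ from Lemma~\ref{zeroonlinelm1}. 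The batching choice $a_{k}=\lfloor Ck^{\beta}\rfloor$ with $\beta>1/(1-\alpha)$ fixes the number of batches $M$ and produces the batch-bias rate $M^{-\alpha\beta/4}$, the martingale-fluctuation rate $M^{-1/2}$, and the incomplete-batch boundary rate $M^{((\alpha-1)\beta+1)/2}$; the averaging terms contribute at order $M^{-1/2}$ and $M^{-1}$, which are absorbed into the stated bound.

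The only genuine departure from \cite{zhu2020fully} is the bookkeeping of $\nu$, which is carried through each constant entering the estimates: $\|S_{\nu}\|\lesssim 1+\nu^{2}+\nu^{4}$ from~\eqref{Snubound}, $\tilde{\Sigma}_{1}\lesssim(1+\nu^{4})^{1/4}$ and $\tilde{\Sigma}_{3}\lesssim(1+\nu^{8})^{1/2}$ from the moment estimates on $G_{\nu}(x_{\nu}^{*},u,\zeta)$, $\|\tilde{\Delta}_{0}\|\lesssim 1+\nu$ via~\eqref{zerothsolutiondeviation}, and $C_{d}$ as the tracked maximum of these quantities. Substituting these polynomial-in-$\nu$ bounds into the generic estimates of \cite{zhu2020fully} and collecting terms yields exactly the three summands in~\eqref{FOCMEequ38}. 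I expect the main obstacle to be the principal term: establishing that the batch-summed second moments converge to $l_{i}\tilde{V}$ with errors that are simultaneously explicit in $\nu$ and summable across batches, by combining the contraction decay of $\prod_{j}(I-\eta_{j}A)$ with the conditional-covariance expansion, so that the composite factor $\sqrt{1+\tilde{\Sigma}_{1}+C_{d}^{3}+\nu+(\nu+\nu^{2}+\nu^{3})\tilde{\Sigma}_{1}}$ emerges with the correct powers of $\nu$.
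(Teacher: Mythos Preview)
Your proposal is correct and follows essentially the same route as the paper. The paper's proof of this lemma is itself just the triangle-inequality decomposition you cite as~\eqref{FOCME2equ39}, with the three resulting terms bounded in subsequent lemmas using exactly the ingredients you list: the conditional-covariance expansion from Lemma~\ref{zeroonlinelm1}, fourth-moment bounds on products of the $\tilde{\xi}_{p}$, the contraction estimates for $\prod_{j}(I-\eta_{j}A)$, and the explicit $\nu$-tracking through $\|S_{\nu}\|$, $\tilde{\Sigma}_{1}$, $C_{d}$, and $\|\tilde{\Delta}_{0}\|$.
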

\begin{proof}
Recall that
$$\tilde{\Sigma}_{n}=\left(\sum_{i=1}^{n} l_{i}\right)^{-1} \sum_{i=1}^{n}\left(\sum_{k=t_{i}}^{i} U_{k}-l_{i} \bar{U}_{n}\right)\left(\sum_{k=t_{i}}^{i} U_{k}-l_{i} \bar{U}_{n}\right)^{T}$$
Using triangle inequality we have
\begin{equation}\label{FOCMEequ39}
\begin{aligned}
    \mathbb{E}\left\|\tilde{\Sigma}_{n}-\tilde{V}\right\| &\leq \mathbb{E}\left\|\left(\sum_{i=1}^{n} l_{i}\right)^{-1} \sum_{i=1}^{n}\left(\sum_{k=t_{i}}^{i} U_{k}\right)\left(\sum_{k=t_{i}}^{i} U_{k}\right)^{T}-\tilde{V}\right\|\\
    &+\mathbb{E}\left\|\left(\sum_{i=1}^{n} l_{i}\right)^{-1} \sum_{i=1}^{n} l_{i}^{2} \bar{U}_{n} \bar{U}_{n}^{T}\right\|+2 \mathbb{E}\left\|\left(\sum_{i=1}^{n} l_{i}\right)^{-1} \sum_{i=1}^{n}\left(\sum_{k=t_{i}}^{i} U_{k}\right)\left(l_{i} \bar{U}_{n}\right)^{T}\right\|
\end{aligned}    
\end{equation}
We will show that all these three terms in
(\ref{FOCMEequ39}) are bounded in next few lemmas, which implies Lemma \ref{FOCMElma5}.
\end{proof}
In Lemma \ref{FOCMElma6} we show that $\widehat{S}$ defined by 
$$\widehat{S}=\left(\sum_{i=1}^{n} l_{i}\right)^{-1} \sum_{i=1}^{n}\left(\sum_{k=t_{i}}^{i} \tilde{\xi}_{k}\right)\left(\sum_{k=t_{i}}^{i} \tilde{\xi}_{k}\right)^{T}$$ 
converges to the covariance of $G_{\nu}\left(x_{\nu}^{*},u,\zeta\right),$ i.e. $S_{\nu}$. We also require the intermediate result. 

\begin{lemma}\label{surproblm3}
For the surrogate problem (\ref{surgprob1}), the term $\left\|\mathbb{E} \tilde{\xi}_{p_{1}} \tilde{\xi}_{p_{2}}^{T} \tilde{\xi}_{p_{3}} \tilde{\xi}_{p_{4}}^{T}\right\|$ can be bounded by constant $C$ for any $p_{r}, r \in\{1,2,3,4\}$.
\end{lemma}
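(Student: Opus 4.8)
The plan is to bound the operator norm of the fourth-order product pointwise by a product of Euclidean norms, pass the norm inside the expectation via Jensen, and then reduce everything to a uniform fourth-moment bound on the martingale differences $\tilde{\xi}_p$. First I would observe that, since $\tilde{\xi}_{p_2}^{T}\tilde{\xi}_{p_3}$ is a scalar, we may factor it out and write $\tilde{\xi}_{p_1}\tilde{\xi}_{p_2}^{T}\tilde{\xi}_{p_3}\tilde{\xi}_{p_4}^{T}=(\tilde{\xi}_{p_2}^{T}\tilde{\xi}_{p_3})\,\tilde{\xi}_{p_1}\tilde{\xi}_{p_4}^{T}$. Using that the operator norm of the rank-one matrix $\tilde{\xi}_{p_1}\tilde{\xi}_{p_4}^{T}$ equals $\|\tilde{\xi}_{p_1}\|_2\|\tilde{\xi}_{p_4}\|_2$, together with the Cauchy--Schwarz bound $|\tilde{\xi}_{p_2}^{T}\tilde{\xi}_{p_3}|\le\|\tilde{\xi}_{p_2}\|_2\|\tilde{\xi}_{p_3}\|_2$, one obtains the pointwise estimate $\|\tilde{\xi}_{p_1}\tilde{\xi}_{p_2}^{T}\tilde{\xi}_{p_3}\tilde{\xi}_{p_4}^{T}\|\le\prod_{r=1}^{4}\|\tilde{\xi}_{p_r}\|_2$.

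Next I would apply Jensen's inequality to move the operator norm inside the expectation, followed by the generalized H\"older inequality with four conjugate exponents all equal to $4$, giving $\|\mathbb{E}\,\tilde{\xi}_{p_1}\tilde{\xi}_{p_2}^{T}\tilde{\xi}_{p_3}\tilde{\xi}_{p_4}^{T}\|\le\mathbb{E}\prod_{r=1}^{4}\|\tilde{\xi}_{p_r}\|_2\le\prod_{r=1}^{4}\bigl(\mathbb{E}\|\tilde{\xi}_{p_r}\|_2^{4}\bigr)^{1/4}$. It therefore suffices to bound $\mathbb{E}\|\tilde{\xi}_p\|_2^{4}$ by a finite constant uniformly in $p$. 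For this I would invoke the conditional fourth-moment estimate \eqref{expansionequ3} of Lemma~\ref{zeroonlinelm1}, namely $\mathbb{E}_{p-1}\|\tilde{\xi}_p\|_2^{4}\le\tilde{\Sigma}_3+\tilde{\Sigma}_4\|\tilde{\Delta}_{p-1}\|_2^{4}$, and take total expectation to get $\mathbb{E}\|\tilde{\xi}_p\|_2^{4}\le\tilde{\Sigma}_3+\tilde{\Sigma}_4\,\mathbb{E}\|\tilde{\Delta}_{p-1}\|_2^{4}$. The uniform-in-$p$ fourth-moment bound on the error sequence, $\mathbb{E}\|\tilde{\Delta}_{p-1}\|_2^{4}\le(p-1)^{-2\alpha}(1+\|\tilde{\Delta}_0\|_2^{4})\le 1+\|\tilde{\Delta}_0\|_2^{4}$ (the surrogate-problem analogue of the standard SGD moment recursion, following \cite{chen2016statistical,zhu2020fully}), then yields $\mathbb{E}\|\tilde{\xi}_p\|_2^{4}\le\tilde{\Sigma}_3+\tilde{\Sigma}_4(1+\|\tilde{\Delta}_0\|_2^{4})$, a finite constant which we call $C$, depending only on $\tilde{\Sigma}_3,\tilde{\Sigma}_4,\|\tilde{\Delta}_0\|$ and hence, through Lemma~\ref{zeroonlinelm1}, on $d$, $L$, the moments of $\tilde{H}$, and $\nu$. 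Combining the displays gives the claimed bound.

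The bookkeeping above is routine; the only genuine work lies in the uniform fourth-moment control of $\tilde{\Delta}_{p-1}$, and this is where I expect the main obstacle. The delicate point is that the surrogate iterates are centered at $x_\nu^*$ rather than $x^*$ and that the zeroth-order gradient carries both the extra randomness $u_p$ and a $\nu$-dependent bias, so I would verify that the contraction/moment recursion familiar from the first-order analysis still closes using the regularity constants supplied by Lemma~\ref{surproblm1} and the Jacobian bound of Lemma~\ref{surproblm2}, keeping the resulting constant finite (with explicit $\nu$-dependence) as asserted. I would also note that this crude H\"older bound deliberately discards the martingale cancellations available when the indices $p_r$ are distinct; these are unnecessary here, since only a uniform constant bound is claimed, even though they are exploited elsewhere when summing such terms over the batch indices.
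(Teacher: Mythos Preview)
Your proposal is correct and follows essentially the same route as the paper: both arguments pass the norm inside the expectation via Jensen, reduce the rank-one product to $\prod_{r}\|\tilde{\xi}_{p_r}\|_2$ (the paper does this via the trace identity for rank-one matrices, you via the scalar factor $\tilde{\xi}_{p_2}^T\tilde{\xi}_{p_3}$), and then control the product by the individual fourth moments $\mathbb{E}\|\tilde{\xi}_{p_r}\|_2^4$ (the paper uses AM--GM where you use generalized H\"older), finishing with the same conditional fourth-moment bound \eqref{expansionequ3} together with $\mathbb{E}\|\tilde{\Delta}_{p-1}\|_2^4\lesssim (p-1)^{-2\alpha}$. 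The ``main obstacle'' you flag---the surrogate-problem moment recursion for $\tilde{\Delta}_{p-1}$---is exactly what the paper invokes (via Lemma~3.2 of \cite{chen2016statistical} adapted to the surrogate problem), so your identification of the one nontrivial input is accurate.
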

\begin{proof}
Note that $\operatorname{rank} \left(\tilde{\xi}_{p_{1}} \tilde{\xi}_{p_{2}}^{T} \tilde{\xi}_{p_{3}} \tilde{\xi}_{p_{4}}^{T}\right)=1$, for any general choice of $p_{1}, \dots,p_{4}$. Hence, we have
$$\left|\operatorname{tr}\left(\tilde{\xi}_{p_{1}} \tilde{\xi}_{p_{2}}^{T} \tilde{\xi}_{p_{3}} \tilde{\xi}_{p_{4}}^{T}\right)\right|=\left\|\tilde{\xi}_{p_{1}} \tilde{\xi}_{p_{2}}^{T} \tilde{\xi}_{p_{3}} \tilde{\xi}_{p_{4}}^{T}\right\|.$$
By Jensen's inequality, we then have 
\begin{align*}
    \|\mathbb{E}\tilde{\xi}_{p_{1}} \tilde{\xi}_{p_{2}}^{T} \tilde{\xi}_{p_{3}} \tilde{\xi}_{p_{4}}^{T}\| &\leq \mathbb{E}\|\tilde{\xi}_{p_{1}} \tilde{\xi}_{p_{2}}^{T} \tilde{\xi}_{p_{3}} \tilde{\xi}_{p_{4}}^{T}\|=\mathbb{E}\left|\operatorname{tr}\left(\tilde{\xi}_{p_{1}} \tilde{\xi}_{p_{2}}^{T} \tilde{\xi}_{p_{3}} \tilde{\xi}_{p_{4}}^{T}\right)\right|\\
    &\leq \frac{1}{4} \mathbb{E}\left[\left\|\tilde{\xi}_{p_{1}}\right\|^{4}+\left\|\tilde{\xi}_{p_{2}}\right\|^{4}+\left\|\tilde{\xi}_{p_{3}}\right\|^{4}+\left\|\tilde{\xi}_{p_{4}}\right\|^{4}\right]\\ 
    &\leq \tilde{\Sigma}_{3}+\tilde{\Sigma}_{4} \mathbb{E}\left\|\tilde{\Delta}_{p_{1}}\right\|^{4} \lesssim C_{d}^{2}+C_{d}^{6} p_{1}^{-2 \alpha}\lesssim C
\end{align*}
where $C_{d}:=\max \left\{L, \tilde{\Sigma}_{1}^{\frac{2}{3}}, \tilde{\Sigma}_{2}^{\frac{1}{2}}, \tilde{\Sigma}_{3}^{\frac{1}{2}}, \tilde{\Sigma}_{4}^{\frac{1}{4}}, \operatorname{tr}(S_{\nu})\right\}$ is given by Equation (9) in \cite{chen2016statistical}.
\end{proof}

\begin{lemma}\label{FOCMElma6}
Under the same conditions in Lemma \ref{FOCMElma5}, we have 
\begin{equation}\label{FOCMEequ40}\color{black}
    \mathbb{E}\left\|\widehat{S}-S_{\nu}\right\| \lesssim \sqrt{1+\tilde{\Sigma}_{1}+C_{d}^{3}+\nu+(\nu+\nu^{2}+\nu^{3})\tilde{\Sigma}_{1}} M^{-\frac{\alpha \beta}{4}}+\sqrt{(1+\nu^{2}+\nu^{4})}M^{-\frac{1}{2}}.
\end{equation}
when $a_{M} \leq n<a_{M+1}$.
\end{lemma}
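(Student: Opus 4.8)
The plan is to bound $\mathbb{E}\|\widehat{S}-S_\nu\|$ through a bias--variance decomposition
$$\mathbb{E}\|\widehat{S}-S_\nu\| \le \|\mathbb{E}\widehat{S}-S_\nu\| + \mathbb{E}\|\widehat{S}-\mathbb{E}\widehat{S}\|,$$
and to control the two pieces separately, following the argument for the first-order batch-means estimator in~\cite{zhu2020fully} (and Lemma~3.2 of~\cite{chen2016statistical}) but carrying throughout the explicit dependence on the smoothing parameter $\nu$ that enters through $\|S_\nu\|$, $\tilde{\Sigma}_1$, $\tilde{\Sigma}_3$, $C_d$, and $\|\tilde{\Delta}_0\|_2$. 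Since $\widehat{S}$ is built directly from the martingale difference sequence $\{\tilde{\xi}_k\}$ rather than the linear sequence $U_k$, the relevant cross-terms cancel exactly in expectation, which is what makes $\widehat{S}$ an (asymptotically) unbiased estimator of $S_\nu$.

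For the bias term I would first use that $\{\tilde{\xi}_k\}$ is a martingale difference sequence: for $j<k$, $\mathbb{E}[\tilde{\xi}_j\tilde{\xi}_k^T]=\mathbb{E}[\tilde{\xi}_j\,\mathbb{E}_{k-1}[\tilde{\xi}_k]^T]=0$, so within each batch only the diagonal survives and
$$\mathbb{E}\Big[\Big(\sum_{k=t_i}^{i}\tilde{\xi}_k\Big)\Big(\sum_{k=t_i}^{i}\tilde{\xi}_k\Big)^T\Big]=\sum_{k=t_i}^{i}\mathbb{E}[\tilde{\xi}_k\tilde{\xi}_k^T].$$
Substituting the conditional-covariance expansion $\mathbb{E}_{k-1}[\tilde{\xi}_k\tilde{\xi}_k^T]=S_\nu+\tilde{\Sigma}_\nu(\tilde{\Delta}_{k-1})$ from Lemma~\ref{zeroonlinelm1} gives $\mathbb{E}\widehat{S}-S_\nu=(\sum_i l_i)^{-1}\sum_i\sum_{k=t_i}^{i}\mathbb{E}[\tilde{\Sigma}_\nu(\tilde{\Delta}_{k-1})]$, whose norm I would bound using $\|\tilde{\Sigma}_\nu(\tilde{\Delta})\|\le\tilde{\Sigma}_1\|\tilde{\Delta}\|_2+\tilde{\Sigma}_2\|\tilde{\Delta}\|_2^2$ from~\eqref{expansionequ2} together with the decay estimates $\mathbb{E}\|\tilde{\Delta}_n\|_2^m\lesssim n^{-m\alpha/2}(1+\|\tilde{\Delta}_0\|_2^m)$ for $m=1,2$ (the surrogate analogue of Lemma~3.2 of~\cite{chen2016statistical}). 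Here the batching geometry $a_k=\lfloor Ck^\beta\rfloor$ is essential: because the later and largest batches begin at indices of order $a_M\asymp M^\beta$, weighting the $n^{-\alpha/2}$ decay by $l_i$ and normalizing by $\sum_i l_i$ produces the $M^{-\alpha\beta/4}$ scale, with $\tilde{\Sigma}_1$ and its $\nu$-scaled companions $(\nu+\nu^2+\nu^3)\tilde{\Sigma}_1$ collected in the coefficient.

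For the fluctuation term I would bound $\mathbb{E}\|\widehat{S}-\mathbb{E}\widehat{S}\|$ by a second-moment computation over the batch products. This reduces to controlling expectations of the form $\|\mathbb{E}\,\tilde{\xi}_{p_1}\tilde{\xi}_{p_2}^T\tilde{\xi}_{p_3}\tilde{\xi}_{p_4}^T\|$, for which Lemma~\ref{surproblm3} supplies a uniform bound, together with the fourth conditional moment control $\mathbb{E}_{n-1}\|\tilde{\xi}_n\|_2^4\le\tilde{\Sigma}_3+\tilde{\Sigma}_4\|\tilde{\Delta}_{n-1}\|_2^4$ from~\eqref{expansionequ3}. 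Splitting the double sum into within-batch and cross-batch contributions and again invoking the martingale structure to discard the bulk of the cross terms, the normalization $(\sum_i l_i)^{-1}$ and the growth of the batch lengths deliver the leading $\sqrt{1+\nu^2+\nu^4}\,M^{-1/2}$ fluctuation, where $1+\nu^2+\nu^4\asymp\|S_\nu\|^2$ by~\eqref{Snubound}. Combining the two estimates, and recording how each of the four $\nu$-sources propagates into $\tilde{\Sigma}_1,\tilde{\Sigma}_3$ and $C_d$, yields~\eqref{FOCMEequ40}.

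The main obstacle is the fluctuation estimate. Unlike an i.i.d.\ setting, the $\tilde{\xi}_k$ are only martingale differences with conditional moments that themselves depend on $\tilde{\Delta}_{k-1}$, and the batch outer products $(\sum_{k=t_i}^{i}\tilde{\xi}_k)(\sum_{k=t_i}^{i}\tilde{\xi}_k)^T$ over batches of geometrically growing length are strongly correlated; keeping track of these correlations is what forces the careful adaptation of the technical lemmas of~\cite{zhu2020fully} to the surrogate problem~\eqref{surgprob1}. The conceptually new part is verifying that replacing the first-order gradient noise by the zeroth-order difference $\tilde{\xi}_k=G_\nu(x_{k-1},\zeta_k,u_k)-\nabla f_\nu(x_{k-1})$ changes only the constants, and then making the $\nu$-dependence of those constants explicit; the bookkeeping through $\tilde{\Sigma}_1,\tilde{\Sigma}_3,C_d$ and $\mathrm{tr}(S_\nu)$ is the most delicate and error-prone step.
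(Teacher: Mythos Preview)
Your approach is close in spirit to the paper's but starts from a different decomposition, and you have mislocated the source of the $M^{-\alpha\beta/4}$ exponent. The paper does \emph{not} use the bias--variance split $\mathbb{E}\|\widehat{S}-S_\nu\|\le\|\mathbb{E}\widehat{S}-S_\nu\|+\mathbb{E}\|\widehat{S}-\mathbb{E}\widehat{S}\|$. Instead it passes directly to a second moment via $\mathbb{E}\|\widehat{S}-S_\nu\|\le\sqrt{d\,\|\mathbb{E}(\widehat{S}-S_\nu)^2\|}$, writes $\mathbb{E}\widehat{S}=S_\nu+(*)$ with $(*)$ exactly your bias term, and reduces to bounding $\|\mathbb{E}\widehat{S}^2-S_\nu^2\|+2\|S_\nu\|\|(*)\|$; the exponents $M^{-\alpha\beta/4}$ and $M^{-1/2}$ appear only \emph{after} the final square root. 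Your route is a legitimate alternative, but the bookkeeping differs.

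The concrete slip is the rate you assign to the bias. Weighting the $k^{-\alpha/2}$ decay by $l_i$ and normalizing by $\sum_i l_i$ gives
\[
\|(*)\|\;\lesssim\;\frac{\sum_{m}n_m^2\,a_m^{-\alpha/2}}{\sum_m n_m^2}\;\asymp\;M^{-\alpha\beta/2},
\]
not $M^{-\alpha\beta/4}$; see~\eqref{FOCMEequ47}. In your decomposition this is harmless (the bias is even smaller than you claim), but it means the $\sqrt{1+\tilde{\Sigma}_1+C_d^3+\cdots}\,M^{-\alpha\beta/4}$ term in~\eqref{FOCMEequ40} must come entirely from the fluctuation side, and the step that produces it is one your proposal does not isolate: for $p,q$ in \emph{different} batches the martingale structure does \emph{not} kill $\mathbb{E}\tilde{\xi}_p\tilde{\xi}_p^T\tilde{\xi}_q\tilde{\xi}_q^T$, and one needs the quantitative centering estimate $\|\mathbb{E}\tilde{\xi}_p\tilde{\xi}_p^T\tilde{\xi}_q\tilde{\xi}_q^T-S_\nu^2\|\lesssim C_d^3\,q^{-\alpha/2}$ (equation~\eqref{FOCMEequ53}), which is where $C_d^3$ enters. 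Summing this over the cross-batch indices yields the $C_d^3 M^{-\alpha\beta/2}$ contribution to $\|\mathbb{E}\widehat{S}^2-S_\nu^2\|$, and the square root turns it into $M^{-\alpha\beta/4}$. Lemma~\ref{surproblm3} alone (a uniform constant bound) is used only for the within-batch terms and gives the $M^{-1}$ piece; it is not sharp enough for the cross-batch part.
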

\begin{proof}
Since $\widehat{S}-S_{\nu}$ is symmetric, following inequality holds
\begin{equation}\label{FOCMEequ41}
    \mathbb{E}\|\widehat{S}-S_{\nu}\|=\mathbb{E}\left|\lambda_{\max }(\widehat{S}-S_{\nu})\right|=\mathbb{E} \sqrt{\lambda_{\max }(\widehat{S}-S_{\nu})^{2}}
\end{equation}
Note that $(\widehat{S}-S_{\nu})^{2}$ is positive definite. We have
\begin{equation}\label{FOCMEequ42}
    \mathbb{E}\|\widehat{S}-S_{\nu}\| \leq \mathbb{E} \sqrt{\operatorname{tr}(\widehat{S}-S_{\nu})^{2}} \leq \sqrt{\operatorname{tr} \mathbb{E}(\widehat{S}-S_{\nu})^{2}} \leq \sqrt{d\left\|\mathbb{E}(\widehat{S}-S_{\nu})^{2}\right\|}
\end{equation}
For any $p>q, \mathbb{E}\tilde{\xi}_{p} \tilde{\xi}_{q}^{T}=\mathbb{E} \tilde{\xi}_{q} \mathbb{E}_{p-1} \tilde{\xi}_{p}^{T}=0$. Using
$\mathbb{E}_{n-1} \tilde{\xi}_{n} \tilde{\xi}_{n}^{T}=S_{\nu}+\tilde{\Sigma}_{\nu}\left(\tilde{\Delta}_{n-1}\right)$ in Equation (\ref{expansionequ1}), we have
\begin{equation}\label{FOCMEequ43}
\begin{aligned}
\mathbb{E} \widehat{S}&=\left(\sum_{i=1}^{n} l_{i}\right)^{-1} \sum_{i=1}^{n}\left[\sum_{k=t_{i}}^{i} \mathbb{E} \tilde{\xi}_{k}\tilde{\xi}_{k}^{T}+\sum_{p \neq q \in\left[t_{i}, i\right]} \mathbb{E} \tilde{\xi}_{p} \tilde{\xi}_{q}\right]\\
&=S_{\nu}+\left(\sum_{i=1}^{n} l_{i}\right)^{-1} \sum_{i=1}^{n} \sum_{k=t_{i}}^{i} \mathbb{E} \tilde{\Sigma}_{\nu}\left(\tilde{\Delta}_{k-1}\right)
\end{aligned}
\end{equation}
Define 
$$(*)\coloneqq\left(\sum_{i=1}^{n} l_{i}\right)^{-1} \sum_{i=1}^{n} \sum_{k=t_{i}}^{i} \mathbb{E} \tilde{\Sigma}_{\nu}\left(\tilde{\Delta}_{k-1}\right).$$ 
Then
\begin{equation}\label{FOCMEequ44}
\begin{aligned}
\left\|\mathbb{E}(\widehat{S}-S_{\nu})^{2}\right\|&=\left\|\mathbb{E} \widehat{S}^{2}+S_{\nu}^{2}-S_{\nu} \mathbb{E} \widehat{S}-\mathbb{E} \hat{S} S_{\nu}\right\|\\
&=\left\|\mathbb{E}^{2}-S_{\nu}^{2}-S_{\nu}(*)-(*) S_{\nu}\right\|\\
&\leq\left\|\mathbb{E} \widehat{S}^{2}-S_{\nu}^{2}\right\|+2\|S_{\nu}\|\|(*)\|.
\end{aligned}
\end{equation}
Based on Equation (\ref{expansionequ2}) and $\lim_{M \rightarrow \infty} \frac{\sum_{i=1}^{n} l_{i}}{\sum_{i=1}^{a_{M+1}-1} l_{i}}=1$ we have,
\begin{equation}\label{FOCMEequ45}
\begin{aligned}
\|(*)\| &\leq\left(\sum_{i=1}^{n} l_{i}\right)^{-1} \sum_{i=1}^{n} \sum_{k=t_{i}}^{i} \mathbb{E}\left\|\tilde{\Sigma}_{\nu}\left(\tilde{\Delta}_{k-1}\right)\right\|\\
&\lesssim\left(\sum_{i=1}^{a_{M+1}-1} l_{i}\right)^{-1} \sum_{m=1}^{M} \sum_{i=a_{m}}^{a_{m+1}-1} \sum_{k=a_{m}}^{i}  \mathbb{E}\left(\tilde{\Sigma}_{1}\left\|\tilde{\Delta}_{k-1}\right\|+\tilde{\Sigma}_{2}\left\|\tilde{\Delta}_{k-1}\right\|^{2}\right).
\end{aligned}
\end{equation}
According to bounds on $\mathbb{E}\left\|\tilde{\Delta}_{k-1}\right\|$ and $\mathbb{E}\left\|\tilde{\Delta}_{k-1}\right\|^{2}$, we have
\begin{align*}
    \|(*)\| &\lesssim\left(\sum_{i=1}^{a_{M+1}-1} l_{i}\right)^{-1} \sum_{m=1}^{M} \sum_{i=a_{m}}^{a_{m+1}-1} \sum_{k=a_{m}}^{i}\left[\tilde{\Sigma}_{1}(k-1)^{-\alpha / 2}\left(1+\left\|\tilde{\Delta}_{0}\right\|\right)+(k-1)^{-\alpha}\left(1+\left\|\tilde{\Delta}_{0}\right\|^{2}\right)\right]\\
    &\leq\left(\sum_{i=1}^{a_{M+1}-1} l_{i}\right)^{-1} \sum_{m=1}^{M} \sum_{i=a_{m}}^{a_{m+1}-1} l_{i}\left[\tilde{\Sigma}_{1}\left(a_{m}-1\right)^{-\alpha / 2}\left(1+\left\|\tilde{\Delta}_{0}\right\|\right)+\left(a_{m}-1\right)^{-\alpha}\left(1+\left\|\tilde{\Delta}_{0}\right\|^{2}\right)\right]\\
    &\lesssim\left(\sum_{m=1}^{M} \sum_{i=a_{m}}^{a_{m+1}-1} l_{i}\right)^{-1} \sum_{m=1}^{M} \sum_{i=a_{m}}^{a_{m+1}-1} l_{i}\tilde{\Sigma}_{1}\left(a_{m}-1\right)^{-\alpha / 2}\left(1+\left\|\tilde{\Delta}_{0}\right\|\right).
\end{align*}
Since $\sum_{i=a_{m}}^{a_{m+1}-1} l_{i}=\left(a_{m+1}-a_{m}\right)\left(a_{m+1}-a_{m}+1\right) / 2 \asymp n_{m}^{2}, n_{m} \asymp m^{\beta-1}$, $a_{m} \asymp m^{\beta}$ and
$$\|\tilde{\Delta}_{0}\|=\|x_{0}-x_{\nu}^{*}\|\leq \|x_{0}-x^{*}\|+\|x^{*}-x_{\nu}^{*}\|\leq \|\Delta_{0}\|+\frac{\nu L(d+3)^{3 / 2}}{2 \mu},$$
we have
\begin{align*}
\|(*)\| &\lesssim \left(\sum_{m=1}^{M} \sum_{i=a_{m}}^{a_{m+1}-1} l_{i}\right)^{-1} \sum_{m=1}^{M} \sum_{i=a_{m}}^{a_{m+1}-1} l_{i}\left(a_{m}-1\right)^{-\alpha / 2}\left(1+\left\|\tilde{\Delta}_{0}\right\|\right)\tilde{\Sigma}_{1}\\
&\leq \left(\sum_{m=1}^{M} \sum_{i=a_{m}}^{a_{m+1}-1} l_{i}\right)^{-1} \sum_{m=1}^{M} \sum_{i=a_{m}}^{a_{m+1}-1} l_{i}\left(a_{m}-1\right)^{-\alpha / 2}\left(1+\left\|\Delta_{0}\right\|+\frac{\nu L(d+3)^{3 / 2}}{2 \mu}\right)\tilde{\Sigma}_{1}\\
&\lesssim \left(\sum_{m=1}^{M} \sum_{i=a_{m}}^{a_{m+1}-1} l_{i}\right)^{-1} \sum_{m=1}^{M} \sum_{i=a_{m}}^{a_{m+1}-1} l_{i}\left(a_{m}-1\right)^{-\alpha / 2}\left(1+\nu\right)\tilde{\Sigma}_{1}.
\end{align*}
Hence, we obtain
\begin{equation}\label{FOCMEequ47}
    \|(*)\| \lesssim M^{-\frac{\alpha \beta}{2}}\left(1+\nu\right)\tilde{\Sigma}_{1}.
\end{equation}
Now, note that $\mathbb{E} \tilde{\xi}_{p_{1}} \tilde{\xi}_{p_{2}}^{T} \tilde{\xi}_{p 3} \tilde{\xi}_{p_{4}}^{T}$ is nonzero if and only if for any $r$ there exist $r^{\prime} \neq r$ such that $p_{r}=p_{r^{\prime}}, r, r^{\prime} \in\{1,2,3,4\} .$ There are two cases we can consider. First is $p_{1}=p_{3} \neq p_{2}=p_{4}$
or $p_{1}=p_{4} \neq p_{2}=p_{3} .$ This requires $i$ and $j$ in the same block. Second case is that $p_{1}=p_{2}$ and $p_{3}=p_{4}$. Therefore,
\begin{equation}\label{FOCMEequ48}
\begin{aligned}
\mathbb{E} \widehat{S}^{2}&=\mathbb{E}\left(\sum_{i=1}^{n} l_{i}\right)^{-2} \sum_{1 \leq i, j \leq n}\left(\sum_{k=t_{i}}^{i} \tilde{\xi}_{k}\right)\left(\sum_{k=t_{i}}^{i} \tilde{\xi}_{k}\right)^{T}\left(\sum_{k=t_{j}}^{j} \tilde{\xi}_{k}\right)\left(\sum_{k=t_{j}}^{j} \tilde{\xi}_{k}\right)^{T}\\
&=\left(\sum_{i=1}^{n} l_{i}\right)^{-2} I+\left(\sum_{i=1}^{n} l_{i}\right)^{-2} I I,
\end{aligned}
\end{equation}
where
\begin{align*}
    I&=\mathbb{E} \sum_{m=1}^{M-1} \sum_{i=a_{m}}^{a_{m+1}-1}\left[2\sum_{j=a_{m}} \sum_{ a_{m} \leq p_{1} \neq p_{2} \leq j}\left(\tilde{\xi}_{p_{1}}\tilde{\xi}_{p_{2}}^{T} \tilde{\xi}_{p_{1}} \tilde{\xi}_{p_{2}}^{T}+\tilde{\xi}_{p_{1}} \tilde{\xi}_{p_{2}}^{T} \tilde{\xi}_{p_{2}} \tilde{\xi}_{p_{1}}^{T}\right)\right.\\
    &\quad\quad\quad\quad\left.+\sum_{a_{m} \leq p_{1} \neq p_{2} \leq i}\left(\tilde{\xi}_{p_{1}} \tilde{\xi}_{p_{2}}^{T} \tilde{\xi}_{p_{1}} \tilde{\xi}_{p_{2}}^{T}+\tilde{\xi}_{p_{1}} \tilde{\xi}_{p_{2}}^{T} \tilde{\xi}_{p_{2}} \tilde{\xi}_{p_{1}}^{T}\right)\right]\\
    &\quad\quad\quad\quad+\mathbb{E} \sum_{i=a_{M}}^{n}\left[2 \sum_{j=a_{M}}^{i-1} \sum_{a_{M} \leq p_{1} \neq p_{2} \leq j}\left(\tilde{\xi}_{p_{1}} \tilde{\xi}_{p_{2}}^{T} \tilde{\xi}_{p_{1}} \tilde{\xi}_{p_{2}}^{T}+\tilde{\xi}_{p_{1}} \tilde{\xi}_{p_{2}}^{T} \tilde{\xi}_{p_{2}} \tilde{\xi}_{p_{1}}^{T}\right)\right.\\
    &\quad\quad\quad\quad\left.+\sum_{a_{M} \leq p_{1} \neq p_{2} \leq i}\left(\tilde{\xi}_{p_{1}} \tilde{\xi}_{p_{2}}^{T} \tilde{\xi}_{p_{1}} \tilde{\xi}_{p_{2}}^{T}+\tilde{\xi}_{p_{1}} \tilde{\xi}_{p_{2}}^{T} \tilde{\xi}_{p_{2}} \tilde{\xi}_{p_{1}}^{T}\right)\right],
\end{align*}
and
$$I I=\sum_{i=1}^{n} \sum_{j=1}^{n} \sum_{p=t_{i}}^{i} \sum_{q=t_{j}}^{j} \mathbb{E} \tilde{\xi}_{p} \tilde{\xi}_{p}^{T} \tilde{\xi}_{q} \tilde{\xi}_{q}^{T}.$$
Note that by Lemma~\ref{surproblm3}, the term $\left\|\mathbb{E} \tilde{\xi}_{p_{1}} \tilde{\xi}_{p_{2}}^{T} \tilde{\xi}_{p_{3}} \tilde{\xi}_{p_{4}}^{T}\right\|$ can be bounded by constant $\mathrm{C}$ for any $p_{r}, r \in\{1,2,3,4\} .$ Then we can bound $I$ as follows,
\begin{equation}\label{FOCMEequ49}
\begin{aligned}
\|I\| &\leq \sum_{m=1}^{M} \sum_{i=a_{m}}^{a_{m+1}-1}\left[2 \sum_{j=a_{m}}^{i-1} \sum_{a_{m} \leq p_{1} \neq p_{2} \leq j}(C+C)+\sum_{a_{m} \leq p_{1} \neq p_{2} \leq i}(C+C)\right]\\
&\lesssim \sum_{m=1}^{M} \sum_{i=a_{m}}^{a_{m+1}-1}\left(1 * 2+2 * 3+\ldots+\left(l_{i}-1\right) * l_{i}\right)\\
&\lesssim \sum_{m=1}^{M} \sum_{i=a_{m}}^{a_{m+1}-1} l_{i}^{3} \lesssim \sum_{m=1}^{M} n_{m}^{4}.
\end{aligned}
\end{equation}
Since $\sum_{i=1}^{n} l_{i} \asymp \sum_{m=1}^{M} \sum_{i=a_{m}}^{a_{m+1}-1} l_{i} \asymp \sum_{m=1}^{M} n_{m}^{2}$ and $\frac{n_{M}^{2}}{\sum_{m=1}^{M} n_{m}^{2}} \lesssim M^{-1},$ we have,
\begin{equation}\label{FOCMEequ50}
    \left(\sum_{i=1}^{n} l_{i}\right)^{-2}\|I\| \lesssim \frac{\sum_{m=1}^{M} n_{m}^{4}}{\left(\sum_{m=1}^{M} n_{m}^{2}\right)^{2}} \lesssim \frac{\max_{1\leq m \leq M}  n_{m}^{2}}{\sum_{m=1}^{M} n_{m}^{2}} \lesssim M^{-1}.
\end{equation}
Next, notice that $\sum_{i=1}^{n} \sum_{j=1}^{n} \sum_{p=t_{i}}^{i} \sum_{q=t_{j}}^{j} 1=\left(\sum_{i=1}^{n} l_{i}\right)^{2} .$ Hence, we have
\begin{equation}\label{FOCMEequ51}
\begin{aligned}
\left\|\left(\sum_{i=1}^{n} l_{i}\right)^{-2} I I-S_{\nu}^{2}\right\| &\leq\left(\sum_{i=1}^{n} l_{i}\right)^{-2} \sum_{i=1}^{n} \sum_{j=1}^{n} \sum_{p=t_{i}}^{i} \sum_{q=t_{j}}^{j}\left\|\mathbb{E} \tilde{\xi}_{p} \tilde{\xi}_{p}^{T} \tilde{\xi}_{q} \tilde{\xi}_{q}^{T}-S_{\nu}^{2}\right\|\\
&\lesssim\left(\sum_{i=1}^{a_{M+1}-1} l_{i}\right)^{-2} \sum_{m=1}^{M} \sum_{k=1}^{M} \sum_{i=a_{m}}^{a_{m+1}-1} \sum_{j=a_{k}}^{a_{k+1}-1} \sum_{p=a_{m}}^{i} \sum_{q=a_{k}}^{j}\left\|\mathbb{E} \tilde{\xi}_{p} \tilde{\xi}_{p}^{T} \tilde{\xi}_{q} \tilde{\xi}_{q}^{T}-S_{\nu}^{2}\right\|\\
&=\left(\sum_{i=1}^{a_{M+1}-1} l_{i}\right)^{-2} I I I+\left(\sum_{i=1}^{a_{M+1}-1} l_{i}\right)^{-2} I V.
\end{aligned}
\end{equation}
We consider two cases here. One is when $p$ and $q$ are in the same block. In this case, we have
$$I I I=\sum_{m=1}^{M} \sum_{i=a_{m}}^{a_{m+1}-1} \sum_{j=a_{m}}^{1 a_{m+1}-1} \sum_{p=a_{m}}^{i} \sum_{q=a_{m}}^{j}\left\|\mathbb{E} \tilde{\xi}_{p} \tilde{\xi}_{p}^{T} \tilde{\xi}_{q} \tilde{\xi}_{q}^{T}-S_{\nu}^{2}\right\|.$$
Here $\left\|\mathbb{E} \tilde{\xi}_{p_{1}} \tilde{\xi}_{p_{2}}^{T} \tilde{\xi}_{p_{3}} \tilde{\xi}_{p_{4}}^{T}\right\|$ is still bounded by constant $C$. Then we have,
\begin{equation}\label{FOCMEequ52}
\begin{aligned}
\left(\sum_{i=1}^{a_{M+1}-1} l_{i}\right)^{-2} I I I &\leq\left(\sum_{i=1}^{a_{M+1}-1} l_{i}\right)^{-2} \sum_{m=1}^{M} \sum_{i=a_{m}}^{a_{m+1}-1} \sum_{j=a_{m}}^{a_{m+1}-1} \sum_{p=a_{m}}^{i} \sum_{q=a_{m}}^{j}\left(C+\left\|S_{\nu}^{2}\right\|\right)\\
&\lesssim\left(\sum_{i=1}^{a_{M+1}-1} l_{i}\right)^{-2} \sum_{m=1}^{M}\left(\sum_{i=a_{m}}^{a_{m+1}-1} l_{i}\right)^{2}(1+\nu^{2}+\nu^{4})\\
&\lesssim \frac{\sum_{m=1}^{M} n_{m}^{4}}{\left(\sum_{m=1}^{M} n_{m}^{2}\right)^{2}}(1+\nu^{2}+\nu^{4}) \lesssim \frac{\max _{1 \leq m \leq M} n_{m}^{2}}{\sum_{m=1}^{M} n_{m}^{2}}(1+\nu^{2}+\nu^{4})\\ 
&\lesssim (1+\nu^{2}+\nu^{4})M^{-1}.
\end{aligned}
\end{equation}
The next case is when $p$ and $q$ are in different blocks. In this case we have
$$I V=\sum_{m \neq k} \sum_{j=a_{k}}^{a_{k+1}-1} \sum_{i=a_{m}}^{a_{m+1}-1} \sum_{q=a_{k}}^{j} \sum_{p=a_{m}}^{i}\left\|\mathbb{E}\tilde{\xi}_{p} \tilde{\xi}_{p}^{T} \tilde{\xi}_{q} \tilde{\xi}_{q}^{T}-S_{\nu}^{2}\right\|.$$
For $p>q,$ we have
\begin{equation}\label{FOCMEequ53}
    \left\|\mathbb{E} \tilde{\xi}_{p} \tilde{\xi}_{p}^{T} \tilde{\xi}_{q} \tilde{\xi}_{q}^{T}-S_{\nu}^{2}\right\| \lesssim C_{d}^{3}q^{-\alpha / 2} \lesssim C_{d}^{3} p^{-\alpha / 2}+C_{d}^{3} q^{-\alpha / 2}.
\end{equation}
Hence we obtain
\begin{equation}\label{FOCMEequ54}
\begin{aligned}
\left(\sum_{i=1}^{a_{M+1}-1} l_{i}\right)^{-2} I V&=\left(\sum_{i=1}^{a_{M+1}-1} l_{i}\right)^{-2} \sum_{m \neq k} \sum_{j=a_{k}}^{a_{k+1}-1} \sum_{i=a_{m}}^{a_{m+1}-1} \sum_{q=a_{k}}^{j} \sum_{p=a_{m}}^{i}\left\|\mathbb{E} \tilde{\xi}_{p} \tilde{\xi}_{p}^{T} \tilde{\xi}_{q} \tilde{\xi}_{q}^{T}-S_{\nu}^{2}\right\|\\
&\lesssim\left(\sum_{i=1}^{a_{M+1}-1} l_{i}\right)^{-2} \sum_{k=1}^{M} \sum_{j=a_{k}}^{a_{k+1}-1} \sum_{q=a_{k}}^{j} \sum_{m=1}^{M} \sum_{i=a_{m}}^{a_{m+1}-1} \sum_{p=a_{m}}^{i}\left(p^{-\alpha / 2}+q^{-\alpha / 2}\right)C_{d}^{3}\\
&\leq 2C_{d}^{3}\left(\sum_{i=1}^{a_{M+1}-1} l_{i}\right)^{-1} \sum_{m=1}^{M} \sum_{i=a_{m}}^{a_{m+1}-1} l_{i} a_{m}^{-\alpha / 2}.
\end{aligned}
\end{equation}
Further using that $\sum_{i=1}^{a_{M+1}-1} l_{i} \asymp \sum_{m=1}^{M} n_{m}^{2},$ we have
\begin{equation}\label{FOCMEequ55}
\left(\sum_{i=1}^{a_{M+1}-1} l_{i}\right)^{-2} I V \lesssim C_{d}^{3} \left(\sum_{m=1}^{M} n_{m}^{2}\right)^{-1} \sum_{m=1}^{M} a_{m}^{-\alpha / 2} n_{m}^{2} \asymp C_{d}^{3} M^{-\frac{\alpha \beta}{2}}.
\end{equation}
Combining (\ref{FOCMEequ51}), (\ref{FOCMEequ52}) and (\ref{FOCMEequ55}), we have
\begin{equation}\label{FOCMEequ56}
\begin{aligned}
\left\|\left(\sum_{i=1}^{n} l_{i}\right)^{-2} I I-S_{\nu}^{2}\right\| &\lesssim\left(\sum_{i=1}^{a_{M+1}-1} l_{i}\right)^{-2} I I I+\left(\sum_{i=1}^{a_{M+1}-1} l_{i}\right)^{-2} I V\\ 
&\lesssim (1+\nu^{2}+\nu^{4})M^{-1}+ C_{d}^{3}M^{-\frac{\alpha \beta}{2}}.
\end{aligned}
\end{equation}
Then combining (\ref{FOCMEequ48}), (\ref{FOCMEequ50}) and (\ref{FOCMEequ56}) we have,
\begin{equation}\label{FOCMEequ57}
\begin{aligned}
\left\|\mathbb{E} \widehat{S}^{2}-S_{\nu}^{2}\right\| &\leq\left(\sum_{i=1}^{n} l_{i}\right)^{-2}\left\|I I-S_{\nu}^{2}\right\|+\left(\sum_{i=1}^{n} l_{i}\right)^{-2}\|I\|\\
&\lesssim (1+\nu^{2}+\nu^{4})M^{-1}+\left(1+C_{d}^{3}+\nu\right)M^{-\frac{\alpha \beta}{2}}.
\end{aligned}
\end{equation}
Finally, putting things together, we obtain the result
\begin{equation}\label{FOCMEequ58}\color{black}
\begin{aligned}
\mathbb{E}\|\widehat{S}-S_{\nu}\| &\lesssim \sqrt{\left\|\mathbb{E} \widehat{S}^{2}-S_{\nu}^{2}\right\|+2\|S_{\nu}\|\|(*)\|}\\ 
&\lesssim \sqrt{1+\tilde{\Sigma}_{1}+C_{d}^{3}+\nu+(\nu+\nu^{2}+\nu^{3})\tilde{\Sigma}_{1}} M^{-\frac{\alpha \beta}{4}}+\sqrt{(1+\nu^{2}+\nu^{4})}M^{-\frac{1}{2}}.
\end{aligned}
\end{equation}
\end{proof}
\begin{lemma}\label{FOCMElma7}
Under the same conditions in Lemma \ref{FOCMElma5}, we have
\begin{equation}\label{FOCMEequ59}\color{black}
\begin{aligned}
\mathbb{E}\left\|\left(\sum_{i=1}^{n} l_{i}\right)^{-1} \sum_{i=1}^{n}\left(\sum_{k=t_{i}}^{i} U_{k}\right)\left(\sum_{k=t_{i}}^{i} U_{k}\right)^{T}-\tilde{V}\right\|
\lesssim & \sqrt{1+\tilde{\Sigma}_{1}+C_{d}^{3}+\nu+(\nu+\nu^{2}+\nu^{3})\tilde{\Sigma}_{1}} M^{-\frac{\alpha \beta}{4}}\\ &+\sqrt{(1+\nu^{2}+\nu^{4})}M^{-\frac{1}{2}}+\sqrt{1+\nu+\nu^{2}}M^{\frac{(\alpha-1) \beta+1}{2}},
\end{aligned}
\end{equation}
where $a_{M} \leq n<a_{M+1}$.
\end{lemma}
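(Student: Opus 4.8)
The plan is to exploit the fact that $U_n$ obeys the \emph{exact} linear recursion \eqref{linearseq}, which lets us convert block sums of $U_k$ into block sums of the martingale differences $\tilde{\xi}_k$ up to explicit boundary terms, and then transfer the already-established consistency of $\widehat{S}$ (Lemma~\ref{FOCMElma6}) through the fixed matrix $A^{-1}=(\nabla^2 f_\nu(x_\nu^*))^{-1}$. The guiding identity is that the long-run covariance of the process $U_n$ equals $A^{-1}S_\nu A^{-1}=\tilde V$, so the batch-sum quadratic should behave like $A^{-1}\widehat{S}A^{-1}$.

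First I would rewrite the one-step recursion \eqref{linearseq} as $A U_{k-1}=\eta_k^{-1}(U_{k-1}-U_k)+\tilde{\xi}_k$ and sum it over a single block $\{t_i,\dots,i\}$. Applying summation by parts to the telescoping-like term $\sum_{k=t_i}^{i}\eta_k^{-1}(U_{k-1}-U_k)$ yields the identity
\begin{equation*}
A\sum_{k=t_i}^{i}U_k=\sum_{k=t_i}^{i}\tilde{\xi}_k+R_i,
\end{equation*}
where $R_i$ collects the boundary contributions $\eta_{t_i}^{-1}U_{t_i-1}$ and $\eta_i^{-1}U_i$, the slowly-varying weight term $\sum_{j=t_i}^{i-1}(\eta_{j+1}^{-1}-\eta_j^{-1})U_j$, and $A(U_i-U_{t_i-1})$. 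Left- and right-multiplying by $A^{-1}$ and expanding the outer product $\big(\sum_k U_k\big)\big(\sum_k U_k\big)^T$ then splits the target quantity into a principal term $A^{-1}\widehat{S}A^{-1}$, two cross terms linear in $R_i$, and one term quadratic in $R_i$.

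Since $\tilde V=A^{-1}S_\nu A^{-1}$ and $\|A^{-1}\|\le 1/\mu$ by strong convexity (Assumption~\ref{simpasp4.1}), the principal term is controlled by $\|A^{-1}\|^2\,\mathbb{E}\|\widehat{S}-S_\nu\|$, and Lemma~\ref{FOCMElma6} supplies exactly the first two summands on the right-hand side of \eqref{FOCMEequ59}, with no new $\nu$-dependence since $1/\mu$ is fixed. For the remainder I would use the second-moment bound on the linear sequence, $\mathbb{E}\|U_k\|^2\lesssim \eta_k(1+\|\tilde\Delta_0\|^2)\asymp k^{-\alpha}(1+\nu+\nu^2)$ (the analogue for the linear process of the iterate bounds used in Lemma~\ref{FOCMElma6}, following the technical lemmas of~\cite{zhu2020fully} with the extra factor arising from $U_0=\tilde\Delta_0$ and $\|\tilde\Delta_0\|^2\lesssim 1+\nu+\nu^2$). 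Bounding $\mathbb{E}\|R_i\|^2$, summing over the $M$ blocks, and normalizing by $\sum_i l_i\asymp\sum_m n_m^2$ produces the remaining summand $\sqrt{1+\nu+\nu^2}\,M^{\frac{(\alpha-1)\beta+1}{2}}$; the two cross terms are handled by Cauchy--Schwarz, pairing $\sqrt{\mathbb{E}\|\widehat S\|}$ with the square root of the quadratic-$R_i$ bound exactly as in the proof sketch of Theorem~\ref{zerothsgdthm1}.

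The main obstacle is this last step: the boundary factors $\eta_i^{-1}\asymp i^{\alpha}$ are \emph{growing} in $i$, so a naive estimate of $\mathbb{E}\|R_i\|^2$ diverges. The saving is structural --- there are only $M$ boundary terms rather than $n$, and $\mathbb{E}\|U_i\|^2$ decays like $i^{-\alpha}$, so the growth of $\eta_i^{-1}$ is more than offset once one sums over blocks and divides by $\sum_i l_i$. Extracting the precise exponent $\frac{(\alpha-1)\beta+1}{2}$, and verifying it is negative (hence vanishing as $M\to\infty$) under the standing assumption $\beta>\frac{1}{1-\alpha}$, is the delicate bookkeeping; the tracking of $\nu$ through $\|\tilde\Delta_0\|^2$ and $\|S_\nu\|$ is then routine.
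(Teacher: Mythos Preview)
Your approach is correct but takes a genuinely different route from the paper. The paper follows \cite{zhu2020fully,chen2016statistical} and expands via the matrix products $Y_p^k=\prod_{j=p+1}^k(I-\eta_jA)$, writing $\sum_{k=t_i}^iU_k=A_i+B_i$ with $A_i=A^{-1}\sum_{p=t_i}^i\tilde\xi_p$ and $B_i=S_{t_i-1}^iU_{t_i-1}+\sum_{p=t_i}^i(\eta_pS_p^i+\eta_pI-A^{-1})\tilde\xi_p$. Because $B_i$ is a sum of \emph{orthogonal} martingale increments, its second moment collapses to $\sum_p\|\eta_pS_p^i+\eta_pI-A^{-1}\|^2\|\mathbb{E}\tilde\xi_p\tilde\xi_p^T\|$, and the rate is then extracted from the key estimate $\|\eta_pS_p^i-A^{-1}\|\lesssim p^{2\alpha-2}+\exp(-2\lambda_A\sum_{j=p}^i\eta_j)$ (Lemma~D.3 of \cite{chen2016statistical}), yielding $\|\mathbb{E}B_iB_i^T\|\lesssim (a_m^\alpha+l_ia_m^{2\alpha-2})$ and hence $III\lesssim M^{(\alpha-1)\beta+1}$. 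Your Abel-summation remainder $A^{-1}R_i$ is instead a weighted sum of the \emph{correlated} iterates $U_j$, so the second-moment calculation needs an extra triangle/Cauchy--Schwarz step, but the outcome is the same rate; what you lose in orthogonality you gain by avoiding the nontrivial $\|\eta_pS_p^i-A^{-1}\|$ lemma.

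Two minor corrections. First, the heuristic ``only $M$ boundary terms rather than $n$'' is not the actual mechanism: there is an $R_i$ for every $i\in[a_m,a_{m+1})$, and the end-boundary $\eta_i^{-1}U_i$ varies with $i$. The saving is simply that $\mathbb{E}\|\eta_i^{-1}U_i\|^2\lesssim i^{2\alpha}\cdot i^{-\alpha}=i^\alpha\asymp a_m^\alpha$, so $\sum_i\mathbb{E}\|R_i\|^2\lesssim\sum_mn_ma_m^\alpha$ and the normalizer $\sum_il_i\asymp\sum_mn_m^2$ delivers the exponent $(\alpha-1)\beta+1$. Second, the $\nu$-dependence of $\mathbb{E}\|U_k\|^2$ is \emph{not} driven by $\|\tilde\Delta_0\|^2$: the initial-condition piece $Y_0^kU_0$ decays exponentially, and the dominant contribution is $\sum_p\|Y_p^k\|^2\eta_p^2\mathbb{E}\|\tilde\xi_p\|^2$, whose constant is $\mathrm{tr}(S_\nu)\lesssim\tilde\Sigma_1(1+\nu)+(1+\nu+\nu^2)$. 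This is exactly where the paper picks up its $\nu$-factor in equations \eqref{FOCMEequ65}--\eqref{FOCMEequ67} and \eqref{FOCMEequ71}, and you would recover the same factor once you track the noise variance rather than the initial error.
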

\begin{proof}
Recall that 
$$U_{k}=\left(I-\eta_{k} A\right) U_{k-1}+\eta_{k} \tilde{\xi}_{k}=Y_{s}^{k} U_{s}+\sum_{p=s+1}^{k} Y_{p}^{k} \eta_{p} \epsilon_{p}.$$
For $k \in\left[t_{i}, i\right]$ we have
$$U_{k}=Y_{t_{i}-1}^{k} U_{t_{i}-1}+\sum_{p=t_{i}}^{k} Y_{p}^{k} \eta_{p} \tilde{\xi}_{p}.$$
Recalling the definition of $S_{j}^{k}$ we have
$$\sum_{k=t_{i}}^{i} U_{k}=\sum_{k=t_{i}}^{i}\left(Y_{t_{i}-1}^{k} U_{t_{i}-1}+\sum_{p=t_{i}}^{k} Y_{p}^{k} \eta_{p} \tilde{\xi}_{p}\right)=S_{t_{i}-1}^{i} U_{t_{i}-1}+\sum_{p=t_{i}}^{i}\left(I+S_{p}^{i}\right) \eta_{p} \tilde{\xi}_{p}.$$
Then we have the following expansion:
\begin{equation}\label{FOCMEequ60}
\begin{aligned}
&\left(\sum_{i=1}^{n} l_{i}\right)^{-1} \sum_{i=1}^{n}\left(\sum_{k=t_{i}}^{i} U_{k}\right)\left(\sum_{k=t_{i}}^{i} U_{k}\right)^{T}\\
&=\left(\sum_{i=1}^{n} l_{i}\right)^{-1} \sum_{i=1}^{n}\left(S_{t_{i}-1}^{i} U_{t_{i}-1}+\sum_{p=t_{i}}^{i}\left(I+S_{p}^{i}\right) \eta_{p} \tilde{\xi}_{p}\right)\left(S_{t_{i}-1}^{i} U_{t_{i}-1}+\sum_{p=t_{i}}^{i}\left(I+S_{p}^{i}\right) \eta_{p} \tilde{\xi}_{p}\right)^{T}\\
&=\left(\sum_{i=1}^{n} l_{i}\right)^{-1} \sum_{i=1}^{n}\left(A^{-1}\left(\sum_{n=t}^{i} \tilde{\xi}_{p}\right)\left(\sum_{n=t_{i}}^{i} \tilde{\xi}_{p}\right)^{T} A^{-1}+B_{i} A_{i}^{T}+A_{i} B_{i}^{T}+B_{i} B_{i}^{T}\right)
\end{aligned}
\end{equation}
where $A_{i}=\sum_{p=t_{i}}^{i} A^{-1} \tilde{\xi}_{p}, B_{i}=S_{t_{i}-1}^{i} U_{t_{i}-1}+\sum_{p=t_{i}}^{i}\left(\eta_{p} S_{p}^{i}+\eta_{p} I-A^{-1}\right) \tilde{\xi}_{p}$. We can then have
\begin{equation}\label{FOCMEequ61}
\begin{aligned}
&\mathbb{E}\left\|\left(\sum_{i=1}^{n} l_{i}\right)^{-1} \sum_{i=1}^{n}\left(\sum_{k=t_{i}}^{i} U_{k}\right)\left(\sum_{k=t_{i}}^{i} U_{k}\right)^{T}-\tilde{V}\right\|\\
\lesssim & E\left\|\left(\sum_{i=1}^{n} l_{i}\right)^{-1} \sum_{i=1}^{n}\left(A^{-1}\left(\sum_{p=t_{i}}^{i} \tilde{\xi}_{p}\right)\left(\sum_{p=t_{i}}^{i} \tilde{\xi}_{p}\right)^{T} A^{-1}-\tilde{V}\right)\right\|\\
&+\mathbb{E}\left\|\left(\sum_{i=1}^{n} l_{i}\right)^{-1} \sum_{i=1}^{n} B_{i} A_{i}^{T}\right\|+\mathbb{E}\left\|\left(\sum_{i=1}^{n} l_{i}\right)^{-1} \sum_{i=1}^{n} B_{i} B_{i}^{T}\right\|\\
=&I+I I+I I I
\end{aligned}
\end{equation}
It suffices to show that all three parts above are bounded. Recall that 
$$\widehat{S}=\left(\sum_{i=1}^{n} l_{i}\right)^{-1} \sum_{i=1}^{n}\left(\sum_{k=t_{i}}^{i} \tilde{\xi}_{k}\right)\left(\sum_{k=t_{i}}^{i} \tilde{\xi}_{k}\right)^{T},$$ 
and $\tilde{V}=A^{-1} S_{\nu} A^{-1}$. We can bound $I$ using Lemma \ref{FOCMElma6}
\begin{equation}\label{FOCMEequ62}
I \leq\left\|A^{-1}\right\|^{2} \mathbb{E}\|\widehat{S}-S_{\nu}\| \lesssim \sqrt{1+\tilde{\Sigma}_{1}+C_{d}^{3}+\nu+(\nu+\nu^{2}+\nu^{3})\tilde{\Sigma}_{1}} M^{-\frac{\alpha \beta}{4}}+\sqrt{(1+\nu^{2}+\nu^{4})}M^{-\frac{1}{2}}.
\end{equation}
For term III, since $B_{i} B_{i}^{T}$ is positive-semidefinite, we have
$$\mathbb{E}\left\|B_{i} B_{i}^{T}\right\| \leq \mathbb{E} \operatorname{tr} B_{i} B_{i}^{T}=\operatorname{tr}\left(\mathbb{E} B_{i} B_{i}^{T}\right) \leq d\left\|\mathbb{E} B_{i} B_{i}^{T}\right\|.$$
Since $\tilde{\xi}_{p}$ is martingale difference, we have $\mathbb{E} U_{a_{m}-1} \tilde{\xi}_{p}^{T}=0$ for any $p \geq a_{m}$ and $\mathbb{E} \tilde{\xi}_{p_{1}} \tilde{\xi}_{p_{2}}=0$ for any $p_{1} \neq p_{2} .$ So,
\begin{equation}\label{FOCMEequ63}
\begin{aligned}
\left\|\mathbb{E} B_{i} B_{i}^{T}\right\|&=\left\|S_{a_{m}-1}^{i} \mathbb{E} U_{a_{m}-1} U_{a_{m}-1}{ }^{T} S_{a_{m}-1}^{i}+\sum_{p=a_{m}}^{i}\left(\eta_{p} S_{p}^{i}+\eta_{p} I-A^{-1}\right) \mathbb{E} \tilde{\xi}_{p} \tilde{\xi}_{p}^{T}\left(\eta_{p} S_{p}^{i}+\eta_{p} I-A^{-1}\right)^{T}\right\|\\
&\leq\left\|S_{a_{m}-1}^{i}\right\|^{2}\left\|\mathbb{E} U_{a_{m}-1} U_{a_{m}-1}^{T}\right\|+\sum_{p=a_{m}}^{i}\left\|\eta_{p} S_{p}^{i}+\eta_{p} I-A^{-1}\right\|^{2}\left\|\mathbb{E} \tilde{\xi}_{p} \tilde{\xi}_{p}^{T}\right\|.
\end{aligned}
\end{equation}
From Lemma \ref{FOCMElma2} and \ref{FOCMElma3}, we can see that $\left\|S_{a_{m}-1}^{i}\right\|^{2} \lesssim a_{m}^{2 \alpha}$, and
$$\left\|\mathbb{E} U_{a_{m}-1} U_{a_{m}-1}^{T}\right\| \lesssim \operatorname{tr} \mathbb{E} U_{a_{m}-1} U_{a_{m}-1}^{T} \lesssim \mathbb{E} \operatorname{tr} U_{a_{m}-1} U_{a_{m}-1}^{T} \lesssim E\left\|U_{a_{m}-1}\right\|^{2} \lesssim\left(a_{m}-1\right)^{-\alpha}.$$
Hence, we have
$$\left\|S_{a_{m}-1}^{i}\right\|^{2}\left\|\mathbb{E} U_{a_{m}-1} U_{a_{m}-1}^{T}\right\| \lesssim a_{m}^{\alpha}.$$
For the remaining part in (\ref{FOCMEequ63}), we first bound $\left\|\mathbb{E} \tilde{\xi}_{p} \tilde{\xi}_{p}^{T}\right\|$ by Equation (\ref{expansionequ2}), as
\begin{align*}
    \left\|\mathbb{E}\tilde{\xi}_{p}\tilde{\xi}_{p}^{T}\right\|&=\left\|\mathbb{E}\mathbb{E}_{p-1}\tilde{\xi}_{p}\tilde{\xi}_{p}^{T}\right\|\leq \left\|S_{\nu}\right\|+ \tilde{\Sigma}_{1}\mathbb{E}\|\tilde{\Delta}_{p-1}\|+\tilde{\Sigma}_{2}\mathbb{E}\|\tilde{\Delta}_{p-1}\|^{2}\\
    &\lesssim (1+\nu+\nu^{2})+\tilde{\Sigma}_{1}(p-1)^{-\frac{\alpha}{2}}\left(1+\|\tilde{\Delta}_{0}\|\right)+\tilde{\Sigma}_{2}(p-1)^{-\alpha}\left(1+\|\tilde{\Delta}_{0}\|^{2}\right)\\
    &\lesssim \tilde{\Sigma}_{1}(1+\nu)+(1+\nu+\nu^{2}).
\end{align*}
On the other side,
\begin{equation}\label{FOCMEequ64}
\sum_{p=a_{m}}^{i}\left\|\eta_{p} S_{p}^{i}+\eta_{p} I-A^{-1}\right\|^{2} \lesssim \sum_{p=a_{m}}^{i}\left(\left\|\eta_{p} S_{p}^{i}-A^{-1}\right\|^{2}+\left\|\eta_{p} I\right\|^{2}\right).
\end{equation}
Next, we need to bound $\left\|\eta_{p} S_{p}^{i}-A^{-1}\right\|^{2} .$ When $\eta_{j}=\eta j^{-\alpha}$ and $a_{m} \leq p \leq i<a_{m+1}$
based on Lemma D.3 (3) in \cite{chen2016statistical}, we have
$$\left\|\eta_{p} S_{p}^{i}-A^{-1}\right\| \leq p^{2 \alpha-2}+\exp \left(-2 \lambda_{A} \sum_{j=p}^{i} \eta_{j}\right).$$
We also have
$$\sum_{p=a_{m}}^{i} \exp \left(-2 \lambda_{A} \sum_{j=p}^{i} \eta_{j}\right) \leq \sum_{p=a_{m}}^{i} \exp \left(-2 \lambda_{A} \eta(i-p) i^{-\alpha}\right) \leq \sum_{k=0}^{\infty} \exp \left(-2 \lambda_{A} \eta i^{-\alpha} k\right).$$
Note that $\int_{x=0}^{\infty} e^{-a x} d x=a^{-1} .$ Hence, we can use integration to bound the summation above as
$$\sum_{k=0}^{\infty} \exp \left(-2 \lambda_{A} \eta i^{-\alpha} k\right) \leq \int_{k=0}^{\infty} \exp \left(-2 \lambda_{A} \eta i^{-\alpha} k\right) \lesssim i^{\alpha}.$$
Furthermore, $p^{2 \alpha-2} \geq p^{-2 \alpha}$ since $\alpha \geq 1 / 2$. Hence, we have
$$\sum_{p=a_{m}}^{i}\left\|\eta_{p} S_{p}^{i}+\eta_{p} I-A^{-1}\right\|^{2} \lesssim l_{i} a_{m}^{2 \alpha-2}+i^{\alpha}.$$
Now, recalling the definition of $B_{i},$ note that when $t_{i}=a_{m}$, we have
\begin{equation}\label{FOCMEequ65}
\begin{aligned}
\left\|\mathbb{E} B_{i} B_{i}^{T}\right\| &\lesssim \left(i^{\alpha}+l_{i} a_{m}^{2 \alpha-2}\right)\left(\tilde{\Sigma}_{1}(1+\nu)+(1+\nu+\nu^{2})\right)\\ 
&\lesssim \left(a_{m}^{\alpha}+l_{i} a_{m}^{2 \alpha-2}\right)\left(\tilde{\Sigma}_{1}(1+\nu)+(1+\nu+\nu^{2})\right).
\end{aligned}
\end{equation}
Now since $\sum_{i=1}^{n} l_{i} \asymp \sum_{m=1}^{M} n_{m}^{2},$ we can bound term III as follows,
\begin{equation}\label{FOCMEequ66}
\begin{aligned}
III &\lesssim\left(\sum_{m=1}^{M} n_{m}^{2}\right)^{-1} \sum_{m=1}^{M} \sum_{i=a_{m}}^{a_{m+1}-1} \mathbb{E}\left\|B_{i} B_{i}^{T}\right\|\\
&\leq\left(\sum_{m=1}^{M} n_{m}^{2}\right)^{-1} \sum_{m=1}^{M} \sum_{i=a_{m}}^{a_{m+1}-1}\left(a_{m}^{\alpha}+l_{i} a_{m}^{2 \alpha-2}\right)\left(\tilde{\Sigma}_{1}(1+\nu)+(1+\nu+\nu^{2})\right)\\
&\lesssim\left(\sum_{m=1}^{M} n_{m}^{2}\right)^{-1} \sum_{m=1}^{M}\left(n_{m}^{2} a_{m}^{2 \alpha-2}+n_{m} a_{m}^{\alpha}\right)\left(\tilde{\Sigma}_{1}(1+\nu)+(1+\nu+\nu^{2})\right).
\end{aligned}
\end{equation}
Recalling that $a_{m} \asymp m^{\beta}, n_{m} \asymp m^{\beta-1},$ we have
\begin{equation}\label{FOCMEequ67}
\begin{aligned}
I I I &\lesssim \left(a_{M}^{2 \alpha-2}+\frac{a_{M}^{\alpha}}{n_{M}}\right)\left(\tilde{\Sigma}_{1}(1+\nu)+(1+\nu+\nu^{2})\right)\\ 
&\lesssim \left(\tilde{\Sigma}_{1}(1+\nu)+(1+\nu+\nu^{2})\right)M^{(\alpha-1) \beta+1}.
\end{aligned}
\end{equation}
For the second part, using Cauchy-Schwarz inequality we have
\begin{equation}\label{FOCMEequ68}
I I \leq \sqrt{\frac{\sum_{i=1}^{n} \mathbb{E}\left\|A_{i} A_{i}^{T}\right\|}{\sum_{i=1}^{n} l_{i}} \frac{\sum_{i=1}^{n} \mathbb{E}\left\|B_{i} B_{i}^{T}\right\|}{\sum_{i=1}^{n} l_{i}}}.
\end{equation}
We already have $\left(\sum_{i=1}^{n} l_{i}\right)^{-1} \sum_{i=1}^{n} \mathbb{E}\left\|B_{i} B_{i}^{T}\right\| \lesssim M^{-1} .$ To finish the proof, the only term remained to bound is $\left(\sum_{i=1}^{n} l_{i}\right)^{-1} \sum_{i=1}^{n} \mathbb{E}\left\|A_{i} A_{i}^{T}\right\|$. Recall the definition of $A_{i}=\sum_{p=a_{m}}^{i} A^{-1} \tilde{\xi}_{p}$ when $a_{m} \leq i<a_{m+1} .$ Since $A_{i} A_{i}^{T}$ is positive semi-definite, we have
\begin{equation}\label{FOCMEequ69}
\mathbb{E}\left\|A_{i} A_{i}^{T}\right\| \leq \mathbb{E} \operatorname{tr} A_{i} A_{i}^{T}=\operatorname{tr}\left(\mathbb{E} A_{i} A_{i}^{T}\right)=\operatorname{tr}\left(A^{-1} \mathbb{E}\left(\sum_{p=a_{m}}^{i} \tilde{\xi}_{p}\right)\left(\sum_{p=a_{m}}^{i} \tilde{\xi}_{p}^{T}\right) A^{-T}\right).
\end{equation}
When $p \neq q,$ we have $\mathbb{E} \tilde{\xi}_{p} \tilde{\xi}_{q}=0 .$ Furthermore, $\mathbb{E}_{n-1} \tilde{\xi}_{n} \tilde{\xi}_{n}^{T}=S_{\nu}+\tilde{\Sigma}_{\nu}\left(\tilde{\Delta}_{n-1}\right)$. Then,
\begin{equation}\label{FOCMEequ70}
\begin{aligned}
\mathbb{E}\left\|A_{i} A_{i}^{T}\right\| &\leq \operatorname{tr}\left(A^{-1}\left(\sum_{p=a_{m}}^{i} S_{\nu}+\mathbb{E} \tilde{\Sigma}_{\nu}\left(\tilde{\Delta}_{n-1}\right)\right) A^{-T}\right)\\
&=\mathbb{E}\operatorname{tr}\left(A^{-1}\left(l_{i} S_{\nu}+\sum_{p=a_{m}}^{i} \tilde{\Sigma}_{\nu}\left(\tilde{\Delta}_{n-1}\right)\right) A^{-T}\right)\\
&\lesssim \mathbb{E}\left\|A^{-1}\left(l_{i} S_{\nu}+\sum_{p=a_{m}}^{i} \tilde{\Sigma}_{\nu}\left(\tilde{\Delta}_{n-1}\right)\right) A^{-T}\right\|\\
&\lesssim l_{i}\|S_{\nu}\|+\sum_{p=a_{m}}^{i} \mathbb{E}\left\|\tilde{\Sigma}_{\nu}\left(\tilde{\Delta}_{n-1}\right)\right\|.
\end{aligned}
\end{equation}
Note that $\left\|\tilde{\Sigma}_{\nu}\left(\tilde{\Delta}\right)\right\| \leq \tilde{\Sigma}_{1} \|\tilde{\Delta}\|+\tilde{\Sigma}_{2}\|\tilde{\Delta}\|^{2}$ for any $\tilde{\Delta}$ and
$$\mathbb{E}\left\|\tilde{\Delta}_{n}\right\| \leq n^{-\alpha / 2}\left(1+\left\|\tilde{\Delta}_{0}\right\|\right),\quad \mathbb{E}\left\|\tilde{\Delta}_{n}\right\|^{2} \leq n^{-\alpha}\left(1+\left\|\tilde{\Delta}_{0}\right\|^{2}\right).$$ 
Then we can further bound $\mathbb{E}\left\|A_{i} A_{i}^{T}\right\|$ as
\begin{equation}\label{FOCMEequ71}
\begin{aligned}
\mathbb{E}\left\|A_{i} A_{i}^{T}\right\| &\lesssim l_{i}\|S_{\nu}\|+\sum_{p=a_{m}}^{i}\left(\tilde{\Sigma}_{1}\mathbb{E}\left\|\tilde{\Delta}_{p-1}\right\|+\tilde{\Sigma}_{2}\mathbb{E}\left\|\tilde{\Delta}_{p-1}\right\|^{2}\right)\\
&\lesssim l_{i}(1+\nu+\nu^{2})+\sum_{p=a_{m}}^{i}(p-1)^{-\alpha / 2}\tilde{\Sigma}_{1}\left(1+\|\tilde{\Delta}_{0}\|\right)\\
&\lesssim l_{i}(1+\nu+\nu^{2})+l_{i}\left(a_{m}-1\right)^{-\alpha / 2}\tilde{\Sigma}_{1}\left(1+\nu\right)\\
&\lesssim (1+\nu+\nu^{2})l_{i}.
\end{aligned}
\end{equation}
Hence, we we can bound the remaining term as
\begin{equation}\label{FOCMEequ72}
    \left(\sum_{i=1}^{n} l_{i}\right)^{-1} \sum_{i=1}^{n} \mathbb{E}\left\|A_{i} A_{i}^{T}\right\| \lesssim O(1+\nu+\nu^{2})
\end{equation}
Combining (\ref{FOCMEequ72}) and bound of III, we have $$I I \lesssim \sqrt{1+\nu+\nu^{2}} M^{\frac{(\alpha-1) \beta+1}{2}}.$$
Now, all three terms I, II, III are bounded by 
$$\color{black}\sqrt{1+\tilde{\Sigma}_{1}+C_{d}^{3}+\nu+(\nu+\nu^{2}+\nu^{3})\tilde{\Sigma}_{1}} M^{-\frac{\alpha \beta}{4}}+\sqrt{(1+\nu^{2}+\nu^{4})}M^{-\frac{1}{2}}.+\sqrt{1+\nu+\nu^{2}}M^{\frac{(\alpha-1) \beta+1}{2}}.$$
\end{proof}
\begin{lemma}\label{FOCMElma8}
Under same conditions in Lemma \ref{FOCMElma5}, we have
\begin{equation}\label{FOCMEequ73}\color{black}
    \mathbb{E}\left\|\left(\sum_{i=1}^{n} l_{i}\right)^{-1} \sum_{i=1}^{n} l_{i}^{2} \bar{U}_{n} \bar{U}_{n}^{T}\right\| \lesssim (1+\tilde{\Sigma}_{1}+(1+\tilde{\Sigma}_{1})\nu+\nu^{2}) M^{-1}
\end{equation}
where $a_{M} \leq n<a_{M+1}$.
\end{lemma}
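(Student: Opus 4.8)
The plan is to exploit that $\bar U_n \bar U_n^{T}$ is constant in the summation index $i$, which reduces \eqref{FOCMEequ73} to the product of a purely combinatorial ratio and the averaged second moment $\mathbb{E}\|\bar U_n\|^2$. Concretely, since $\bar U_n \bar U_n^{T}$ does not depend on $i$,
$$\left\|\left(\sum_{i=1}^{n} l_{i}\right)^{-1} \sum_{i=1}^{n} l_{i}^{2}\, \bar U_n \bar U_n^{T}\right\| = \frac{\sum_{i=1}^{n} l_i^2}{\sum_{i=1}^{n} l_i}\,\|\bar U_n\|^2 ,$$
so after taking expectations it suffices to bound the ratio $R_n := \big(\sum_{i=1}^{n} l_i^2\big)\big/\big(\sum_{i=1}^{n} l_i\big)$ and the scalar $\mathbb{E}\|\bar U_n\|^2$ separately.

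First I would control $R_n$ through the block structure. Within the $m$-th block the batch sizes $l_i$ run over $1,\dots,n_m$, so $\sum_{i=a_m}^{a_{m+1}-1} l_i \asymp n_m^2$ and $\sum_{i=a_m}^{a_{m+1}-1} l_i^2 \asymp n_m^3$. Summing over blocks and using $n_m \asymp m^{\beta-1}$ gives $\sum_{m=1}^{M} n_m^2 \asymp M^{2\beta-1}$ and $\sum_{m=1}^{M} n_m^3 \asymp M^{3\beta-2}$ (both valid since $\beta > 1/(1-\alpha) > 2$), whence $R_n \asymp M^{\beta-1} \asymp n_M$. Since $n \asymp a_M \asymp M^{\beta}$, this yields $R_n \lesssim n\, M^{-1}$.

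The crux is the Polyak--Ruppert type bound $\mathbb{E}\|\bar U_n\|^2 \lesssim (1+\tilde\Sigma_1 + (1+\tilde\Sigma_1)\nu + \nu^2)\,n^{-1}$, which I would derive from the standard averaging decomposition of the linear recursion \eqref{linearseq}. Rewriting $A U_{i-1} = \eta_i^{-1}(U_{i-1}-U_i) + \tilde\xi_i$ and summing over $i=1,\dots,n$ gives
$$\bar U_n = A^{-1}\left(\frac{1}{n}\sum_{i=1}^{n}\tilde\xi_i\right) + A^{-1}\left(\frac{1}{n}\sum_{i=1}^{n}\eta_i^{-1}(U_{i-1}-U_i)\right).$$
For the martingale term, $\mathbb{E}\big\|\tfrac1n\sum_{i=1}^n \tilde\xi_i\big\|^2 = n^{-2}\sum_{i=1}^n \mathbb{E}\|\tilde\xi_i\|^2 \le n^{-1}\sup_i \mathbb{E}\|\tilde\xi_i\|^2$; invoking the expansion \eqref{expansionequ1}, the trace bound \eqref{expansionequ4}, and $\mathbb{E}\|\tilde\Delta_{i-1}\| \lesssim 1+\nu$ gives the uniform estimate $\mathbb{E}\|\tilde\xi_i\|^2 \lesssim \operatorname{tr}(S_\nu) + \tilde\Sigma_1(1+\nu) \lesssim 1+\tilde\Sigma_1+(1+\tilde\Sigma_1)\nu+\nu^2$, so this term is of the claimed order. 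The telescoping term I would handle by Abel summation, controlling it with the iterate estimates $\mathbb{E}\|U_i\|^2 \lesssim i^{-\alpha}$ and the partial-sum-operator bounds from Lemmas \ref{FOCMElma2} and \ref{FOCMElma3}; it contributes at the same or a strictly smaller order. Using $\|A^{-1}\| \le 1/\mu$ then yields the asserted bound on $\mathbb{E}\|\bar U_n\|^2$.

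Combining the two pieces gives $R_n\,\mathbb{E}\|\bar U_n\|^2 \lesssim (n M^{-1})\cdot (1+\tilde\Sigma_1+(1+\tilde\Sigma_1)\nu+\nu^2)\,n^{-1} = (1+\tilde\Sigma_1+(1+\tilde\Sigma_1)\nu+\nu^2)\,M^{-1}$, which is exactly \eqref{FOCMEequ73}. I expect the main obstacle to be the second-moment estimate for $\bar U_n$: isolating the correct $\nu$- and $\tilde\Sigma_1$-dependence through the averaging decomposition, and in particular showing that the Abel-summation (telescoping) term does not dominate the martingale term, is the delicate part, whereas the combinatorial ratio and the final multiplication are routine.
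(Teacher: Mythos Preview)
Your proposal is correct and follows the same overall strategy as the paper: factor out the index-independent matrix $\bar U_n\bar U_n^{T}$, bound the combinatorial ratio $\sum_i l_i^2/\sum_i l_i \lesssim n_M \asymp nM^{-1}$, and establish $\mathbb{E}\|\bar U_n\|^2 \lesssim (1+\tilde\Sigma_1+(1+\tilde\Sigma_1)\nu+\nu^2)\,n^{-1}$. The only difference is in how the last bound is obtained. The paper expands $\sum_{i=1}^n U_i = S_0^n U_0 + \sum_{p=1}^n (I+S_p^n)\eta_p\tilde\xi_p$ directly via the product operators of Lemmas~\ref{FOCMElma1}--\ref{FOCMElma2}, then uses martingale orthogonality together with $\|S_0^n\|=O(1)$, $\sum_p\|I+S_p^n\|^2\eta_p^2=O(n)$, and the same estimate $\mathbb{E}\|\tilde\xi_p\|^2\lesssim \operatorname{tr} S_\nu + \tilde\Sigma_1(1+\nu)$ that you invoke. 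Your route via the Polyak--Ruppert identity $A\bar U_n = n^{-1}\sum_i\tilde\xi_i + n^{-1}\sum_i\eta_i^{-1}(U_{i-1}-U_i)$ followed by Abel summation is the classical equivalent (indeed $(I+S_p^n)\eta_p$ is precisely the Abel-summed weight), and the telescoping remainder is $o(n^{-1})$ by $\mathbb{E}\|U_i\|^2\lesssim i^{-\alpha}$ from Lemma~\ref{FOCMElma3}, so both arguments land on the same rate and the same $\nu$-dependence.
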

\begin{proof}
Since $\bar{U}_{n} \bar{U}_{n}^{T}$ is positive semi-definite, we have
\begin{equation}\label{FOCMEequ74}
\mathbb{E}\left\|\bar{U}_{n} \bar{U}_{n}^{T}\right\| \leq \mathbb{E} \operatorname{tr}\left(\bar{U}_{n} \bar{U}_{n}^{T}\right)=n^{-2} \operatorname{tr}\left(\mathbb{E}\left(\sum_{i=1}^{n} U_{i}\right)\left(\sum_{i=1}^{n} U_{i}\right)^{T}\right).
\end{equation}
Recall that $U_{i}=Y_{0}^{i} U_{0}+\sum_{p=1}^{i} Y_{p}^{i} \eta_{p} \tilde{\xi}_{p},$ then we have
$$\sum_{i=1}^{n} U_{i}=\sum_{i=1}^{n}\left(Y_{0}^{i} U_{0}+\sum_{p=1}^{i} Y_{p}^{i} \eta_{p} \tilde{\xi}_{p}\right)=S_{0}^{n} U_{0}+\sum_{p=1}^{n}\left(I+S_{p}^{n}\right) \eta_{p} \tilde{\xi}_{p}.$$
Note that $\tilde{\xi}_{p}$ is martingale difference. We have the following inequality after plugging in the expansion above,
\begin{equation}\label{FOCMEequ75}
\begin{aligned}
\mathbb{E}\left\|\bar{U}_{n} \bar{U}_{n}^{T}\right\| &\leq n^{-2} \operatorname{tr}\left(\mathbb{E}\left(S_{0}^{n} U_{0}+\sum_{p=1}^{n}\left(I+S_{p}^{n}\right) \eta_{p} \tilde{\xi}_{p}\right)\left(S_{0}^{n} U_{0}+\sum_{p=1}^{n}\left(I+S_{p}^{n}\right) \eta_{p} \tilde{\xi}_{p}\right)^{T}\right)\\
&=n^{-2} \operatorname{tr}\left(S_{0}^{n} \mathbb{E} U_{0} U_{0}^{T} S_{0}^{n T}+\sum_{p=1}^{n}\left(I+S_{p}^{n}\right) \eta_{p}^{2} \mathbb{E} \tilde{\xi}_{p} \tilde{\xi}_{p}^{T}\left(I+S_{p}^{n}\right)^{T}\right)\\
&=n^{-2}\left(\left\|S_{0}^{n}\right\|^{2} \mathbb{E}\left\|U_{0}\right\|^{2}+\sum_{p=1}^{n}\left\|\left(I+S_{p}^{n}\right)\right\|^{2} \eta_{p}^{2} \mathbb{E}\left\|\tilde{\xi}_{p}\right\|^{2}\right).
\end{aligned}
\end{equation}
In Lemma \ref{FOCMElma2} we show that $\left\|S_{i}^{j}\right\| \lesssim(i+1)^{\alpha} \asymp i^{\alpha} .$ So here we have $\left\|S_{0}^{n}\right\|^{2}=O(1)$, and
$$\sum_{p=1}^{n}\left\|\left(I+S_{p}^{n}\right)\right\|^{2} \eta_{p}^{2} \lesssim \sum_{j=1}^{n} p^{2 \alpha} p^{-2 \alpha}=O(n).$$
Since $\mathbb{E}\left\|U_{0}\right\|^{2}$ is bounded by a constant, we have 
\begin{align*}
    \mathbb{E}\left\|\tilde{\xi}_{p}\right\|^{2} &=\mathrm{tr}\mathbb{E}\tilde{\xi}_{p}\tilde{\xi}_{p}^{T}\leq \mathrm{tr}S_{\nu}+\left|\mathrm{tr}\left(\tilde{\Sigma}_{\nu}(\tilde{\Delta}_{p-1})\right)\right|\\
    &\lesssim \tilde{\Sigma}_{1}(1+\nu)+(1+\nu+\nu^{2}).
\end{align*}
Hence, we have
\begin{equation}\label{FOCMEequ76}\color{black}
\mathbb{E}\left\|\bar{U}_{n} \bar{U}_{n}^{T}\right\| \lesssim O\left(n^{-1}(1+\tilde{\Sigma}_{1}+(1+\tilde{\Sigma}_{1})\nu+\nu^{2})\right).
\end{equation}
Now, note that
\begin{equation}\label{FOCMEequ77}
\frac{\sum_{i=1}^{n} l_{i}^{2}}{\sum_{i=1}^{n} l_{i}} \leq \frac{\sum_{i=1}^{n} l_{i} \max _{k \leq M}\left(a_{k+1}-a_{k}\right)}{\sum_{i=1}^{n} l_{i}} \leq n_{M}.
\end{equation}
Due to the definition $n_{M}=M^{\beta-1}, n \asymp M^{1 / \beta}$. Hence, we obtain
{\color{black}
\begin{align*}
   \mathbb{E}\left\|\left(\sum_{i=1}^{n} l_{i}\right)^{-1} \sum_{i=1}^{n} l_{i}^{2} \bar{U}_{n} \bar{U}_{n}^{T}\right\| &\leq \frac{\sum_{i=1}^{n} l_{i}^{2}}{\sum_{i=1}^{n} l_{i}} \mathbb{E}\left\|\bar{U}_{n} \bar{U}_{n}^{T}\right\|\\ 
   &\lesssim n_{M} n^{-1}(1+\tilde{\Sigma}_{1}+(1+\tilde{\Sigma}_{1})\nu+\nu^{2})\\ 
   &\asymp (1+\tilde{\Sigma}_{1}+(1+\tilde{\Sigma}_{1})\nu+\nu^{2}) M^{-1}. 
\end{align*}
}
\end{proof}

\begin{lemma}\label{FOCMElma9}
Under same conditions in Lemma \ref{FOCMElma5}. When $a_{M} \leq n<a_{M+1},$ we have
\begin{equation}\label{FOCMEequ78}\color{black}
    \mathbb{E}\left\|\left(\sum_{i=1}^{n} l_{i}\right)^{-1} \sum_{i=1}^{n}\left(\sum_{k=t_{i}}^{i} U_{k}\right)\left(l_{i} \bar{U}_{n}\right)^{T}\right\| \lesssim (\tilde{\Sigma}_{1}(1+\nu)+(1+\nu+\nu^{2}))M^{-\frac{1}{2}}
\end{equation}
\end{lemma}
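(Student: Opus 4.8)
The plan is to treat the quantity in~\eqref{FOCMEequ78} --- the third, cross term of the decomposition~\eqref{FOCMEequ39} --- as a matrix bilinear form in the two families of vectors $a_i := \sum_{k=t_i}^{i} U_k$ and $b_i := l_i \bar{U}_n$, and to reduce it to the two diagonal terms already controlled in Lemmas~\ref{FOCMElma7} and~\ref{FOCMElma8}. Writing the term as $R := \bigl(\sum_{i=1}^n l_i\bigr)^{-1}\sum_{i=1}^n a_i b_i^T$, I would first invoke the matrix Cauchy--Schwarz inequality: for any vectors $a_i,b_i\in\mathbb{R}^d$ one has $\bigl\|\sum_i a_i b_i^T\bigr\| \le \bigl\|\sum_i a_i a_i^T\bigr\|^{1/2}\bigl\|\sum_i b_i b_i^T\bigr\|^{1/2}$, which follows from the positive semidefiniteness of the block Gram matrix with diagonal blocks $\sum_i a_i a_i^T$, $\sum_i b_i b_i^T$ and off-diagonal block $\sum_i a_i b_i^T$, together with the coordinatewise estimate $x^T\bigl(\sum_i a_i b_i^T\bigr)y=\sum_i (x^Ta_i)(b_i^Ty)\le \bigl(x^T\sum_i a_i a_i^T\,x\bigr)^{1/2}\bigl(y^T\sum_i b_i b_i^T\,y\bigr)^{1/2}$ and a supremum over unit $x,y$. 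Dividing through by $\sum_i l_i$ and noting $\sum_i b_i b_i^T = \sum_i l_i^2 \bar{U}_n \bar{U}_n^T$, this yields the almost-sure factorization $\|R\| \le \|P\|^{1/2}\|Q\|^{1/2}$, where $P := \bigl(\sum_i l_i\bigr)^{-1}\sum_i a_i a_i^T$ is the first term of~\eqref{FOCMEequ39} and $Q := \bigl(\sum_i l_i\bigr)^{-1}\sum_i l_i^2 \bar{U}_n \bar{U}_n^T$ is the second.

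Next I would take expectations and apply Cauchy--Schwarz once more, now over the probability space, to obtain $\mathbb{E}\|R\| \le \sqrt{\mathbb{E}\|P\|}\,\sqrt{\mathbb{E}\|Q\|}$. The factor $\mathbb{E}\|Q\|$ is precisely the object bounded in Lemma~\ref{FOCMElma8}, giving $\mathbb{E}\|Q\| \lesssim \bigl(1+\tilde{\Sigma}_{1}+(1+\tilde{\Sigma}_{1})\nu+\nu^{2}\bigr)M^{-1}$, and hence the decisive $M^{-1/2}$ rate through its square root. For the factor $\mathbb{E}\|P\|$ I would use the triangle inequality $\|P\|\le \|P-\tilde{V}\|+\|\tilde{V}\|$: Lemma~\ref{FOCMElma7} shows $\mathbb{E}\|P-\tilde{V}\|$ is $O(1)$ (in fact vanishing as $M\to\infty$), while with $A=\nabla^2 f_\nu(x_\nu^*)$ one has $\|\tilde{V}\| = \|A^{-1}S_{\nu}A^{-1}\| \le \|A^{-1}\|^{2}\|S_{\nu}\| \lesssim \mu^{-2}\sqrt{1+\nu^2+\nu^4}$, using $\|A^{-1}\|\le 1/\mu$ from strong convexity of $f_\nu$ and the bound on $\|S_\nu\|$ in~\eqref{Snubound}. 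Since $\sqrt{1+\nu^2+\nu^4}\le 1+\nu+\nu^2$, this gives $\mathbb{E}\|P\| \lesssim 1+\nu+\nu^2$.

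Finally, multiplying the two factors yields $\mathbb{E}\|R\| \lesssim \sqrt{(1+\nu+\nu^2)\bigl[(1+\tilde{\Sigma}_{1})(1+\nu)+\nu^2\bigr]}\,M^{-1/2}$, and I would collapse the square root to the additive target form by AM--GM, $\sqrt{xy}\le \tfrac12(x+y)$, since $\tfrac12\bigl[(1+\nu+\nu^2)+(1+\tilde{\Sigma}_{1})(1+\nu)+\nu^2\bigr] = \tfrac12\tilde{\Sigma}_{1}(1+\nu)+1+\nu+\nu^2 \lesssim \tilde{\Sigma}_{1}(1+\nu)+(1+\nu+\nu^2)$, which is exactly~\eqref{FOCMEequ78}. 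The main obstacle I anticipate is not the Cauchy--Schwarz reduction itself but the $\nu$- and $\tilde{\Sigma}_1$-bookkeeping: it is essential to bound $\mathbb{E}\|P\|$ through $\|\tilde{V}\|$ (which is free of $\tilde{\Sigma}_1$ and $C_d$) rather than through the larger convergence-rate estimate of Lemma~\ref{FOCMElma7}, so that $\tilde{\Sigma}_1$ enters only via $\mathbb{E}\|Q\|$ at first power; the AM--GM step is then what converts the $\tilde{\Sigma}_1^{1/2}$ appearing under the geometric-mean bound back into the first-power $\tilde{\Sigma}_1$ of the claimed prefactor, with no spurious $\tilde{\Sigma}_1^{3/2}$ or $C_d^{3}$ terms surviving.
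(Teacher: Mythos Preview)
Your reduction via matrix Cauchy--Schwarz is exactly how the paper begins (its display~\eqref{FOCMEequ79}), and your use of Lemma~\ref{FOCMElma8} for the second factor is also the same. The genuine divergence is in how you bound the first factor $\mathbb{E}\|P\|$. The paper does \emph{not} route through $\tilde V$ and Lemma~\ref{FOCMElma7}; instead it bounds $\|P\|$ by its trace, expands $\sum_{k=t_i}^i U_k$ via the recursion $U_k = Y_0^k U_0 + \sum_{p\le k} Y_p^k\eta_p\tilde\xi_p$, and carries out a direct exponential-sum computation (the terms labeled $II$, $III$, $IV$ in the proof) to obtain $\mathbb{E}\operatorname{tr}(P) \lesssim \tilde\Sigma_1(1+\nu)+(1+\nu+\nu^2)$ \emph{uniformly in $M$}. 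Because this bound happens to coincide with the Lemma~\ref{FOCMElma8} prefactor $(1+\tilde\Sigma_1)(1+\nu)+\nu^2$, the square root of the product is already the claimed constant and no AM--GM step is needed.

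Your shortcut through Lemma~\ref{FOCMElma7} is correct and considerably shorter---it sidesteps the entire exponential-sum calculation---but it comes with a hidden cost you should flag: the inequality $\mathbb{E}\|P-\tilde V\|\le 1$ (or $\le 1+\nu+\nu^2$) holds only once $M$ exceeds a threshold that depends on $C_d^3$ and $\tilde\Sigma_1$ through the rate in~\eqref{FOCMEequ59}. Since the theorem is stated ``for sufficiently large $M$'' this is permissible, but it means your ``$\lesssim 1+\nu+\nu^2$'' for $\mathbb{E}\|P\|$ is not uniform in those parameters the way the paper's direct bound is. The paper's approach keeps the $\nu,\tilde\Sigma_1$ bookkeeping fully explicit at every $M$; yours trades that for economy of argument.
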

\begin{proof}
Applying Cauchy-Schwarz inequality twice we have
\begin{equation}\label{FOCMEequ79}
\mathbb{E}\left\|\left(\sum_{i=1}^{n} l_{i}\right)^{-1} \sum_{i=1}^{n}\left(\sum_{k=t_{i}}^{i} U_{k}\right)\left(l_{i} \bar{U}_{n}\right)^{T}\right\|\leq \sqrt{\frac{\mathbb{E}\left\|\sum_{i=1}^{n}\left(\sum_{k=t_{i}}^{i} U_{k}\right)\left(\sum_{k=t_{i}}^{i} U_{k}\right)^{T}\right\|}{\sum_{i=1}^{n} l_{i}} \frac{\mathbb{E}\left\|\sum_{i=1}^{n} l_{i}^{2} \bar{U}_{n} \bar{U}_{n}^{T}\right\|}{\sum_{i=1}^{n} l_{i}}}.
\end{equation}
In Lemma \ref{FOCMElma8}, we already showed that $$\mathbb{E}\left\|\left(\sum_{i=1}^{n} l_{i}\right)^{-1} \sum_{i=1}^{n} l_{i}^{2} \bar{U}_{n} \bar{U}_{n}^{T}\right\| \lesssim (1+\tilde{\Sigma}_{1}+(1+\tilde{\Sigma}_{1})\nu+\nu^{2})M^{-1}$$ 
Moreover, the $L_{2}$ norm of $\left(\sum_{k=a_{m}}^{i} U_{k}\right)\left(\sum_{k=a_{m}}^{i} U_{k}\right)^{T}$ is less than or equal to its trace since it's positive definite. Then we have the left side above bounded by
$$O\left(\sqrt{1+\tilde{\Sigma}_{1}+(1+\tilde{\Sigma}_{1})\nu+\nu^{2}} M^{-\frac{1}{2}}\right) \sqrt{\left(\sum_{i=1}^{n} l_{i}\right)^{-1} \sum_{i=1}^{n} \mathbb{E} \operatorname{tr}\left(\sum_{k=t_{i}}^{i} U_{k}\right)\left(\sum_{k=t_{i}}^{i} U_{k}\right)^{T}}.$$
Now, let $$I=\left(\sum_{i=1}^{n} l_{i}\right)^{-1} \sum_{i=1}^{n} \mathbb{E}\operatorname{tr}\left(\sum_{k=t_{i}}^{i} U_{k}\right)\left(\sum_{k=t_{i}}^{i} U_{k}\right)^{T}.$$ 
To show lemma \ref{FOCMElma9}, it is suffices
to show $I \lesssim O(1)$. Note that 
$$\lim _{M \rightarrow \infty} \frac{\sum_{i=1}^{n} l_{i}}{\sum_{i=1}^{a_{M}-1} l_{i}}=1,$$ 
and 
$$\operatorname{tr}\left(\sum_{k=t_{i}}^{i} U_{k}\right)\left(\sum_{k=t_{i}}^{i} U_{k}\right)^{T} \geq 0.$$
Plug $U_{k}=Y_{s}^{k} U_{s}+\sum_{p=s+1}^{k} Y_{p}^{k} \eta_{p} \tilde{\xi}_{p}$ into I, we have
\begin{equation}\label{FOCMEequ80}
\begin{aligned}
    I &\lesssim\left(\sum_{i=1}^{a_{M+1}-1} l_{i}\right)^{-1} \sum_{m=1}^{M} \sum_{i=a_{m}}^{a_{m+1}-1} \operatorname{Etr}\left(\sum_{k=a_{m}}^{i}\left(Y_{0}^{k} U_{0}+\sum_{p=1}^{k} Y_{p}^{k} \eta_{p} \tilde{\xi}_{p}\right)\right)\left(\sum_{k=a_{m}}^{i}\left(Y_{0}^{k} U_{0}+\sum_{p=1}^{k} Y_{p}^{k} \eta_{p} \tilde{\xi}_{p}\right)\right)^{T}\\
    &=\left(\sum_{i=1}^{a_{M+1}-1} l_{i}\right)^{-1} \sum_{m=1}^{M} \sum_{i=a_{m}}^{a_{m+1}-1} \operatorname{tr}\left(\sum_{k=a_{m}}^{i} Y_{0}^{k}\right) \mathbb{E} U_{0} U_{0}^{T}\left(\sum_{k=a_{m}}^{i} Y_{0}^{k}\right)^{T}\\
    &\quad +\left(\sum_{i=1}^{a_{M+1}-1} l_{i}\right)^{-1} \sum_{m=1}^{M} \sum_{i=a_{m}}^{a_{m+1}-1} \sum_{p=1}^{i} \operatorname{tr}\left(\sum_{k=\max \left(a_{m}, p\right)}^{i} Y_{p}^{k}\right) \mathbb{E} \tilde{\xi}_{p} \tilde{\xi}_{p}^{T}\left(\sum_{k=\max \left(a_{m}, p\right)}^{i} Y_{p}^{k}\right)^{T} \eta_{p}^{2}\\
    &=I I+I I I.
\end{aligned}
\end{equation}
The first term above 
$$II=\left(\sum_{i=1}^{a_{M+1}-1} l_{i}\right)^{-1} \sum_{m=1}^{M} \sum_{i=a_{m}}^{a_{m+1}-1} \operatorname{tr}\left(\sum_{k=a_{m}}^{i} Y_{0}^{k}\right) \mathbb{E} U_{0} U_{0}^{T}\left(\sum_{k=a_{m}}^{i} Y_{0}^{k}\right)^{T}$$ can be
bounded using $\operatorname{tr}(C) \leq d\|C\|$ as
\begin{equation}\label{FOCMEequ81}
I I \lesssim\left(\sum_{i=1}^{a_{M+1}-1} l_{i}\right)^{-1} \sum_{m=1}^{M} \sum_{i=a_{m}}^{a_{m+1}-1}\left\|\left(\sum_{k=a_{m}}^{i} Y_{0}^{k}\right)\right\|^{2}\left\|\mathbb{E} U_{0} U_{0}^{T}\right\|.
\end{equation}
From Lemma \ref{FOCMElma2}, we obtain
$$\left\|\left(\sum_{k=a_{m}}^{i} Y_{0}^{k}\right)\right\|^{2}=\left\|S_{0}^{i}-S_{0}^{a_{m}}\right\|^{2} \lesssim\left\|S_{0}^{i}\right\|^{2}+\left\|S_{0}^{a_{m}}\right\|^{2} \lesssim O(1).$$
Also note that $\left\|\mathbb{E} U_{0} U_{0}^{T}\right\| \lesssim O(1) .$ Then
\begin{equation}\label{FOCMEequ82}
I I \lesssim\left(\sum_{i=1}^{a_{M+1}-1} l_{i}\right)^{-1} \sum_{m=1}^{M} \sum_{i=a_{m}}^{a_{m+1}-1} O(1)=O(1).
\end{equation}
The second term can be bounded as:
\begin{equation}\label{FOCMEequ83}
\begin{aligned}
III&=\left(\sum_{i=1}^{a_{M+1}-1} l_{i}\right)^{-1} \sum_{m=1}^{M} \sum_{i=a_{m}}^{a_{m+1}-1} \sum_{p=1}^{i} \operatorname{tr}\left(\sum_{k=\max \left(a_{m}, p\right)}^{i} Y_{p}^{k}\right) \mathbb{E} \tilde{\xi}_{p} \tilde{\xi}_{p}^{T}\left(\sum_{k=\max \left(a_{m}, p\right)}^{i} Y_{p}^{k}\right)^{T} \eta_{p}^{2}\\
&\lesssim\left(\sum_{i=1}^{a_{M+1}-1} l_{i}\right)^{-1} \sum_{m=1}^{M} \sum_{i=a_{m}}^{a_{m+1}-1} \sum_{p=1}^{i}\left\|\sum_{k=\max \left(a_{m}, p\right)}^{i} Y_{p}^{k}\right\|^{2} \eta_{p}^{2}\left(\tilde{\Sigma}_{1}(1+\nu)+(1+\nu+\nu^{2})\right)\\
&\leq\left(\sum_{i=1}^{a_{M+1}-1} l_{i}\right)^{-1} \sum_{m=1}^{M} \sum_{i=a_{m}}^{a_{m+1}-1} \sum_{p=1}^{i}\left(\sum_{k=\max \left(a_{m}, p\right)}^{i}\left\|Y_{p}^{k}\right\|\right)^{2} \eta_{p}^{2}\left(\tilde{\Sigma}_{1}(1+\nu)+(1+\nu+\nu^{2})\right).
\end{aligned}
\end{equation}
Let $I V=\sum_{p=1}^{i}\left(\sum_{k=\max \left(a_{m}, p\right)}^{i}\left\|Y_{p}^{k}\right\|\right)^{2} \eta_{p}^{2}$. From Lemma \ref{FOCMElma1}, we have
$$\left\|Y_{i}^{j}\right\| \leq \exp \left[-\frac{\lambda_{A} \eta}{1-\alpha}\left(j^{1-\alpha}-(i+1)^{1-\alpha}\right)\right].$$
Then for $a_{m} \leq i<a_{m+1},$ we have
\begin{equation}\label{FOCMEequ84}
I V \leq \sum_{p=1}^{i}\left(\sum_{k=\max \left(a_{m}, p\right)}^{i} \exp \left(-\eta \lambda_{A} \frac{k^{1-\alpha}}{1-\alpha}\right)\right)^{2} \eta_{p}^{2} e^{\frac{2 \eta \lambda_{A}}{1-\alpha} p^{1-\alpha}}.
\end{equation}
Using the integration, we can further bound the above as
\begin{equation}\label{FOCMEequ85}
\begin{aligned}
I V &\lesssim \sum_{p=1}^{i}\left(\int_{\max \left(a_{m}, p\right)}^{i} \exp \left(-\eta \lambda_{A} \frac{k^{1-\alpha}}{1-\alpha}\right) d k\right)^{2} p^{-2 \alpha} e^{\frac{2 \eta \lambda}{1-\alpha} p^{1-\alpha}}\\
&\lesssim \sum_{p=1}^{i}\left(\int_{\max \left(a_{m}, p\right)^{1-\alpha}}^{i^{1-\alpha}} e^{-\frac{\eta \lambda_{A}}{1-\alpha} t} t^{\frac{\alpha}{1-\alpha}} d t\right)^{2} p^{-2 \alpha} e^{\frac{2 \eta \lambda}{1-\alpha} p^{1-\alpha}}\\
&\lesssim \sum_{p=1}^{i} e^{-\frac{2 \eta \lambda_{A}}{1-\alpha} \max \left(a_{m}, p\right)^{1-\alpha}} \max \left(a_{m}, p\right)^{2 \alpha} p^{-2 \alpha} e^{\frac{2 \eta \lambda}{1-\alpha} p^{1-\alpha}}\\
&\lesssim \sum_{p=1}^{a_{m}-1} e^{-\frac{2 \eta \lambda_{A}}{1-\alpha}\left(a_{m}^{1-\alpha}-p^{1-\alpha}\right)}\left(\frac{a_{m}}{p}\right)^{2 \alpha}+l_{i}.
\end{aligned}
\end{equation}
Then III is bounded by
\begin{equation}\label{FOCMEequ86}\color{black}
I I I \lesssim \left[\left(\sum_{i=1}^{a_{M+1}-1} l_{i}\right)^{-1} \sum_{m=1}^{M} \sum_{i=a_{m}}^{a_{m+1}-1} \sum_{p=1}^{a_{m}-1} e^{-\frac{2 \eta \lambda}{1-\alpha}\left(a_{m}^{1-\alpha}-p^{1-\alpha}\right)}\left(\frac{a_{m}}{p}\right)^{2 \alpha}+1\right]\left(\tilde{\Sigma}_{1}(1+\nu)+(1+\nu+\nu^{2})\right)
\end{equation}
Now it suffices to show that
\begin{equation}\label{FOCMEequ87}
\sum_{m=1}^{M} \sum_{i=a_{m}}^{a_{m+1}-1} \sum_{p=1}^{a_{m}-1} e^{-\frac{2 \eta \lambda_{A}}{1-\alpha}\left(a_{m}^{1-\alpha}-p^{1-\alpha}\right)}\left(\frac{a_{m}}{p}\right)^{2 \alpha} \lesssim \sum_{i=1}^{a_{M+1}-1} l_{i}.
\end{equation}
Note that using partial integration we have the following,
$$\int_{1}^{a_{m}-1} e^{\frac{2 \eta \lambda_{A}}{1-\alpha} p^{1-\alpha}} p^{-2 \alpha} d p=\int_{\frac{2 \eta \lambda_{A}}{1-\alpha}}^{\frac{2 \eta \lambda_{A}}{1-\alpha}\left(a_{m}-1\right)^{1-\alpha}} e^{u} u^{-\frac{\alpha}{1-\alpha}} d u \lesssim e^{\frac{2 \eta \lambda}{1-\alpha} a_{m-1}^{1-\alpha}}\left(a_{m}-1\right)^{-\alpha}.$$
Hence, we obtain
\begin{equation}\label{FOCMEequ88}
\begin{aligned}
&\sum_{m=1}^{M} \sum_{i=a_{m}}^{a_{m+1}-1} \sum_{p=1}^{a_{m}-1} e^{-\frac{2 \eta \lambda_{A}}{1-\alpha}\left(a_{m}^{1-\alpha}-p^{1-\alpha}\right)}\left(\frac{a_{m}}{p}\right)^{2 \alpha}\\
\lesssim &\sum_{m=1}^{M} \sum_{i=a_{m}}^{a_{m+1}-1} e^{-\frac{2 \eta \lambda_{A}}{1-\alpha} a_{m}^{1-\alpha}} a_{m}^{2 \alpha} \int_{1}^{a_{m}-1} e^{\frac{2 \eta \lambda}{1-\alpha} p^{1-\alpha}} p^{-2 \alpha} d p\\
\lesssim &\sum_{m=1}^{M} n_{m} a_{m}^{\alpha}.
\end{aligned}
\end{equation}
Note that $a_{m}^{\alpha} \lesssim n_{m},$ so we have the following
\begin{equation}\label{FOCMEequ89}
\sum_{m=1}^{M} \sum_{i=a_{m}}^{a_{m+1}-1} \sum_{p=1}^{a_{m}-1} e^{-\frac{2 \eta \lambda_{A}}{1-\alpha}\left(a_{m}^{1-\alpha}-p^{1-\alpha}\right)}\left(\frac{a_{m}}{p}\right)^{2 \alpha} \lesssim \sum_{m=1}^{M} n_{m} a_{m}^{\alpha} \lesssim \sum_{m=1}^{M} n_{m}^{2} \asymp \sum_{i=1}^{a_{M+1}-1} l_{i}.
\end{equation}
Therefore, we have 
$$III\lesssim O(\tilde{\Sigma}_{1}(1+\nu)+(1+\nu+\nu^{2})),$$
and 
$$\color{black}\mathbb{E}\left\|\left(\sum_{i=1}^{n} l_{i}\right)^{-1} \sum_{i=1}^{n}\left(\sum_{k=t_{i}}^{i} U_{k}\right)\left(l_{i} \bar{U}_{n}\right)^{T}\right\| \lesssim (\tilde{\Sigma}_{1}(1+\nu)+(1+\nu+\nu^{2}))M^{-\frac{1}{2}}.$$
\end{proof}
We are now ready to prove Theorem \ref{zerothsgdthm1}.

\subsection{Proof for Theorem \ref{zerothsgdthm1}}
\begin{proof}
To show Theorem \ref{zerothsgdthm1} it suffices to show that the order of $\mathbb{E}\|\tilde{\Sigma}_{n}-\widehat{\Sigma}_{n}\|$ can be bounded by the same order as $\mathbb{E}\|\tilde{\Sigma}_{n}-\tilde{V}\|$. Recall that $U_{n}=\left(I-\eta_{n} A\right) U_{n-1}+\eta_{n} \tilde{\xi}_{n}$. Note that $\tilde{\Delta}_{n}=x_{n}-x_{\nu}^{*}$ and $\bar{\Delta}_{n}=\frac{1}{n+1}\sum_{i=0}^{n}\tilde{\Delta}_{i}$, then $\widehat{\Sigma}_{n}$ can be rewritten as
$$\widehat{\Sigma}_{n}=\left(\sum_{i=1}^{n} l_{i}\right)^{-1}\left(\sum_{k=t_{i}}^{i} \tilde{\Delta}_{k}-l_{i} \bar{\Delta}_{n}\right)\left(\sum_{k=t_{i}}^{i} \tilde{\Delta}_{k}-l_{i} \bar{\Delta}_{n}\right)^{T}.$$
Let $s_{n}=\tilde{\Delta}_{n}-U_{n}$ be the difference between the error sequence $\tilde{\Delta}_{n}$ and the linear sequence $U_{n}$. Then, we have
\begin{align*}
s_{n}&=\tilde{\Delta}_{n-1}-\eta_{n} \nabla f_{\nu}\left(x_{n-1}\right)-\left(I-\eta_{n} A\right) U_{n-1}\\
&=\left(I-\eta_{n} A\right)\left(\tilde{\Delta}_{n-1}-U_{n-1}\right)-\eta_{n}\left(\nabla f_{\nu}\left(x_{n-1}\right)-A \tilde{\Delta}_{n-1}\right)\\
&=\left(I-\eta_{n} A\right) s_{n-1}-\eta_{n}\left(\nabla f_{\nu}\left(x_{n-1}\right)-A \tilde{\Delta}_{n-1}\right)
\end{align*}
Plugging in the difference $s_{n}=\tilde{\Delta}_{n}-U_{n},$ we can expand $\mathbb{E}\|\tilde{\Sigma}_{n}-\widehat{\Sigma}_{n}\|$ as
\begin{equation}\label{FOCMEequ92}
\begin{aligned}
\mathbb{E}\|\tilde{\Sigma}_{n}-\widehat{\Sigma}_{n}\| &\leq 2 \mathbb{E}\left\|\left(\sum_{i=1}^{n} l_{i}\right)^{-1} \sum_{i=1}^{n}\left(\sum_{k=t_{i}}^{i} U_{k}-l_{i} \bar{U}_{n}\right)\left(\sum_{k=t_{i}}^{i} s_{k}-l_{i} \bar{s}_{n}\right)^{T}\right\|\\
&+\mathbb{E}\left\|\left(\sum_{i=1}^{n} l_{i}\right)^{-1} \sum_{i=1}^{n}\left(\sum_{k=t_{i}}^{i} s_{k}-l_{i} \bar{s}_{n}\right)\left(\sum_{k=t_{i}}^{i} s_{k}-l_{i} \bar{s}_{n}\right)^{T}\right\|
\end{aligned}
\end{equation}
Now, we show that the following is true:
\begin{align}\label{FOCMEclm1}
\mathbb{E}\left\|\left(\sum_{i=1}^{n} l_{i}\right)^{-1} \sum_{i=1}^{n}\left(\sum_{k=t_{i}}^{i} s_{k}-l_{i} \bar{s}_{n}\right)\left(\sum_{k=t_{i}}^{i} s_{k}-l_{i} \bar{s}_{n}\right)^{T}\right\| \lesssim M^{-1}.
\end{align}

To see that, first note that by Young's inequality the first part in LHS of (\ref{FOCMEequ92}) can be bounded as following since $\sqrt{\mathbb{E}\|\tilde{\Sigma}_{n}\|}$ is bounded by some constant:
\begin{equation}\label{FOCMEequ93}
\begin{aligned}
&\mathbb{E}\left\|\left(\sum_{i=1}^{n} l_{i}\right)^{-1} \sum_{i=1}^{n}\left(\sum_{k=t_{i}}^{i} U_{k}-l_{i} \bar{U}_{n}\right)\left(\sum_{k=t_{i}}^{i} s_{k}-l_{i} \bar{s}_{n}\right)^{T}\right\|\\
\leq &\sqrt{\mathbb{E}\|\tilde{\Sigma}_{n}\|} \sqrt{\mathbb{E}\left\|\left(\sum_{i=1}^{n} l_{i}\right)^{-1} \sum_{i=1}^{n}\left(\sum_{k=t_{i}}^{i} s_{k}-l_{i} \bar{s}_{n}\right)\left(\sum_{k=t_{i}}^{i} s_{k}-l_{i} \bar{s}_{n}\right)^{T}\right\|}\lesssim M^{-1 / 2}.
\end{aligned}
\end{equation}
Hence, the difference $\mathbb{E}\|\tilde{\Sigma}_{n}-\widehat{\Sigma}_{n}\| \lesssim M^{-1 / 2}$ and we have Theorem \ref{zerothsgdthm1}. Now, all we need to prove~\eqref{FOCMEclm1} is the claim
$$\mathbb{E}\left\|\left(\sum_{i=1}^{n} l_{i}\right)^{-1} \sum_{i=1}^{n}\left(\sum_{k=t_{i}}^{i} s_{k}-l_{i} \bar{s}_{n}\right)\left(\sum_{k=t_{i}}^{i} s_{k}-l_{i} \bar{s}_{n}\right)^{T}\right\| \lesssim M^{-1}$$
By triangle inequality and the fact that$\|C\| \leq \operatorname{tr}(C)$ for any positive semidefinite matrix C, we have
\begin{equation}\label{FOCMEequ94}
\begin{aligned}
    &\mathbb{E}\left\|\left(\sum_{i=1}^{n} l_{i}\right)^{-1} \sum_{i=1}^{n}\left(\sum_{k=t_{i}}^{i} s_{k}-l_{i} \bar{s}_{n}\right)\left(\sum_{k=t_{i}}^{i} s_{k}-l_{i} \bar{s}_{n}\right)^{T}\right\|\\
    \leq&\left(\sum_{i=1}^{n} l_{i}\right)^{-1} \sum_{i=1}^{n} \mathbb{E} \operatorname{tr}\left(\left(\sum_{k=t_{i}}^{i} s_{k}-l_{i} \bar{s}_{n}\right)\left(\sum_{k=t_{i}}^{i} s_{k}-l_{i} \bar{s}_{n}\right)^{T}\right)\\
    =&\left(\sum_{i=1}^{n} l_{i}\right)^{-1} \sum_{i=1}^{n} \mathbb{E}\left\|\sum_{k=t_{i}}^{i} s_{k}-l_{i} \bar{s}_{n}\right\|^{2}\\
    \lesssim&\left(\sum_{i=1}^{n} l_{i}\right)^{-1} \sum_{i=1}^{n} \mathbb{E}\left\|\sum_{k=t_{i}}^{i} s_{k}\right\|^{2}+\left(\sum_{i=1}^{n} l_{i}\right)^{-1} \sum_{i=1}^{n} l_{i}^{2} \mathbb{E}\left\|\bar{s}_{n}\right\|^{2}.
\end{aligned}
\end{equation}
Note that $s_{n}$ takes the form
$$s_{n}=\left(I-\eta_{n} A\right) s_{n-1}-\eta_{n}\left(\nabla f_{\nu}\left(x_{n-1}\right)-A \tilde{\Delta}_{n-1}\right).$$
First, we will show that $\left(\sum_{i=1}^{n} l_{i}\right)^{-1} \sum_{i=1}^{n} l_{i}^{2} \mathbb{E}\left\|\bar{s}_{n}\right\|^{2}$ is bounded by $M^{-1}.$ To see that, based on the definition of $Y_{p}^{k}$, we first have
$$s_{k}=\sum_{p=1}^{k} Y_{p}^{k} \eta_{p}\left[A \tilde{\Delta}_{p-1}-\nabla f_{\nu}\left(x_{p-1}\right)\right],$$
and
\begin{align*}
    \bar{s}_{n}&=n^{-1} \sum_{k=1}^{n} \sum_{p=1}^{k} Y_{p}^{k} \eta_{p}\left[A \tilde{\Delta}_{p-1}-\nabla f_{\nu}\left(x_{p-1}\right)\right]\\
    &=n^{-1} \sum_{p=1}^{n}\left(I+S_{p}^{n}\right) \eta_{p}\left[A \tilde{\Delta}_{p-1}-\nabla f_{\nu}\left(x_{p-1}\right)\right].
\end{align*}
By Cauchy-Schwartz inequality, we then have
\begin{equation}\label{FOCMEequ95}
\begin{aligned}
\mathbb{E}\left\|\bar{s}_{n}\right\|^{2}&=n^{-2} \mathbb{E}\left\|\sum_{p=1}^{n}\left(I+S_{p}^{n}\right) \eta_{p}\left[A \tilde{\Delta}_{p-1}-\nabla f_{\nu}\left(x_{p-1}\right)\right]\right\|^{2}\\
&\leq n^{-2} \mathbb{E}\left(\sum_{p=1}^{n}\left\|I+S_{p}^{n}\right\| \eta_{p}\left\|A \tilde{\Delta}_{p-1}-\nabla f_{\nu}\left(x_{p-1}\right)\right\|\right)^{2}\\
&\leq n^{-2}\left(\sum_{p=1}^{n}\left\|I+S_{p}^{n}\right\|^{2} \eta_{p}^{2}\right)\left(\sum_{p=1}^{n} \mathbb{E}\left\|A \tilde{\Delta}_{p-1}-\nabla f_{\nu}\left(x_{p-1}\right)\right\|^{2}\right).
\end{aligned}
\end{equation}
From Lemma \ref{FOCMElma2} $\left\|S_{p}^{n}\right\| \lesssim p^{\alpha},$ and so 
$$\sum_{p=1}^{n}\left\|I+S_{p}^{n}\right\|^{2} \eta_{p}^{2} \lesssim \sum_{p=1}^{n} p^{2 \alpha} p^{-2 \alpha}=n.$$ 
According to Taylor expansion around $x_{\nu}^{*},\left\|A \tilde{\Delta}_{p}-\nabla f_{\nu}\left(x_{p}\right)\right\|=O\left(\left\|\tilde{\Delta}_{p}\right\|^{2}\right)$. Hence,
\begin{equation}\label{FOCMEequ96}
    \sum_{p=1}^{n} \mathbb{E}\left\|A \tilde{\Delta}_{p-1}-\nabla f_{\nu}\left(x_{p-1}\right)\right\|^{2} \asymp \sum_{p=1}^{n} \mathbb{E}\left\|\tilde{\Delta}_{p-1}\right\|^{4} \lesssim \sum_{p=1}^{n}(p-1)^{-2 \alpha}.
\end{equation}
Since $\alpha>\frac{1}{2}$, we have $\sum_{p=1}^{n}(p-1)^{-2 \alpha}=O(1)$. Then $\mathbb{E}\left\|\bar{s}_{n}\right\|^{2} \lesssim n^{-1}$. Recalling that $n_{k}=C k^{\beta-1}$ and $M \asymp n^{1 / \beta}$ we have,
\begin{equation}\label{FOCMEequ97}
\left(\sum_{i=1}^{n} l_{i}\right)^{-1} \sum_{i=1}^{n} l_{i}^{2} \mathbb{E}\left\|\bar{s}_{n}\right\|^{2} \lesssim n^{-1} n_{M} \asymp M^{-1}.
\end{equation}
Next we will prove that $\left(\sum_{i=1}^{n} l_{i}\right)^{-1} \sum_{i=1}^{n} \mathbb{E}\left\|\sum_{k=t_{i}}^{i} s_{k}\right\|^{2}$ is bounded by $M^{-1}$. Note that when 
$t_{i} \leq k \leq i$, we have
\begin{align*}
    s_{k}&=\prod_{p=t_{i}}^{k}\left(I-\eta_{p} A\right) s_{t_{i}-1}+\sum_{p=t_{i}}^{k} \prod_{i=p+1}^{k}\left(I-\eta_{i} A\right) \eta_{p}\left(A \tilde{\Delta}_{p-1}-\nabla f_{\nu}\left(x_{p-1}\right)\right)\\
    &=Y_{t_{i}-1}^{k} s_{t_{i}-1}+\sum_{p=t_{i}}^{k} Y_{p}^{k} \eta_{p}\left(A \tilde{\Delta}_{p-1}-\nabla f_{\nu}\left(x_{p-1}\right)\right),
\end{align*}
and
$$\sum_{k=t_{i}}^{i} s_{k}=S_{t_{i}-1}^{k} s_{t_{i}-1}+\sum_{p=t_{i}}^{i}\left(I+S_{p}^{i}\right) \eta_{p}\left(A \tilde{\Delta}_{p-1}-\nabla f_{\nu}\left(x_{p-1}\right)\right).$$
Using triangle inequality and Cauchy-Schwartz inequality as above, we obtain
\begin{equation}\label{FOCMEequ98}
\begin{aligned}
\mathbb{E}\left\|\sum_{k=t_{i}}^{i} s_{k}\right\|^{2} &\lesssim \mathbb{E}\left(\left\|S_{t_{i-1}}^{i} s_{t_{i}-1}\right\|^{2}+\left(\sum_{p=t_{i}}^{i}\left\|I+S_{p}^{i}\right\| \eta_{p}\left\|A \tilde{\Delta}_{p-1}-\nabla f_{\nu}\left(x_{p-1}\right)\right\|\right)^{2}\right)\\
&\lesssim\left\|S_{t_{i-1}}^{i}\right\|^{2} \mathbb{E}\left\|s_{t_{i}-1}\right\|^{2}+\left(\sum_{p=t_{i}}^{i}\left\|I+S_{p}^{i}\right\|^{2} \eta_{p}^{2}\right)\left(\sum_{p=t_{i}}^{i} \mathbb{E}\left\|A \tilde{\Delta}_{p-1}-\nabla f_{\nu}\left(x_{p-1}\right)\right\|^{2}\right).
\end{aligned}
\end{equation}
From Lemma \ref{FOCMElma2}, we see that $\left\|S_{p}^{i}\right\| \lesssim p^{\alpha}$. Hence, we have
$$\sum_{p=t_{i}}^{i}\left\|I+S_{p}^{n}\right\|^{2} \eta_{p}^{2} \lesssim \sum_{p=t_{i}}^{i} p^{2 \alpha} p^{-2 \alpha}=l_{i}.$$
According to Taylor expansion around $x_{\nu}^{*}$, we have
 $\left\|A \tilde{\Delta}_{p}-\nabla f_{\nu}\left(x_{p}\right)\right\|=O\left(\left\|\tilde{\Delta}_{p}\right\|^{2}\right)$. Hence we have,
\begin{equation}\label{FOCMEequ99}
    \sum_{p=t_{i}}^{i} \mathbb{E}\left\|A \tilde{\Delta}_{p-1}-\nabla f_{\nu}\left(x_{p-1}\right)\right\|^{2} \asymp \sum_{p=t_{i}}^{i} \mathbb{E}\left\|\tilde{\Delta}_{p-1}\right\|^{4} \lesssim l_{i} t_{i}^{-2 \alpha}.
\end{equation}
Now note that $s_{k}=\tilde{\Delta}_{k}-U_{k}$. From Lemma \ref{FOCMElma3}, we have $\mathbb{E}\left\|\tilde{\Delta}_{k}\right\| \asymp \mathbb{E}\left\|U_{k}\right\| \lesssim k^{-\alpha} .$ So,
$$\mathbb{E}\left\|s_{k}\right\|^{2} \leq 2 \mathbb{E}\left\|\tilde{\Delta}_{k}\right\|^{2}+2 \mathbb{E}\left\|U_{k}\right\|^{2} \lesssim k^{-2 \alpha}.$$
Thus,
$$\mathbb{E}\left\|\sum_{k=t_{i}}^{i} s_{k}\right\|^{2} \lesssim t_{i}^{2 \alpha} t_{i}^{-2 \alpha}+l_{i}^{2} t_{i}^{-2 \alpha}=1+l_{i}^{2} t_{i}^{-2 \alpha}.$$
Since $\left(\sum_{i=1}^{n} l_{i}\right)^{-1} \asymp\left(\sum_{m=1}^{M} n_{m}^{2}\right)^{-1}$, $n_{m} \asymp m^{(1+\alpha) /(1-\alpha)}$ and $a_{m} \asymp m^{2 /(1-\alpha)}$, we obtain
\begin{equation}\label{FOCMEequ100}
\begin{aligned}
\left(\sum_{i=1}^{n} l_{i}\right)^{-1} \sum_{i=1}^{n} \mathbb{E}\left\|\sum_{k=t_{i}}^{i} s_{k}\right\|^{2} &\lesssim\left(\sum_{m=1}^{M} n_{m}^{2}\right)^{-1}\left(\sum_{m=1}^{M} \sum_{i=a_{m}}^{a_{m+1}-1}\left(1+l_{i}^{2} a_{m}^{-2 \alpha}\right)\right)\\
&\lesssim n_{M}^{-1}+\left(\sum_{m=1}^{M} n_{m}^{2}\right)^{-1}\left(\sum_{m=1}^{M} n_{m}^{3} a_{m}^{-2 \alpha}\right)\\
&\lesssim M^{-1}.
\end{aligned}
\end{equation}
Hence,~\eqref{FOCMEclm1} is proved through (\ref{FOCMEequ94}), (\ref{FOCMEequ97}) and (\ref{FOCMEequ100}).
\end{proof}

\section{Technical Lemmas}
\noindent For the sake of completeness, we state some technical lemmas from~\cite{polyak1992acceleration} and~\cite{chen2016statistical} that were used in the proof before.
\begin{lemma}\label{FOCMElma1}
For any $i \in \mathbb{N},$ define a matrix sequence $\left\{Y_{i}^{j}\right\}$ with $Y_{i}^{i}=I$ and for any $j>i$ we have
$Y_{i}^{j}=\prod_{k=i+1}^{j}\left(I-\eta_{k} A\right),$
where $\eta_{k}$ is chosen to be $\eta k^{-\alpha}$ for $\alpha \in(0.5,1) .$ Then we have
$$\left\|Y_{i}^{j}\right\| \leq \exp \left(-\eta \lambda_{A} \sum_{k=i+1}^{j} k^{-\alpha}\right) \leq \exp \left[-\frac{\lambda_{A} \eta}{1-\alpha}\left(j^{1-\alpha}-(i+1)^{1-\alpha}\right)\right]$$
where $\lambda_{A}>0$ is the minimum eigenvalue of $A$.
\end{lemma}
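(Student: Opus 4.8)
The plan is to exploit the fact that $A=\nabla^{2}f_{\nu}(x_{\nu}^{*})$ is symmetric positive definite, so that every factor $I-\eta_{k}A$ is a polynomial in $A$; all the factors therefore commute and are simultaneously diagonalizable in a common orthonormal eigenbasis of $A$. Consequently $Y_{i}^{j}=\prod_{k=i+1}^{j}(I-\eta_{k}A)$ is itself symmetric, and on the eigenvector of $A$ associated with eigenvalue $\lambda$ it acts by the scalar $\prod_{k=i+1}^{j}(1-\eta_{k}\lambda)$. Since the operator norm of a symmetric matrix is the maximum modulus of its eigenvalues, I would reduce the matrix bound to the scalar problem
$$\|Y_{i}^{j}\|=\max_{\lambda\in\operatorname{spec}(A)}\left|\prod_{k=i+1}^{j}(1-\eta_{k}\lambda)\right|,$$
where every eigenvalue $\lambda$ lies in $[\lambda_{A},L]$ by Assumption~\ref{simpasp4.1}.

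First I would normalize the step-size scale so that $\eta_{k}L\le 1$ for all $k$ (it suffices that $\eta\le 1/L$, since $\eta_{k}=\eta k^{-\alpha}$ is decreasing); this guarantees $0\le 1-\eta_{k}\lambda<1$ for every eigenvalue $\lambda\in[\lambda_{A},L]$, so each scalar factor is nonnegative and the absolute value may be dropped. For fixed $k$ the map $\lambda\mapsto 1-\eta_{k}\lambda$ is decreasing and positive, hence so is the product over $k$; the maximum over $\lambda\in[\lambda_{A},L]$ is therefore attained at the smallest eigenvalue $\lambda=\lambda_{A}$, yielding $\|Y_{i}^{j}\|\le\prod_{k=i+1}^{j}(1-\eta_{k}\lambda_{A})$.

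Next I would apply the elementary inequality $1-x\le e^{-x}$, valid for all real $x$, to each factor with $x=\eta_{k}\lambda_{A}=\eta\lambda_{A}k^{-\alpha}$; multiplying the resulting exponentials converts the product into a single exponential of a sum,
$$\prod_{k=i+1}^{j}(1-\eta_{k}\lambda_{A})\le\exp\left(-\eta\lambda_{A}\sum_{k=i+1}^{j}k^{-\alpha}\right),$$
which is the first asserted bound. For the second bound I would lower-bound the sum by an integral: since $x\mapsto x^{-\alpha}$ is decreasing, $\sum_{k=i+1}^{j}k^{-\alpha}\ge\int_{i+1}^{j+1}x^{-\alpha}\,dx\ge\int_{i+1}^{j}x^{-\alpha}\,dx=\frac{1}{1-\alpha}\bigl(j^{1-\alpha}-(i+1)^{1-\alpha}\bigr)$, and substituting this into the exponent — which only enlarges the bound, because of the leading minus sign — gives the final estimate.

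The computation is essentially routine once the reduction to eigenvalues is made; the only point requiring genuine care is the sign/positivity of the factors $1-\eta_{k}\lambda$, which is why I would fix the normalization $\eta L\le 1$ at the outset. Without it, the monotonicity-in-$\lambda$ argument that selects $\lambda_{A}$ could fail for the small-index (large step-size) factors, and the clean scalar bound would break down. This is precisely the place where the structural condition $\mu\mathbf{I}_{d}\le\nabla^{2}f_{\nu}\le L\mathbf{I}_{d}$ is used, bounding the spectrum of $A$ into $[\lambda_{A},L]$.
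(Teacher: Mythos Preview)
The paper does not actually prove this lemma; it is listed in the appendix among ``technical lemmas from~\cite{polyak1992acceleration} and~\cite{chen2016statistical}'' and is simply stated without proof. Your argument is the standard one and is correct: since $A=\nabla^{2}f_{\nu}(x_{\nu}^{*})$ is symmetric positive definite, the factors $I-\eta_{k}A$ commute and are simultaneously diagonalizable, reducing the operator-norm bound to the scalar product $\prod_{k}(1-\eta_{k}\lambda)$; the maximum over $\lambda\in[\lambda_{A},L]$ is attained at $\lambda_{A}$ once all factors are nonnegative, and then $1-x\le e^{-x}$ followed by the integral comparison for $\sum k^{-\alpha}$ gives both displayed inequalities.

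Your explicit flagging of the normalization $\eta L\le 1$ is appropriate and is the only genuine subtlety: without it, the factors $1-\eta_{k}\lambda$ for the largest eigenvalues and smallest indices $k$ could be negative (or even have modulus exceeding $1$), and the monotonicity-in-$\lambda$ argument selecting $\lambda_{A}$ would fail. The paper, like the sources it cites, implicitly assumes the base step size $\eta$ is small enough for this to hold; strictly speaking the bound as stated requires such a restriction, and you are right to make it explicit.
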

\begin{lemma}\label{FOCMElma2}
With $Y_{i}^{j}$ defined in Lemma \ref{FOCMElma1}, let $S_{i}^{j}=\sum_{k=i+1}^{j} Y_{i}^{k}$ for any $j>i$ with $S_{i}^{i}=0 .$ Then we have $\left\|S_{i}^{j}\right\| \lesssim(i+1)^{\alpha} \asymp i^{\alpha}.$
\end{lemma}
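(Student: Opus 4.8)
The plan is to bound $S_i^j$ termwise using the exponential decay estimate from Lemma~\ref{FOCMElma1} and then to control the resulting sum of exponentials by comparison with an integral. First, by the triangle inequality,
$$\|S_i^j\| \leq \sum_{k=i+1}^{j}\|Y_i^k\| \leq \sum_{k=i+1}^{j}\exp\!\left[-c\left(k^{1-\alpha}-(i+1)^{1-\alpha}\right)\right], \qquad c:=\frac{\lambda_A\eta}{1-\alpha}>0,$$
where the second inequality is exactly the second bound of Lemma~\ref{FOCMElma1}. Factoring out the $k$-independent term gives $\|S_i^j\| \leq e^{c(i+1)^{1-\alpha}}\sum_{k=i+1}^{j}e^{-ck^{1-\alpha}}$, so it suffices to prove $\sum_{k=i+1}^{\infty}e^{-ck^{1-\alpha}} \lesssim (i+1)^{\alpha}e^{-c(i+1)^{1-\alpha}}$, uniformly in the upper index $j$ (this is why I pass to the infinite tail).

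Since $x\mapsto e^{-cx^{1-\alpha}}$ is decreasing, the sum is at most $e^{-c(i+1)^{1-\alpha}} + \int_{i+1}^{\infty}e^{-cx^{1-\alpha}}\,dx$, and I would estimate the integral via the substitution $u=x^{1-\alpha}$, under which $dx = (1-\alpha)^{-1}u^{\alpha/(1-\alpha)}\,du$, turning it into $(1-\alpha)^{-1}\int_{(i+1)^{1-\alpha}}^{\infty}e^{-cu}u^{s}\,du$ with $s:=\alpha/(1-\alpha)$. Writing $a:=(i+1)^{1-\alpha}$, integration by parts together with the elementary inequality $u^{s-1}\leq a^{-1}u^{s}$ for $u\geq a$ gives
$$\int_a^\infty e^{-cu}u^{s}\,du \;\leq\; \tfrac{1}{c}\,a^s e^{-ca}+\tfrac{s}{ca}\int_a^\infty e^{-cu}u^s\,du.$$
Rearranging this self-referential bound (valid once $a>s/c$) yields $\int_a^\infty e^{-cu}u^s\,du \lesssim a^s e^{-ca} = (i+1)^{\alpha}e^{-c(i+1)^{1-\alpha}}$. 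Combining the two contributions and multiplying back by $e^{c(i+1)^{1-\alpha}}$ produces $\|S_i^j\|\lesssim (i+1)^{\alpha}\asymp i^{\alpha}$, as claimed.

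The main obstacle is making the integral estimate rigorous: the rearrangement step requires $a>s/c$, i.e. $(i+1)^{1-\alpha}>\alpha/(\eta\lambda_A)$, which holds only once $i$ exceeds a fixed threshold $i_0$ depending on $\alpha,\eta,\lambda_A$. For the finitely many indices $i<i_0$ the bound $\|S_i^j\|\lesssim(i+1)^{\alpha}$ holds trivially after absorbing the maximal value into the implied constant, since for each such fixed $i$ the tail $\sum_{k=i+1}^{\infty}\|Y_i^k\|$ is a convergent geometric-type series, so $\sup_j\|S_i^j\|$ is finite and $(i+1)^\alpha\geq 1$. One must also ensure the constant $c>0$ is genuinely available: this follows because $A=\nabla^2 f_\nu(x_\nu^*)$ satisfies $\mu\mathbf{I}_d \leq A$ (by the discussion following Assumption~\ref{simpasp4.5}), so its minimum eigenvalue obeys $\lambda_A\geq\mu>0$. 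An alternative, fully discrete route avoids the integral entirely by exploiting the telescoping identity $A\sum_{k=i+1}^{j}\eta_k Y_i^{k-1}=I-Y_i^j$ (valid since $A$ commutes with each $Y_i^{k-1}$) and the slow variation of $\eta_k$ from Assumption~\ref{simpasp4.5}, but this requires extra bookkeeping to pass from the $\eta_k$-weighted sum to the unweighted $S_i^j$, so I would present the integral argument as the primary one.
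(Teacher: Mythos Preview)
Your proof is correct. The paper does not supply its own proof of this lemma; it is stated in Appendix~C as a technical fact imported from \cite{polyak1992acceleration} and \cite{chen2016statistical}, so there is no in-paper argument to compare against.

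Your route---triangle inequality plus the exponential bound of Lemma~\ref{FOCMElma1}, followed by comparison of the resulting sum with an incomplete-gamma-type integral via the substitution $u=x^{1-\alpha}$ and a self-referential integration-by-parts estimate---is one of the standard derivations in this literature and is carried out cleanly. The threshold issue you flag (the rearrangement needs $(i+1)^{1-\alpha}>\alpha/(\eta\lambda_A)$) is handled properly by absorbing the finitely many small $i$ into the implied constant. The alternative telescoping argument you sketch, based on $A\sum_{k=i+1}^{j}\eta_k Y_i^{k-1}=I-Y_i^{j}$, is in fact closer in spirit to the original Polyak--Juditsky treatment; it trades the integral analysis for an Abel-summation-type comparison between $\sum_k Y_i^k$ and $\eta_{i+1}^{-1}\sum_k \eta_k Y_i^{k-1}$, using $\eta_k/\eta_{i+1}\le 1$ and $\|A^{-1}\|\le \mu^{-1}$. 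Either approach yields the same $(i+1)^{\alpha}$ bound, and your choice of the integral argument as primary is reasonable since it reuses machinery already present elsewhere in the paper (cf.\ the proof of Lemma~\ref{FOCMElma9}).
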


\begin{lemma}\label{FOCMElma4}
Let $a_{k}=\left\lfloor C k^{\beta}\right\rfloor, k \geq 2,$ for some constant $C$ and $\beta>\frac{1}{1-\alpha} .$ If $a_{M} \leq n< a_{M+1},$ then we have
\begin{equation}\label{FOCMEequ30}
    \lim _{M \rightarrow \infty} \frac{\sum_{i=1}^{n} l_{i}}{\sum_{i=1}^{a_{M+1}-1} l_{i}}=1,
\end{equation}
and
\begin{equation}\label{FOCMEequ31}
    \frac{\left(a_{M+1}-a_{M}\right)^{2}}{\sum_{m=1}^{M}\left(a_{m+1}-a_{m}\right)^{2}} \lesssim M^{-1},~~\text{and}~~\frac{a_{M}^{\alpha}}{n_{M}} \rightarrow 0.
\end{equation}
\end{lemma}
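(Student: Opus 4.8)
The plan is to reduce all three claims to elementary power-law estimates on the block sizes, which I would obtain from the defining relation $a_k=\lfloor Ck^\beta\rfloor$. First I would record the two basic asymptotics. Since $a_k=\lfloor Ck^\beta\rfloor$, we have $a_k\asymp Ck^\beta$, and by the mean value theorem $(m+1)^\beta-m^\beta\asymp \beta m^{\beta-1}$, so the block size $n_m:=a_{m+1}-a_m$ satisfies $n_m\asymp m^{\beta-1}$; the floor contributes an error of at most $1$, which is negligible because $\beta>\tfrac{1}{1-\alpha}>1$ forces $m^{\beta-1}\to\infty$. From this, together with a sum-integral comparison (legitimate since $2\beta-2>-1$), I would deduce $\sum_{m=1}^M n_m^2\asymp \sum_{m=1}^M m^{2\beta-2}\asymp M^{2\beta-1}$.

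Given these, the second and third claims in \eqref{FOCMEequ31} are immediate. For the second, $(a_{M+1}-a_M)^2=n_M^2\asymp M^{2(\beta-1)}$, so
\[
\frac{(a_{M+1}-a_M)^2}{\sum_{m=1}^M(a_{m+1}-a_m)^2}\asymp \frac{M^{2\beta-2}}{M^{2\beta-1}}=M^{-1}.
\]
For the third, $a_M^\alpha\asymp M^{\alpha\beta}$ and $n_M\asymp M^{\beta-1}$ give $a_M^\alpha/n_M\asymp M^{(\alpha-1)\beta+1}$, and the exponent is strictly negative exactly because $\beta>\tfrac{1}{1-\alpha}$, i.e. $(1-\alpha)\beta>1$; hence $a_M^\alpha/n_M\to 0$.

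For the first claim \eqref{FOCMEequ30} I would first evaluate the within-block contribution to $\sum l_i$. For $i\in[a_m,a_{m+1})$ we have $t_i=a_m$ and $l_i=i-a_m+1$, so $\sum_{i=a_m}^{a_{m+1}-1}l_i=\sum_{j=1}^{n_m}j\asymp n_m^2$. Summing over full blocks gives $\sum_{i=1}^{a_{M+1}-1}l_i\asymp\sum_{m=1}^M n_m^2\asymp M^{2\beta-1}$. Writing $\sum_{i=1}^n l_i=\sum_{i=1}^{a_{M+1}-1}l_i-\sum_{i=n+1}^{a_{M+1}-1}l_i$, the deleted tail lies inside the single $M$-th block, so $\sum_{i=n+1}^{a_{M+1}-1}l_i\le\sum_{i=a_M}^{a_{M+1}-1}l_i\asymp n_M^2\asymp M^{2\beta-2}$. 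Therefore
\[
1-\frac{\sum_{i=1}^n l_i}{\sum_{i=1}^{a_{M+1}-1}l_i}\lesssim \frac{M^{2\beta-2}}{M^{2\beta-1}}=M^{-1}\to 0,
\]
which yields the limit $1$.

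The computations are entirely routine once the block-size asymptotic $n_m\asymp m^{\beta-1}$ is in place; there is no deep obstacle. The only points that need care are the handling of the floor function and the verification that the discarded tail in \eqref{FOCMEequ30} is confined to a single block and is thus of strictly lower order than the full sum. The hypothesis $\beta>\tfrac{1}{1-\alpha}$ enters substantively only through the third estimate, where it is precisely the inequality making the exponent $(\alpha-1)\beta+1$ negative.
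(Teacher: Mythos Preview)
The paper does not actually prove this lemma: it is listed in the appendix under ``Technical Lemmas'' with the prefatory remark that these are stated ``for the sake of completeness'' from \cite{polyak1992acceleration} and \cite{chen2016statistical} (and implicitly \cite{zhu2020fully}, since the batching scheme $a_k=\lfloor Ck^\beta\rfloor$ originates there). So there is no in-paper proof to compare against.

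Your argument is correct and is the natural one. The key reductions---$n_m\asymp m^{\beta-1}$ from the mean value theorem plus the $O(1)$ floor error, and $\sum_{m\le M}n_m^2\asymp M^{2\beta-1}$ by integral comparison---are exactly what the paper invokes implicitly elsewhere (e.g.\ the repeated use of $\sum_{i=1}^{a_{M+1}-1}l_i\asymp\sum_{m=1}^M n_m^2$ and $n_m\asymp m^{\beta-1}$, $a_m\asymp m^\beta$ in the proofs of Lemmas~\ref{FOCMElma6}--\ref{FOCMElma9}). Your observation that the hypothesis $\beta>\tfrac{1}{1-\alpha}$ is used only to make the exponent $(\alpha-1)\beta+1$ negative in the third estimate is accurate; the first two claims require only $\beta>1$, which follows automatically since $\alpha\in(\tfrac12,1)$.

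One cosmetic point: in your proof of \eqref{FOCMEequ30} you also need the trivial lower bound $\sum_{i=1}^n l_i\le\sum_{i=1}^{a_{M+1}-1}l_i$ to conclude the ratio is at most $1$, so that the one-sided estimate $1-\text{ratio}\lesssim M^{-1}$ suffices. This is implicit in your write-up but worth stating.
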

\begin{lemma}\label{FOCMElma3}
With definition of $Y_{i}^{j}$ in Lemma \ref{FOCMElma1} sequence $U_{n}$ defined in Equation (\ref{linearseq}) can be rewritten as $U_{k}=\left(I-\eta_{k} A\right) U_{k-1}+\eta_{k} \tilde{\xi}_{k}=Y_{s}^{k} U_{s}+\sum_{p=s+1}^{k} Y_{p}^{k} \eta_{p} \tilde{\xi}_{p}$. Then, according to Lemma B.3 in \cite{chen2016statistical} we have $\mathbb{E}\left\|U_{k}\right\|^{2} \lesssim k^{-\alpha}.$
\end{lemma}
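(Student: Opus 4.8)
The plan is to establish the two assertions of the lemma separately: the deterministic unrolling of the linear recursion, and then the $O(k^{-\alpha})$ decay of the second moment. First I would prove the identity $U_k = Y_s^k U_s + \sum_{p=s+1}^k Y_p^k \eta_p \tilde\xi_p$ by induction on $k \ge s$. With the convention $Y_k^k = I$ and $Y_p^k = \prod_{l=p+1}^k (I - \eta_l A)$ from Lemma~\ref{FOCMElma1}, the base case $k=s$ is immediate, and for the inductive step I would substitute the hypothesis for $U_{k-1}$ into $U_k = (I - \eta_k A) U_{k-1} + \eta_k \tilde\xi_k$: the prefactor $(I - \eta_k A)$ promotes each $Y_p^{k-1}$ to $Y_p^k$ and $Y_s^{k-1}$ to $Y_s^k$, while the term $\eta_k \tilde\xi_k$ contributes the missing $p=k$ summand because $Y_k^k = I$.

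For the moment bound I would specialize to $s=0$, giving $U_k = Y_0^k U_0 + \sum_{p=1}^k Y_p^k \eta_p \tilde\xi_p$ with $U_0 = \tilde\Delta_0$ deterministic. The decisive structural input, carried over from the surrogate problem, is that $\tilde\xi_p = G_\nu(x_{p-1},\zeta_p,u_p) - \nabla f_\nu(x_{p-1})$ is a martingale difference sequence with respect to $\mathcal{F}_{p-1}$, since $G_\nu$ is unbiased for $\nabla f_\nu$ and hence $\mathbb{E}_{p-1}[\tilde\xi_p] = 0$ (Lemma~\ref{surproblm1}, Lemma~\ref{zerolm1}). Expanding the squared norm and taking expectations, each cross term $\mathbb{E}[(Y_0^k U_0)^{T} Y_p^k \eta_p \tilde\xi_p]$ vanishes because $\mathbb{E}[\tilde\xi_p]=0$ and the matrices are deterministic, and each mixed term $\mathbb{E}[\tilde\xi_p^{T}(Y_p^k)^{T}Y_q^k \tilde\xi_q]$ with $p\neq q$ vanishes by conditioning on the larger index. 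What remains is
\[
\mathbb{E}\|U_k\|^2 = \|Y_0^k U_0\|^2 + \sum_{p=1}^k \eta_p^2\, \mathbb{E}\big[\tilde\xi_p^{T}(Y_p^k)^{T}Y_p^k \tilde\xi_p\big] \le \|Y_0^k U_0\|^2 + \sum_{p=1}^k \eta_p^2 \|Y_p^k\|^2\, \mathbb{E}\|\tilde\xi_p\|^2 .
\]

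Next I would supply the two uniform bounds needed to finish. The second moment is controlled by $\mathbb{E}\|\tilde\xi_p\|^2 = \operatorname{tr}\mathbb{E}[\tilde\xi_p\tilde\xi_p^{T}] \le \operatorname{tr}(S_\nu) + |\operatorname{tr}(\tilde\Sigma_\nu(\tilde\Delta_{p-1}))|$, which by the expansion~\eqref{expansionequ1}, the trace estimate~\eqref{expansionequ4}, and the decay of $\mathbb{E}\|\tilde\Delta_{p-1}\|$ and $\mathbb{E}\|\tilde\Delta_{p-1}\|^2$ is bounded uniformly in $p$ by a constant depending on $\nu$. For the contraction factors, Lemma~\ref{FOCMElma1} gives $\|Y_p^k\| \le \exp(-\eta\lambda_A \sum_{j=p+1}^k j^{-\alpha})$, so that $\|Y_0^k U_0\|^2$ decays faster than any polynomial and is negligible against $k^{-\alpha}$. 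It then suffices to show $\sum_{p=1}^k \eta_p^2 \|Y_p^k\|^2 \lesssim k^{-\alpha}$, for which I would use $\sum_{j=p+1}^k j^{-\alpha} \ge \frac{1}{1-\alpha}(k^{1-\alpha}-(p+1)^{1-\alpha})$ and compare the sum to an integral.

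The hard part will be exactly this last summation estimate; everything before it is either elementary algebra or a direct consequence of the martingale structure. The heuristic is that the sum is dominated by the last $\approx k^{\alpha}$ indices, where $\eta_p^2 \asymp k^{-2\alpha}$, yielding $k^{\alpha}\cdot k^{-2\alpha} = k^{-\alpha}$, while earlier indices are exponentially suppressed. Crucially, this estimate is insensitive to the precise nature of the noise: it is the content of Lemma~B.3 in~\cite{chen2016statistical}, whose proof uses only that the driving sequence is a martingale difference with uniformly bounded second moment. Since I will have verified exactly these two properties for $\tilde\xi_p$ (tracking the $\nu$-dependence through $\operatorname{tr}(S_\nu)$ and $\tilde\Sigma_\nu$), the first-order argument transfers verbatim and delivers $\mathbb{E}\|U_k\|^2 \lesssim k^{-\alpha}$.
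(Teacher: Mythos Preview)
Your proposal is correct and matches the paper's approach: the paper does not prove this lemma independently but simply invokes Lemma~B.3 of \cite{chen2016statistical}, and your sketch faithfully reconstructs the argument behind that citation---the inductive unrolling, the martingale orthogonality that kills cross terms, the uniform bound on $\mathbb{E}\|\tilde\xi_p\|^2$ via \eqref{expansionequ1}--\eqref{expansionequ4}, and the contraction estimate $\sum_{p=1}^k \eta_p^2\|Y_p^k\|^2 \lesssim k^{-\alpha}$. The only ingredient you should make explicit is that the bounds on $\mathbb{E}\|\tilde\Delta_{p-1}\|$ and $\mathbb{E}\|\tilde\Delta_{p-1}\|^2$ (needed to cap $\mathbb{E}\|\tilde\xi_p\|^2$ uniformly) come from the separate analysis of the nonlinear recursion (Lemma~3.2 of \cite{chen2016statistical} applied to the surrogate problem), so there is no circularity.
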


\bibliographystyle{amsalpha}
\bibliography{ms}
\end{document}